\documentclass{article}

\usepackage[eandd, final]{neurips_2026} 

\usepackage[utf8]{inputenc} % allow utf-8 input
\usepackage[T1]{fontenc}    % use 8-bit T1 fonts
\usepackage{hyperref}       % hyperlinks
\usepackage{url}            % simple URL typesetting
\usepackage{booktabs}       % professional-quality tables
\usepackage{amsfonts}       % blackboard math symbols
\usepackage{nicefrac}       % compact symbols for 1/2, etc.
\usepackage{microtype}      % microtypography
\usepackage{xcolor}         % colors

\usepackage{algorithm}
\usepackage{algpseudocode} % from algorithmicx

\usepackage{wrapfig}
\usepackage{caption}
\usepackage{enumitem} 
\usepackage{subcaption}

\usepackage{amsmath}
\usepackage{amssymb}
\usepackage{mathtools}
\usepackage{amsthm}

\usepackage{booktabs}
\usepackage{multirow}
\usepackage{xcolor}
\usepackage{graphicx}
\usepackage[most]{tcolorbox}

\usepackage{titletoc}

\newtcolorbox{takeawaybox}{
  colback=gray!8,
  colframe=black!70,
  boxrule=0.6pt,
  arc=2pt,
  left=6pt,right=6pt,top=5pt,bottom=5pt
}
\theoremstyle{plain}
\newtheorem{theorem}{Theorem}[section]
\newtheorem*{theorem*}{Theorem}
\newtheorem{proposition}[theorem]{Proposition}
\newtheorem*{proposition*}{Proposition}
\newtheorem{lemma}[theorem]{Lemma}

\theoremstyle{definition}
\newtheorem{definition}[theorem]{Definition}

\theoremstyle{remark}
\newtheorem{remark}[theorem]{Remark}

\title{Auditing Conformal Prediction for Language Models Under Inference-Time Model Shift}

\author{%
  \begin{tabular}{c@{\hspace{3em}}c}
    \textbf{Achref Doula}\textsuperscript{1,2,}\thanks{%
        Corresponding author: \texttt{achref.doula@gmail.com} \\ \emph{Author preprint} \\ \emph{Submitted 01.05.2026, accepted 24.09.2026,}} &
    \textbf{Max Mühlhäuser}\textsuperscript{1,2}
  \end{tabular} \\[1ex]
  \textsuperscript{1}Tongji University, Shanghai, China \\
  \textsuperscript{2}Technical University of Darmstadt, Darmstadt, Germany
}

\begin{document}

\maketitle

\begin{abstract}
Conformal prediction (CP) for language models is typically evaluated under a \emph{matched} protocol, in which calibration and deployment use the same post-training inference configuration, such as quantization mode and inference temperature. In deployment, these choices often differ from those used at calibration to meet memory or throughput constraints or serving policies. We argue that this exposes a limitation in current CP evaluation protocols for language models. We formalize the underlying phenomenon as \emph{inference-time model shift}: a change in the conformal score distribution induced by inference configuration rather than by the input--label distribution, and theoretically characterize how it breaks split-conformal exchangeability even when the task distribution and trained model are unchanged. We then evaluate its consequences across $8$ language models, $4$ finite-label benchmarks, and $12$ inference configurations. We find that matched evaluation can substantially overstate deployment-time coverage, and that the effect is heterogeneous across models and datasets. We argue that inference configuration should be reported as a first-class experimental axis in CP evaluations for language-model systems.
\end{abstract}

\section{Introduction}

Conformal prediction (CP) is a distribution-free framework for uncertainty quantification: given a trained predictor and a held-out calibration set, it returns a prediction set guaranteed to contain the true label with a user-specified probability $1-\alpha$ in finite samples~\cite{vovk2005algorithmic,angelopoulos2021gentle}. The guarantee is obtained by calibrating a threshold on nonconformity scores, which are real-valued scores of disagreement between the predictor's output and a candidate label, and rests on an important assumption: that calibration and test scores are exchangeable. A substantial literature studies how this assumption fails when the data-generating distribution shifts, including covariate shift, label shift, online drift, and other forms of non-exchangeability~\cite{tibshirani2019covariate,gibbs2021adaptive,oliveira2024split}. In each case, the source of failure is a change in the joint distribution $P_{XY}$ between calibration and deployment.

For language models, calibration and test scores depend not only on $P_{XY}$ but also on the \emph{inference configuration} used to generate the predictive distribution from which scores are computed: post-training choices that shape the predictive distribution without altering the trained model itself, including numerical representation (e.g., quantization mode) and inference parameters (e.g., temperature). In deployed systems, the inference configuration is not always held fixed for reasons unrelated to the data or the model: quantization is changed to satisfy memory or throughput constraints, and inference temperature varies with system defaults, serving policies, or user-facing controls. Existing CP evaluations for language models, however, typically adopt a \emph{matched} protocol, in which calibration and test points are generated under the same inference configuration; reported coverage then estimates the reliability of one fixed score-generating procedure, a valid target only when the deployment configuration is fixed and known in advance~\cite{kumar2023conformal, quach2024clm,mohri2024factuality, cherian2024llmvalidity, su2024api, ulmer2024nonexchangeable, gui2024alignment, zhong2025quantized}.

When the calibration and deployment configurations differ, even with the trained model and the task distribution unchanged, calibration and test scores are generated by different mechanisms, and the exchangeability assumption underlying split-conformal validity need not hold. We refer to this phenomenon as \emph{inference-time model shift}: a change in the conformal score distribution induced by inference configuration rather than by the input--label distribution. It is distinct from classical dataset shift because $P_{XY}$ is fixed; the shift lives in the mapping from the input to the resulting predictive distribution, and hence in the induced score distribution.

\begin{wrapfigure}{r}{0.55\textwidth}
    \centering
    \vspace{-0.6em}
    \includegraphics[width=\linewidth]{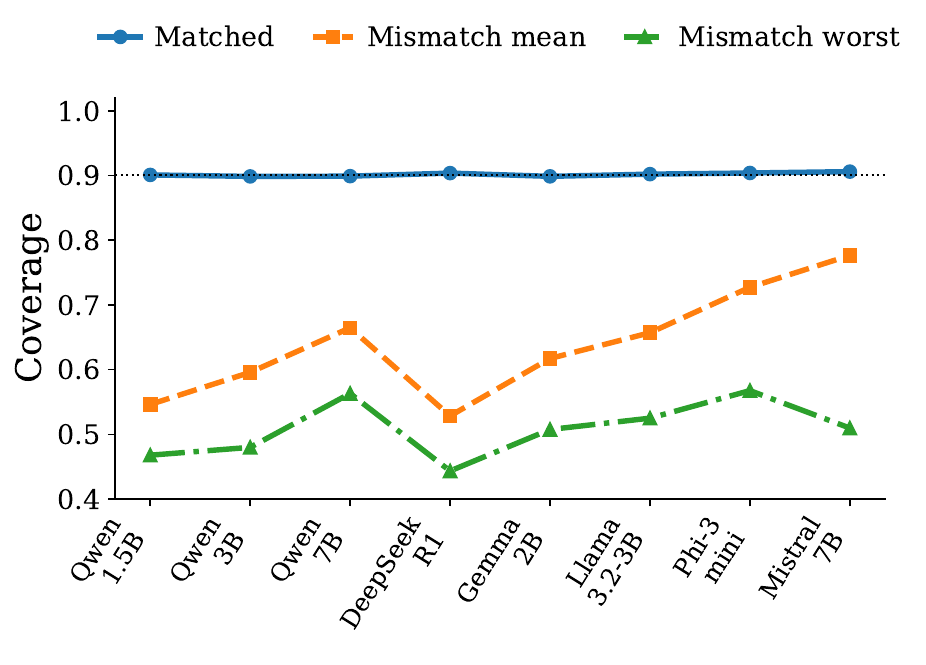}
    \caption{
    \textsc{MMLU-Pro} with deployment fixed to \(\texttt{nf4}@T{=}0.5\) and nominal target $0.9$. Matched calibration tracks target across eight model families; mismatched calibration, differing only in inference configuration between calibration and deployment, produces substantial coverage degradation.
    }
    \label{fig:motivating_case}
    \vspace{-0.8em}
\end{wrapfigure}

The empirical consequences of \emph{inference-time model shift} are nontrivial. Figure~\ref{fig:motivating_case} reports coverage on \textsc{MMLU-Pro} when deployment is fixed to a single configuration ($\texttt{nf4}$ quantization, temperature $T{=}0.5$) and calibration uses, in turn, each of the twelve configurations in a $3{\times}4$ grid of quantization modes and temperatures. Matched calibration tracks the nominal $0.90$ target across eight small language models, with mean coverage $0.902$. Mismatched calibration produces mean coverage $0.639$ and worst-case coverage $0.508$. The task distribution and the trained model are unchanged across all twelve calibrations; the degradation is attributable solely to the change in score-generating procedure between calibration and deployment. Matched evaluation, by construction, cannot surface this degradation, which exposes a limitation of matched-only CP evaluation protocols for language models.

We characterize this setting theoretically and audit it empirically. We formalize inference-time model shift as a property of the conformal score distribution, and we show that matched validity does not transport across inference configurations once score-level exchangeability is broken. The empirical audit covers $8$ language models on $4$ finite-label benchmarks (multiple-choice QA and classification) across a $12$-configuration grid. We restrict the empirical scope to discrete-label tasks because they admit a controlled comparison: candidate labels, predictive distributions, and conformal scores are directly comparable across inference configurations, isolating inference-configuration mismatch from confounds introduced by open-ended generation.

Two findings emerge. First, matched evaluation can substantially overstate deployment-time coverage, with the failures concentrated in low-temperature regimes and amplified, but not solely caused, by aggressive quantization. Second, and more consequential for evaluation practice, the effect is \emph{heterogeneous}: the same type of calibration--deployment mismatch produces severe undercoverage in one setting and mild degradation, or even overcoverage, in another. This implies that inference configuration is an experimental axis on the same footing as the dataset and the model.

\noindent \textbf{Contributions.} We make the following contributions.
\begin{itemize}[leftmargin=*]
    \item We identify a limitation in current CP evaluation protocols for language models and formalize the underlying phenomenon as \emph{inference-time model shift}, a distinct source of non-exchangeability characterized by a change in the score-generating procedure under a fixed task distribution, under which matched validity is not transportable across inference configurations.
    \item We provide an empirical audit across $8$ language models, $4$ finite-label benchmarks, and $12$ inference configurations, finding that mismatch effects are large but neither monotone in configuration nor consistent across models and datasets, and propose an evaluation protocol that reports matched validity alongside mismatch sensitivity and deployment-assumption-specific baselines.
\end{itemize}
\section{Related Work}

\paragraph{Conformal prediction under shift and non-exchangeability.}
CP provides distribution-free uncertainty quantification under exchangeability, with modern developments covering regression, classification, adaptive coverage, and broader risk-controlled prediction sets~\cite{doula2024conformal, lei2018distribution,sadinle2019least,romano2020classification,angelopoulos2021uncertainty,angelopoulos2024risk}. A substantial literature studies what happens when the matched i.i.d.\ setting fails, including covariate shift, label shift, online distribution drift, and other forms of non-exchangeability~\cite{doula2026conformal,tibshirani2019covariate,gibbs2021adaptive,gibbs2024online,podkopaev2021label,prinster2022jaws,oliveira2024split,kasa2025adapting}. These works are closest in spirit to ours because they analyze when standard conformal guarantees degrade outside matched calibration and test conditions. However, their source of failure is typically a change in the data-generating distribution or dependence structure. In contrast, we keep the task distribution fixed and study a different source of non-exchangeability: deployment-time changes in inference configuration. The same checkpoint and the same inputs can therefore induce different conformal score distributions under different temperatures or quantization modes.

\paragraph{Conformal prediction and uncertainty for language models.}
Recent work has brought CP to language and foundation models, including multiple-choice
question answering~\cite{kumar2023conformal}, conformal language modeling for generated
candidate sets~\cite{quach2024clm}, conformal factuality and claim filtering~\cite{mohri2024factuality,cherian2024llmvalidity},
black-box or API-only settings~\cite{su2024api}, correctness-coverage guarantees for open-ended
generation~\cite{wang2024conu}, non-exchangeable conformal language generation~\cite{ulmer2024nonexchangeable},
selective uncertainty testing~\cite{wang2025sconu}, and trust- or alignment-oriented selection for
foundation-model outputs~\cite{gui2024alignment}. These works establish CP as a promising tool
for language-model reliability, but their evaluation targets differ from ours: they typically calibrate
and evaluate under a fixed inference setup, or study other sources of uncertainty and
non-exchangeability. Our focus is the evaluation protocol itself: whether a conformal predictor
calibrated under one post-training inference configuration remains reliable when deployment uses
another. Appendix~\ref{app:prior-audit} provides a more detailed summary of this distinction.

\paragraph{Calibration, temperature, and quantized language models.}
A parallel literature studies calibration and uncertainty estimation for neural networks and language
models. Temperature scaling remains a standard post-hoc calibration baseline~\cite{guo2017calibration},
and calibration is also central for pretrained transformers and question-answering language models~\cite{desai2020calibration,jiang2021know}.
Recent work further studies uncertainty estimation for free-form or long-form LLM generations~\cite{ulmer2024generations,huang2024longform}.
On the systems side, quantization has become a standard deployment mechanism for LLM serving,
with work on low-bit inference and quantization-aware training~\cite{lin2024awq,liu2024llmqat};
recent work also shows that quantization can materially affect calibration itself~\cite{zhong2025quantized}.
This literature motivates temperature and quantization as practically relevant inference axes, but it
does not formulate calibration-to-deployment inference mismatch as a distinct conformal-validity
problem. Our contribution is to connect these deployment choices to exchangeability of conformal
scores and to treat inference configuration as part of the CP evaluation design.

%%%%%%%%%%%%%%%%%%%%%%%%%%%%%
%%%%%%%%%%%%%%%%%%%%%%%%%%%%

\section{A Theoretical Characterization of Inference-Time Model Shift}
\label{sec:shift}

Standard matched CP evaluation estimates reliability for a fixed inference pipeline: calibration and deployment use the same score-generating procedure. In deployed language-model systems, however, the inference configuration may change after calibration. We formalize this mismatch and show why matched validity need not describe deployment-time reliability even when the task distribution is unchanged.
We consider classification and multiple-choice tasks with finite candidate label set $\mathcal{Y}(x)$. Let $(X,Y)\sim P_{XY}$ denote the task distribution, assumed unchanged between calibration and deployment. At inference time, predictions are generated under a configuration $C\in\mathcal{C}$, where $\mathcal{C}=\mathcal{Q}\times\mathcal{T}$ is the product of a finite set of quantization modes $\mathcal{Q}$ and temperatures $\mathcal{T}$. For each $c\in\mathcal{C}$, the model induces a predictive distribution $\pi^{(c)}(x)\in\Delta^{|\mathcal{Y}(x)|}$. Given a nonconformity score $A:\Delta^{|\mathcal{Y}(x)|}\times\mathcal{Y}(x)\to\mathbb{R}$, the score induced under configuration $c$ is $V^{(c)}:=A(\pi^{(c)}(X),Y)$, and we write $F_c$ for the distribution of $V^{(c)}$.

\begin{definition}[Inference-time model shift]
Inference-time model shift occurs if there exist $c,c'\in\mathcal{C}$ such that $F_c\neq F_{c'}$. Equivalently, the conformal score distribution changes with inference configuration even though the task distribution $P_{XY}$ is fixed.
\end{definition}

This setting is distinct from classical dataset shift. The input distribution is fixed. The shift arises from the mechanism that maps an input $x$ to a predictive distribution $\pi^{(c)}(x)$, and hence to a conformal score. At calibration time, we assume access to tuples $(X_i,Y_i,C_i,\pi_i)$ with $\pi_i=\pi^{(C_i)}(X_i)$, so that the score-generating configuration is observed. At test time, the conformal layer receives only a predictive vector $\pi=\pi^{(C)}(X)$, while the underlying configuration $C$ may be latent.

To make the evaluation target explicit, let $\widehat S_{c_{\mathrm{cal}}}$ denote a conformal predictor calibrated under configuration $c_{\mathrm{cal}}$, and define
\[
\mathrm{Cov}(c_{\mathrm{cal}}\!\to\! c_{\mathrm{dep}})
:=\mathbb{P}\!\left(Y\in \widehat S_{c_{\mathrm{cal}}}(X;\pi^{(c_{\mathrm{dep}})}(X))\right).
\]
Matched evaluation estimates $\mathrm{Cov}(c\!\to\! c)$ for a fixed configuration $c$. Many deployments, however, care about reliability when $c_{\mathrm{dep}}$ may differ from $c_{\mathrm{cal}}$. 

The consequence for split conformal prediction is immediate. If calibration is performed under $c_{\mathrm{cal}}$ and deployment under $c_{\mathrm{dep}}$, then calibration scores have distribution $F_{c_{\mathrm{cal}}}$ while test scores have distribution $F_{c_{\mathrm{dep}}}$. Whenever $F_{c_{\mathrm{cal}}}\neq F_{c_{\mathrm{dep}}}$, calibration and test scores are generally not exchangeable, so the usual split-conformal proof no longer applies.

\begin{proposition}[Matched validity is not transportable across inference configurations. Proof in Appendix~\ref{app:proof_prop_exchangeability}.]
\label{prop:exchangeability_failure}
Consider split conformal prediction with score $A$. If calibration scores are generated under $c_{\mathrm{cal}}$ and test scores under $c_{\mathrm{dep}}$, and $F_{c_{\mathrm{cal}}}\neq F_{c_{\mathrm{dep}}}$, then calibration and test scores are, in general, not exchangeable, and the usual split-conformal coverage guarantee need not hold.
\end{proposition}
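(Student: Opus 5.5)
The argument has two parts. The first shows that exchangeability fails: a score distribution differing between calibration and test is incompatible with exchangeability. The second gives an explicit example in which the split-conformal coverage guarantee is violated.

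\textbf{Part 1: exchangeability fails.} Let $V_1,\dots,V_n$ be the calibration scores, i.i.d.\ from $F_{c_{\mathrm{cal}}}$, and let $V_{n+1}$ be the test score, drawn from $F_{c_{\mathrm{dep}}}$. Suppose the vector $(V_1,\dots,V_{n+1})$ were exchangeable. Then every coordinate would have the same marginal law. In particular, the transposition swapping indices $1$ and $n+1$ would leave the joint law invariant, which forces $F_{c_{\mathrm{cal}}}=F_{c_{\mathrm{dep}}}$. This contradicts the hypothesis. The argument uses only equality of marginals, so it holds whatever the dependence between the calibration draws and the test draw. For example, it holds when the same input $X$ appears under both configurations.

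\textbf{Part 2: the guarantee fails.} Recall the split-conformal threshold
\[
\hat q=V_{(\lceil (n+1)(1-\alpha)\rceil)},
\]
and the set $\widehat S_{c_{\mathrm{cal}}}(x;\pi)=\{y: A(\pi,y)\le \hat q\}$. The coverage event is $\{V_{n+1}\le \hat q\}$. The usual proof bounds its probability by $1-\alpha$ using the uniform rank of $V_{n+1}$ among the $n+1$ scores. That rank argument requires exchangeability, so after Part 1 it no longer applies.

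To show coverage can actually drop, I would construct a concrete instance inside the paper's model:
\begin{itemize}[leftmargin=*]
\item two labels, with a fixed $P_{XY}$;
\item score $A(\pi,y)=1-\pi_y$;
\item two temperatures $T_{\mathrm{cal}}<T_{\mathrm{dep}}$ applied to the same logits.
\end{itemize}
Raising the temperature flattens $\pi$. This pushes the true-label score $1-\pi_Y$ upward whenever the model ranks $Y$ first, and those are exactly the points that determine $\hat q$. Hence $F_{c_{\mathrm{dep}}}$ stochastically dominates $F_{c_{\mathrm{cal}}}$ in the relevant region.

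Conditioning on $\hat q$ gives
\[
\mathrm{Cov}(c_{\mathrm{cal}}\to c_{\mathrm{dep}})=\mathbb{E}\bigl[F_{c_{\mathrm{dep}}}(\hat q)\bigr].
\]
As $n\to\infty$, $\hat q$ concentrates at the $(1-\alpha)$-quantile $q_{\mathrm{cal}}$ of $F_{c_{\mathrm{cal}}}$. So coverage converges to $F_{c_{\mathrm{dep}}}(q_{\mathrm{cal}})$, which is strictly less than $1-\alpha$ once $q_{\mathrm{cal}}$ is below the corresponding deployment quantile.

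A simpler alternative makes the calculation trivial:
\begin{itemize}[leftmargin=*]
\item the model is always correct, with logit gap $\Delta$;
\item $F_{c_{\mathrm{cal}}}$ is a point mass at $s_1=1-\sigma(\Delta/T_{\mathrm{cal}})$;
\item $F_{c_{\mathrm{dep}}}$ is a point mass at $s_2>s_1$.
\end{itemize}
Here $\hat q=s_1$, so coverage is $0$. If a continuous example is preferred, add a small amount of noise to $\Delta$.

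\textbf{Main obstacle.} The substantive step is Part 2. The proposition says the guarantee ``need not hold'', so an existential counterexample is the right target. The work lies in making the counterexample faithful to the setting: the same $P_{XY}$, the same checkpoint, and a change only in $c$. The temperature construction above achieves this.

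I would also note the converse direction. Part 1 shows failure of exchangeability whenever $F_{c_{\mathrm{cal}}}\ne F_{c_{\mathrm{dep}}}$. Undercoverage, however, requires the deployment scores to be stochastically larger, meaning $F_{c_{\mathrm{dep}}}(q_{\mathrm{cal}})<1-\alpha$. If the shift goes the other way, the result is overcoverage. This is why the statement is phrased ``in general''.
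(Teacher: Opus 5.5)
Your proof is essentially the paper's: the same equal-marginals argument rules out exchangeability, you make the same remark that the rank argument is what breaks, and your point-mass example (calibration scores at $s_1$, test score at $s_2>s_1$, coverage $0$) is the paper's counterexample $V_1=\dots=V_n=0$, $V_{n+1}=1$, which you additionally realize through a temperature change on fixed logits instead of positing it directly at the score level. Let that point-mass version carry the proof, since your first temperature argument tacitly assumes accuracy above $1-\alpha$: if more than an $\alpha$ fraction of points are misranked, the $(1-\alpha)$-quantile of $1-\pi_Y$ falls among them, heating lowers it, and the mismatch gives overcoverage instead (note also that the point-mass version needs $\lceil (n+1)(1-\alpha)\rceil\le n$, since otherwise $\hat q=+\infty$ and coverage is $1$, a caveat the paper's proof also glosses over).
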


The source of failure is therefore not a change in the task distribution, but a change in the score-generating mechanism itself. Matched evaluation remains appropriate for the special case of a fixed known deployment configuration; the problem is that it need not answer the broader deployment question when inference configuration can vary after calibration. We present a more complete theoretical analysis in Appendix~\ref{app:theory}.
\section{Reliability Targets Under Deployment Uncertainty}
\label{sec:approaches}

Once inference configuration is allowed to vary after calibration, ``is CP valid?'' is no longer a single question. The relevant reliability target depends on the deployment assumption. If deployment configuration is fixed and known in advance, matched calibration is the correct target. If deployment follows a stable mixture over configurations, mixture-marginal validity is the relevant target; we include pooled calibration as a baseline for that setting and defer its formal treatment to the appendix. The main text focuses on the stronger regime in which deployment configuration is unobserved at test time and reliability should remain meaningful for individual latent deployment states.

We study two conservative baselines for that regime. \textsc{MaxThr} provides worst-case protection across configurations by hedging against the largest configuration-specific threshold. \textsc{Latent-Robust} uses the observed predictive vector to restrict that hedge to a conformal set of plausible latent configurations. These baselines are not intended as universally superior replacements for matched CP; they correspond to stricter deployment targets and, in our default construction, \textsc{Latent-Robust} can be even more conservative because the total error budget is split across two stages.

\subsection{Max-threshold calibration}
\label{subsec:maxthr}

A simple robust baseline is to calibrate separately for each configuration and then use the largest resulting threshold at deployment. Let $\mathcal{D}_{\mathrm{cal}}=\{(X_i,Y_i,C_i,\pi_i)\}_{i=1}^n$, and for each $c\in\mathcal{C}$ let
\[
\mathcal{D}_{\mathrm{cal}}^{(c)}=\{(X_i,Y_i,\pi_i)\in\mathcal{D}_{\mathrm{cal}}:C_i=c\}.
\]
From $\mathcal{D}_{\mathrm{cal}}^{(c)}$, compute a configuration-specific split-conformal threshold $\hat q_c$ at level $1-\alpha$ using the scores $V_i^{(c)}:=A(\pi_i,Y_i)$. The \textsc{MaxThr} predictor then uses the global threshold $\hat q_{\max}:=\max_{c\in\mathcal{C}}\hat q_c$ and returns
\[
\hat S_{\max}(x)=\{y\in\mathcal{Y}(x):A(\pi,y)\le \hat q_{\max}\}.
\]

This construction is deliberately conservative: it ignores any structure in the observed predictive vector and protects against deployment uncertainty by hedging against the largest configuration-specific threshold.

\begin{proposition}[Max-threshold validity, Proof in Appendix~\ref{app:proof_prop_maxthr}.]
\label{prop:maxthr}
Assume that, for each $c\in\mathcal{C}$, the calibration data in $\mathcal{D}_{\mathrm{cal}}^{(c)}$ and a test point generated under $c$ are exchangeable with respect to $A$, so that the configuration-specific predictor based on $\hat q_c$ is valid at level $1-\alpha$. Then
\[
\mathbb{P}(Y\in\hat S_{\max}(X)\mid C=c)\ge 1-\alpha
\qquad\text{for every } c\in\mathcal{C}.
\]
Consequently, $\hat S_{\max}$ is also valid marginally for any deployment distribution on $C$.
\end{proposition}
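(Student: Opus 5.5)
The plan is a monotonicity (set-inclusion) argument. Everything follows from the fact that enlarging the threshold can only enlarge the prediction set.

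Fix a configuration $c\in\mathcal{C}$ and write the configuration-specific predictor as
\[
\hat S_c(x)=\{y\in\mathcal{Y}(x):A(\pi,y)\le \hat q_c\},
\]
where $\pi=\pi^{(c)}(x)$. By definition $\hat q_{\max}=\max_{c'\in\mathcal{C}}\hat q_{c'}\ge \hat q_c$ holds deterministically, i.e.\ on every realization of the calibration data. Hence $A(\pi,y)\le\hat q_c$ implies $A(\pi,y)\le\hat q_{\max}$, so $\hat S_c(X)\subseteq\hat S_{\max}(X)$ pointwise. The inclusion uses the same observed predictive vector in both sets; this is exactly what \textsc{MaxThr} does, since it never needs to know $C$.

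Taking probabilities conditional on $C=c$, over both the calibration draw and the test point, gives
\[
\mathbb{P}(Y\in\hat S_{\max}(X)\mid C=c)\ge\mathbb{P}(Y\in\hat S_c(X)\mid C=c)\ge 1-\alpha.
\]
The second inequality is the standard split-conformal guarantee. It holds by the exchangeability hypothesis between $\mathcal{D}_{\mathrm{cal}}^{(c)}$ and a test point generated under $c$: the rank of the test score $A(\pi^{(c)}(X),Y)$ among the $n_c+1$ scores is uniform (up to ties), and $\hat q_c$ is the $\lceil(n_c+1)(1-\alpha)\rceil$-th smallest calibration score.

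The one point needing care is that $\hat q_{\max}$ depends on the calibration data of \emph{other} configurations. Those data may be dependent on, and not exchangeable with, the test point under $c$. I avoid this entirely because the inclusion $\hat S_c\subseteq\hat S_{\max}$ is pathwise, so no joint exchangeability across configurations is needed. I would also note the degenerate case where some $\mathcal{D}^{(c)}_{\mathrm{cal}}$ is too small for the required quantile; there we adopt the usual convention $\hat q_c=+\infty$, which keeps everything valid.

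For the marginal statement, let $\mu$ be any deployment distribution on the finite set $\mathcal{C}$. By the law of total probability,
\[
\mathbb{P}(Y\in\hat S_{\max}(X))=\sum_{c}\mu(c)\,\mathbb{P}(Y\in\hat S_{\max}(X)\mid C=c)\ge\sum_c\mu(c)(1-\alpha)=1-\alpha.
\]
This uses the implicit assumption that, given $C=c$, the test point is generated exactly as in the conditional hypothesis, i.e.\ deployment's choice of $C$ is independent of $(X,Y)$ and of the calibration data.
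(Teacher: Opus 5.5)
Your proposal is correct and follows the paper's proof essentially verbatim: $\hat q_{\max}\ge\hat q_c$ gives the pathwise inclusion $\hat S_c(X)\subseteq\hat S_{\max}(X)$ (the paper's monotonicity lemma), which yields the configuration-conditional bound, and the marginal claim then follows from the law of total probability exactly as in the paper's lemma that conditional validity implies mixture-marginal validity. Your extra remarks are sound clarifications of points the paper leaves implicit: no cross-configuration exchangeability is needed because the inclusion holds pathwise, the $\hat q_c=+\infty$ convention covers small calibration sets, and the deployment choice of $C$ is implicitly assumed independent of the data.
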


\subsection{Latent-robust calibration}
\label{subsec:latent}

\textsc{MaxThr} protects against latent deployment variation with a single global hedge. Our second baseline retains configuration-specific calibration while using the observed predictive vector to infer which deployment configurations are plausible for the current test point.

We split the calibration data into two disjoint parts, $\mathcal{D}_{\mathrm{cal}}=\mathcal{D}_1\cup\mathcal{D}_2$. In Stage 1, we perform configuration-conditional calibration: for each $c\in\mathcal{C}$, let
\[
\mathcal{D}_1^{(c)}=\{(X_i,Y_i,\pi_i)\in\mathcal{D}_1:C_i=c\}
\]
and compute a threshold $\hat q_c$ at level $1-\alpha_{\mathrm{stage1}}$ from the scores $V_i^{(c)}:=A(\pi_i,Y_i)$. This yields configuration-specific prediction sets
\[
\hat S_c(x)=\{y\in\mathcal{Y}(x):A(\pi,y)\le \hat q_c\}.
\]

In Stage 2, the deployment configuration is treated as latent. We therefore construct from the observed predictive vector a label-free signature that is informative about configuration identity. Our primary signature is
\[
Z(x)=\big(|\hat S_c(x)|\big)_{c\in\mathcal{C}},
\]
the vector of hypothetical set sizes induced by the Stage-1 thresholds. Using $\mathcal{D}_2$, we build a conformal set predictor $\hat\Gamma(Z(x))\subseteq\mathcal{C}$ such that the true deployment configuration is contained in $\hat\Gamma(Z(x))$ with probability at least $1-\beta_{\mathrm{stage2}}$.

The final threshold is then chosen adaptively as
\[
\hat q_{\Gamma}(x)=\max_{c\in\hat\Gamma(Z(x))}\hat q_c,
\]
with fallback $\max_{c\in\mathcal{C}}\hat q_c$ if $\hat\Gamma(Z(x))=\varnothing$, and the final prediction set is
\[
\hat S_{\Gamma}(x)=\{y\in\mathcal{Y}(x):A(\pi,y)\le \hat q_{\Gamma}(x)\}.
\]

\begin{theorem}[Latent-robust validity. Proof in Appendix~\ref{app:proof_thm_latent}.]
\label{thm:latent}
Assume (i) for each $c\in\mathcal{C}$, the Stage-1 calibration data in $\mathcal{D}_1^{(c)}$ and a test point generated under $c$ are exchangeable with respect to $A$, so that $\hat S_c$ is valid at level $1-\alpha_{\mathrm{stage1}}$; and (ii) the Stage-2 calibration tuples in $\mathcal{D}_2$ and the test tuple are exchangeable with respect to $(C,Z)$, so that $\mathbb{P}(C\in\hat\Gamma(Z(X)))\ge 1-\beta_{\mathrm{stage2}}$. Then
\[
\mathbb{P}(Y\in\hat S_{\Gamma}(X))\ge 1-\alpha_{\mathrm{stage1}}-\beta_{\mathrm{stage2}}.
\]
\end{theorem}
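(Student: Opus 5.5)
The proof will be a union bound over two failure events, each controlled by one of the assumptions. Let $(X,Y,C)$ be the test triple with observed vector $\pi=\pi^{(C)}(X)$. Define the Stage-1 failure event $E_1:=\{Y\notin \hat S_C(X)\}$, which uses the Stage-1 set of the \emph{true} latent configuration. Define the Stage-2 failure event $E_2:=\{C\notin\hat\Gamma(Z(X))\}$. The key deterministic claim is
\[
\{Y\notin \hat S_\Gamma(X)\}\subseteq E_1\cup E_2 .
\]
Once this holds, $\mathbb{P}(Y\notin\hat S_\Gamma(X))\le \mathbb{P}(E_1)+\mathbb{P}(E_2)$, and the theorem follows from bounding each term separately.

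\textbf{The set inclusion.} We argue by contrapositive: suppose neither $E_1$ nor $E_2$ occurs. Then $C\in\hat\Gamma(Z(X))$, so in particular $\hat\Gamma(Z(X))\neq\varnothing$. Hence $\hat q_\Gamma(X)=\max_{c\in\hat\Gamma}\hat q_c\ge \hat q_C$. Since $Y\in\hat S_C(X)$, we have $A(\pi,Y)\le \hat q_C\le \hat q_\Gamma(X)$, which means $Y\in\hat S_\Gamma(X)$.

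The fallback branch needs no separate treatment. If $\hat\Gamma$ is empty, then $E_2$ has occurred, and the event is already counted. In any case the fallback threshold $\max_c\hat q_c$ is at least $\hat q_C$, so it only enlarges the set. The argument is purely monotone in the threshold and uses nothing about the structure of $A$.

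\textbf{Bounding the two terms.}
\begin{itemize}[leftmargin=*]
\item \emph{Stage 1.} By assumption (i), conditional on $C=c$ the Stage-1 calibration scores in $\mathcal{D}_1^{(c)}$ and the test score are exchangeable. The standard split-conformal argument, as already used in Proposition~\ref{prop:maxthr}, then gives $\mathbb{P}(E_1\mid C=c)\le\alpha_{\mathrm{stage1}}$ for every $c$. Averaging over the deployment law of $C$ yields $\mathbb{P}(E_1)\le\alpha_{\mathrm{stage1}}$.
\item \emph{Stage 2.} Assumption (ii) gives $\mathbb{P}(E_2)\le\beta_{\mathrm{stage2}}$ directly.
\end{itemize}
Adding the two bounds gives the claim.

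\textbf{Main obstacle.} The main subtlety is the interaction between the two stages. The signature $Z(X)$ is built from the Stage-1 thresholds $\hat q_c$, which are random functions of $\mathcal{D}_1$. Assumption (ii) therefore has to be read conditionally on $\mathcal{D}_1$: given $\mathcal{D}_1$, the map $x\mapsto Z(x)$ is fixed, so the tuples $(C_i,Z(X_i))$ for $i\in\mathcal{D}_2$, together with the test tuple, are exchangeable. This is exactly why the split into disjoint parts $\mathcal{D}_1$ and $\mathcal{D}_2$ is needed. I would state the Stage-2 bound conditionally on $\mathcal{D}_1$ and then take expectations.

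The union bound itself requires no independence between $E_1$ and $E_2$, so no coupling issue arises there. Only the marginal probability of each event needs to be controlled.
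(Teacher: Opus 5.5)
Your proof is correct and follows essentially the same route as the paper. The paper also shows that on $\{C\in\hat\Gamma(Z(X))\}$ the adaptive threshold dominates $\hat q_C$, splits the miscoverage event into that part and its complement, bounds the Stage-1 term by averaging the configuration-conditional validity over the law of $C$, and bounds the Stage-2 term directly by assumption (ii); your remarks on the empty-$\hat\Gamma$ fallback and on reading (ii) conditionally on $\mathcal{D}_1$ are sound clarifications that the paper leaves implicit.
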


The key distinction from \textsc{MaxThr} is that the hedge is no longer global. However, this adaptivity should not be confused with reduced conservativeness. Under our default two-stage construction, part of the total error budget is allocated to latent-state identification and part to label coverage, so \textsc{Latent-Robust} can produce larger prediction sets than \textsc{MaxThr}. We therefore treat it as a stricter reference baseline for latent deployment uncertainty rather than as a uniformly better method.
\section{Evaluation}

We study inference-time model shift as an \emph{evaluation problem} through three questions. First, can matched evaluation materially misstate deployment-time coverage under calibration--deployment mismatch? Second, how much of this effect is attributable to temperature and to quantization, respectively? Third, how do calibration procedures aligned with different deployment assumptions compare in coverage under these regimes?

\subsection{Experimental Setup}

\noindent \textbf{Tasks and models.} We evaluate on four discrete-label benchmarks: \textsc{CommonsenseQA}~\cite{talmor2019commonsenseqa}, \textsc{DBPedia-14}~\cite{zhang2015charcnn}, \textsc{MMLU-Pro}~\cite{wang2024mmlupro}, and \textsc{MobileActions}~\cite{google2025mobileactions}. These datasets span multiple-choice and classification settings and differ substantially in difficulty, label structure, and output uncertainty. We study eight small language models: \textsc{Qwen2.5-1.5B}, \textsc{Qwen2.5-3B}, \textsc{Qwen2.5-7B}~\cite{qwen2024qwen25}, \textsc{DeepSeek-R1-Distill-Qwen-7B}~\cite{deepseek2025r1}, \textsc{Gemma-2-2B}~\cite{gu2024gemma2}, \textsc{Llama-3.2-3B}~\cite{grattafiori2024llama3}, \textsc{Phi-3-Mini}~\cite{abdin2024phi3}, and \textsc{Mistral-7B}~\cite{jiang2023mistral7b}. 

\noindent \textbf{Inference configurations.} An inference configuration is a pair \(c=(q,t)\in\mathcal{C}=\mathcal{Q}\times\mathcal{T}\), where $\mathcal{Q}$ is the set of quantization modes and $\mathcal{T}$ is the set of token-logit
temperature values used before candidate-label scoring. In the main benchmark, we evaluate the grid $\mathcal{Q}=\{\texttt{fp16},\texttt{int8},\texttt{nf4}\},
\; \mathcal{T}=\{0.5,1.0,1.5,2.0\}$, yielding \(|\mathcal{C}|=12\) inference configurations. Appendix~\ref{app:quant-temp} gives the implementation details for the quantization and temperature procedures.

\noindent \textbf{CP Setup.} In the main experiments, we use the \textsc{LAC} nonconformity score with \(\alpha=0.1\). Results for additional nonconformity scores, RAPS~\cite{angelopoulos2021uncertainty} and APS~\cite{romano2020classification}, are reported in Appendix~\ref{app:score_robustness}.

\noindent \textbf{Data splits.} For each dataset--model--configuration combination, we use disjoint calibration and evaluation sets; exact split sizes are reported in the appendix. All results are averaged over 3 independent runs with different random splits. For \textsc{Latent-Robust}, the calibration set is further split into Stage~1 and Stage~2 portions of 50\% and 50\%. In the main setting, we use \(\alpha_{\mathrm{stage1}}=\alpha/2\) and \(\beta_{\mathrm{stage2}}=\alpha/2\), so that \(\alpha_{\mathrm{stage1}}+\beta_{\mathrm{stage2}}=\alpha\). Appendix~\ref{app:runs} gives the implementation details of the Stage-2 latent-configuration predictor.

\noindent \textbf{Methods compared.} We compare six procedures:
\begin{itemize}[leftmargin=*]
    \item \textsc{Matched}: standard split conformal prediction calibrated under the true deployment configuration;
    \item \textsc{Mismatch Mean}: standard split conformal prediction, averaged over all off-diagonal calibration configurations for a fixed deployment configuration;
    \item \textsc{Mismatch Worst}: the minimum coverage over all off-diagonal calibration configurations for a fixed deployment configuration;
    \item \textsc{Pooled}: calibration on the pooled configuration mixture;
    \item \textsc{MaxThr}: the maximum of configuration-specific thresholds;
    \item \textsc{Latent-Robust}: the two-stage conservative construction from Section~\ref{sec:approaches}.
\end{itemize}

\paragraph{Metrics.}
Our primary metric in the main paper is empirical coverage, which measures reliability relative
to the nominal target $1-\alpha$. We defer average prediction-set size and other efficiency analyses
to Appendices~\ref{app:exp_details} and~\ref{app:main_results}. For a fixed deployment configuration \(c_{\mathrm{dep}}\), let \(\mathrm{Cov}(c_{\mathrm{cal}}\!\rightarrow\!c_{\mathrm{dep}})\) denote empirical coverage when calibration uses \(c_{\mathrm{cal}}\) and deployment uses \(c_{\mathrm{dep}}\). We summarize mismatch by
\[
\mathrm{MismatchMean}(c_{\mathrm{dep}})
=
\frac{1}{|\mathcal{C}|-1}
\sum_{c_{\mathrm{cal}}\neq c_{\mathrm{dep}}}
\mathrm{Cov}(c_{\mathrm{cal}}\!\rightarrow\!c_{\mathrm{dep}})
\]
and
\[
\mathrm{MismatchWorst}(c_{\mathrm{dep}})
=
\min_{c_{\mathrm{cal}}\neq c_{\mathrm{dep}}}
\mathrm{Cov}(c_{\mathrm{cal}}\!\rightarrow\!c_{\mathrm{dep}}).
\]
Full benchmark configuration, including dataset split sizes, model versions, score-function hyperparameters, and latent-split settings, is given in Appendix~\ref{app:exp_details}. Full results including set size and standard deviations are reported in Appendix~\ref{app:main_results}. Results with different nonconformity scores (APS, LAC, RAPS) are reported in Appendix~\ref{app:score_robustness}, and results with different $\alpha$ values are reported in Appendix~\ref{app:alpha_robustness}.

\subsection{Experiment 1: Reliability Under Calibration--Deployment Mismatch}
\label{subsec:exp1}

We fix deployment to quantization \(\texttt{nf4}\) and temperature \(T=0.5\). Figure~\ref{fig:exp1_nf4_t05_bars} reports coverage separately for each dataset--model pair, averaged over three runs. Across all four datasets, \textsc{Matched} tracks the nominal $0.90$ target tightly, while \textsc{Mismatch Worst} falls below target for almost every model. The robust baselines, \textsc{MaxThr} and \textsc{Latent-Robust}, restore target coverage in nearly all cells. As expected, this coverage recovery comes with increased prediction-set size; detailed efficiency results are reported in Appendix~\ref{app:main_results}.

\textsc{MMLU-Pro} exhibits the strongest degradation. All eight models fall below target under \textsc{Mismatch Mean}, \textsc{Mismatch Worst}, and \textsc{Pooled}; descriptive averages over models drop from $0.902$ under \textsc{Matched} to $0.639$ under \textsc{Mismatch Mean} and $0.508$ under \textsc{Mismatch Worst}, with \textsc{Pooled} at $0.686$. \textsc{CommonsenseQA} follows the same qualitative pattern more mildly: all eight models fall below target under the three mismatch summaries, but the corresponding average drops are smaller (matched $0.912$; mismatch mean $0.827$; mismatch worst $0.789$; pooled $0.818$).

\textsc{DBPedia-14} differs in kind rather than degree. The descriptive averages are comparatively close to target ($0.901 / 0.863 / 0.827 / 0.867$), but \textsc{Mismatch Mean} spans a wide model-level range ($0.795$ to $0.946$): the same summary statistic conceals substantial degradation for some model families and near-matched behavior for others. \textsc{MobileActions} is the most stable dataset, with average mismatch effects within a few points of target, but it is not invariant: \textsc{Mismatch Worst} still falls below target for 6 of 8 models, indicating that even comparatively stable settings retain adverse calibration--deployment pairs. Full per-model tables are in Appendix~\ref{app:exp1}.

Two observations cut across the four datasets. First, the gap between \textsc{Mismatch Mean} and \textsc{Mismatch Worst} is consistently large, so any summary that averages over off-diagonal calibration choices understates the worst-case reliability cost. Second, the heterogeneity is structural: \textsc{MMLU-Pro} and \textsc{CommonsenseQA} fail uniformly across models while \textsc{DBPedia-14} fails selectively, and a single reporting protocol that fixes one calibration choice cannot capture both regimes.

\begin{figure*}[t]
    \centering
    \includegraphics[width=\linewidth]{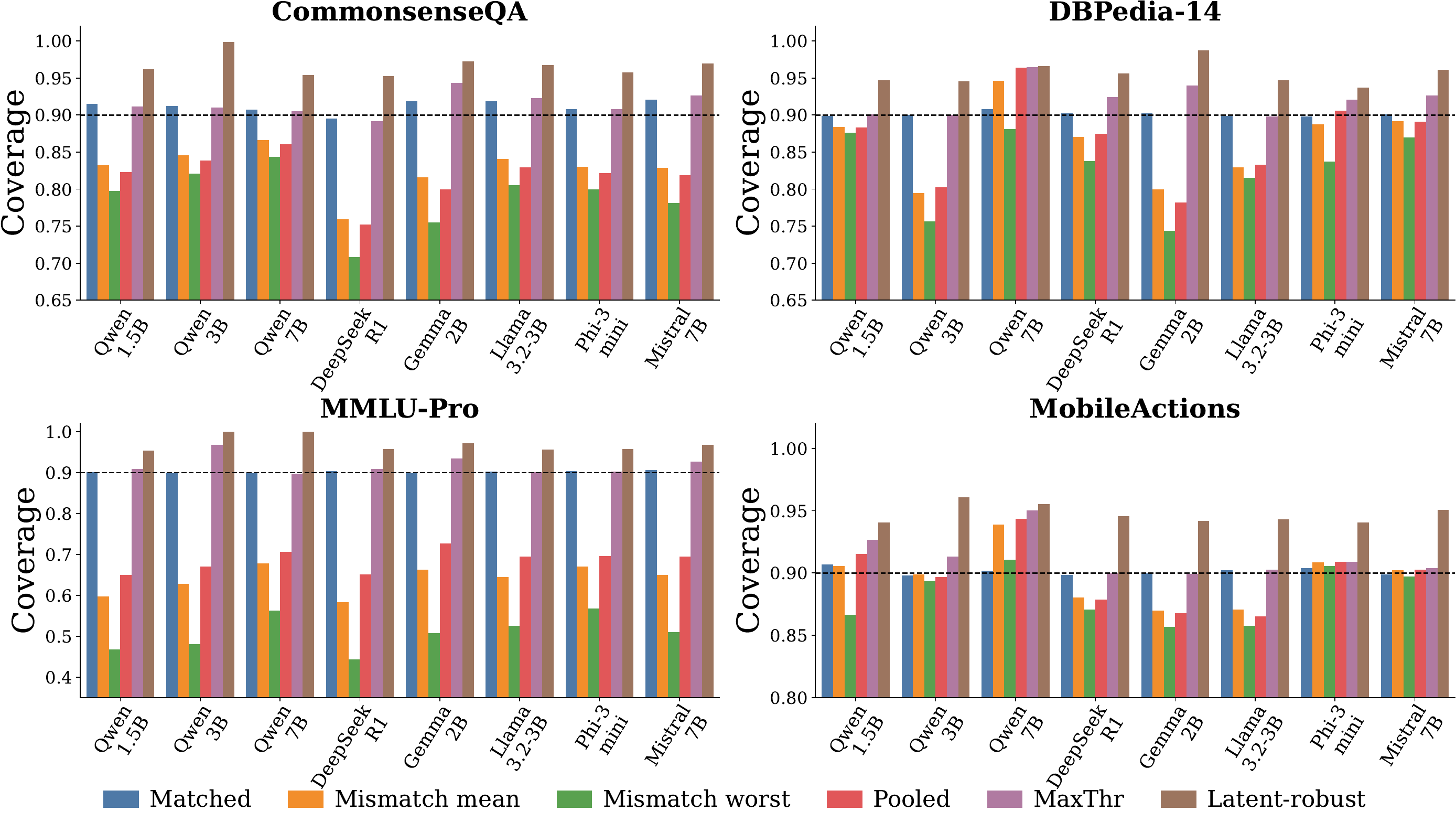}
    \caption{
    \textbf{Coverage with deployment fixed to \(\texttt{nf4}@T=0.5\).}
    Coverage is shown for 6 CP approaches across eight models and four datasets. 
    }
    \label{fig:exp1_nf4_t05_bars}
\end{figure*}

\subsection{Experiment 2: Quantization Effects at Fixed Deployment Temperature}
\label{subsec:exp2}

We next isolate the role of quantization by fixing deployment temperature to \(T=0.5\) and averaging over deployment quantization modes. Figure~\ref{fig:exp2_t05_avg_quant} reports coverage separately for each dataset--model pair. For each model and run, coverage is first averaged over deployment quantization modes at fixed \(T=0.5\), and the plotted bar is then averaged over runs.

The mismatch effect persists. The qualitative pattern from Experiment~1 is preserved across all four datasets: \textsc{Matched} tracks target, \textsc{Mismatch Worst} falls below target for all 8 models on every dataset, and \textsc{MaxThr} and \textsc{Latent-Robust} restore coverage. \textsc{MMLU-Pro} again shows the largest drop, with descriptive averages over models of $0.906$ (\textsc{Matched}), $0.648$ (\textsc{Mismatch Mean}), $0.520$ (\textsc{Mismatch Worst}), and $0.695$ (\textsc{Pooled})---essentially unchanged from the single-configuration values reported in Experiment~1. \textsc{CommonsenseQA} likewise remains uniformly mismatch-sensitive, with averages $0.912 / 0.833 / 0.796 / 0.824$.

Heterogeneity also persists. \textsc{DBPedia-14} continues to span a wide model-level range under \textsc{Mismatch Mean} ($0.777$ to $0.952$), so averaging over deployment quantization does not collapse the cross-model variation. \textsc{MobileActions} remains the most stable dataset on average, but \textsc{Mismatch Worst} still falls below target for all 8 models, indicating that adverse calibration--deployment pairs survive the within-run averaging. We present full per-model tables in Appendix~\ref{app:exp2}.

The results indicate that the degradation observed is not specific to a particular quantization mode. Low-temperature deployment alone already defines a regime in which standard CP is substantially more sensitive to calibration--deployment mismatch, regardless of the deployment quantization mode.

\begin{figure*}[t]
    \centering
    \includegraphics[width=\linewidth]{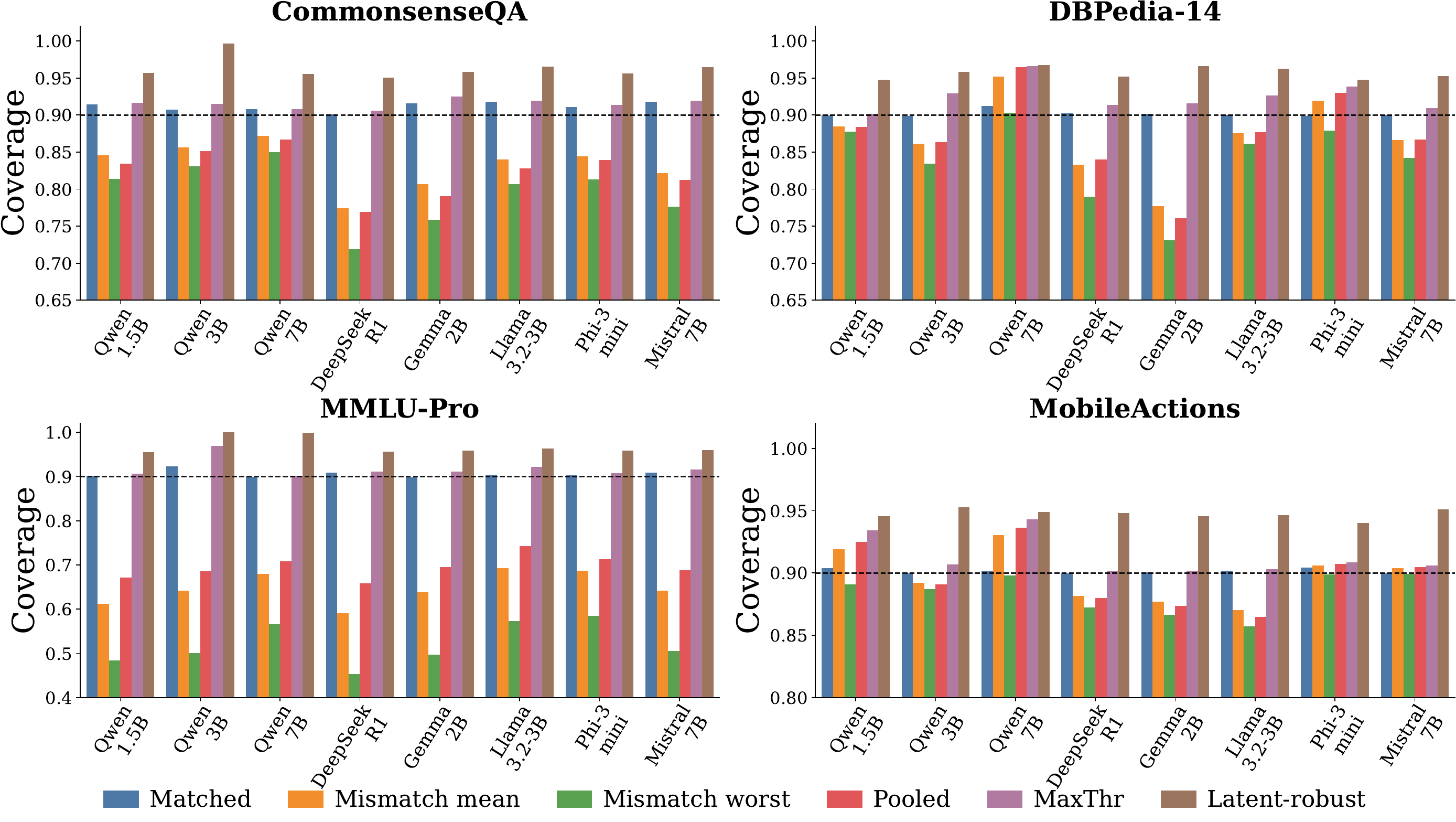}
    \caption{
    \textbf{Coverage with deployment temperature fixed to \(T=0.5\), averaged over deployment quantization modes.}
    Coverage is shown for all six procedures across eight models and four datasets. }
    \label{fig:exp2_t05_avg_quant}
\end{figure*}

\subsection{Experiment 3: Temperature Effects at Fixed Deployment Quantization}
\label{subsec:exp3}

Finally, we isolate the role of temperature by fixing deployment quantization to \(\texttt{nf4}\) and averaging over deployment temperatures. This complements Experiment~2 by asking whether the mismatch effect persists when quantization is held fixed and only the inference regime varies. Figure~\ref{fig:exp3_nf4_avg_temp} reports coverage separately for each dataset--model pair. For each model and run, coverage is first averaged over deployment temperatures at fixed \(\texttt{nf4}\), and the plotted bar is then averaged over runs.

The mismatch effect persists, but more selectively than in Experiments~1 and~2. The clearest sign is that \textsc{Mismatch Worst} still falls below target for all 8 of 8 models on every dataset, whereas \textsc{Mismatch Mean} now falls below target on only some datasets: 8 of 8 on \textsc{MMLU-Pro}, 1 of 8 on \textsc{CommonsenseQA}, and an intermediate count on \textsc{DBPedia-14} and \textsc{MobileActions}. Averaging over temperatures attenuates the average mismatch effect for many model--dataset pairs but does not eliminate adverse calibration--deployment pairs.

\textsc{MMLU-Pro} again shows the strongest degradation: descriptive averages over models drop from $0.901$ (\textsc{Matched}) to $0.842$ (\textsc{Mismatch Mean}) and $0.719$ (\textsc{Mismatch Worst}). \textsc{DBPedia-14} again shows the strongest cross-model heterogeneity, with \textsc{Mismatch Mean} spanning $0.567$ to $0.922$ and \textsc{Mismatch Worst} spanning $0.220$ to $0.890$, a range so wide that some model families are nearly invariant under the averaging while others collapse. This rules out a clear monotone ordering of inference configurations along which mismatch severity could be predicted.

With Experiment~2, the results show that neither temperature nor quantization alone accounts for the observed degradation. Each axis is sufficient to induce mismatch effects when the other is held fixed. We present further results including full per-model tables in Appendix~\ref{app:exp3}.
\begin{figure*}[t]
    \centering
    \includegraphics[width=\linewidth]{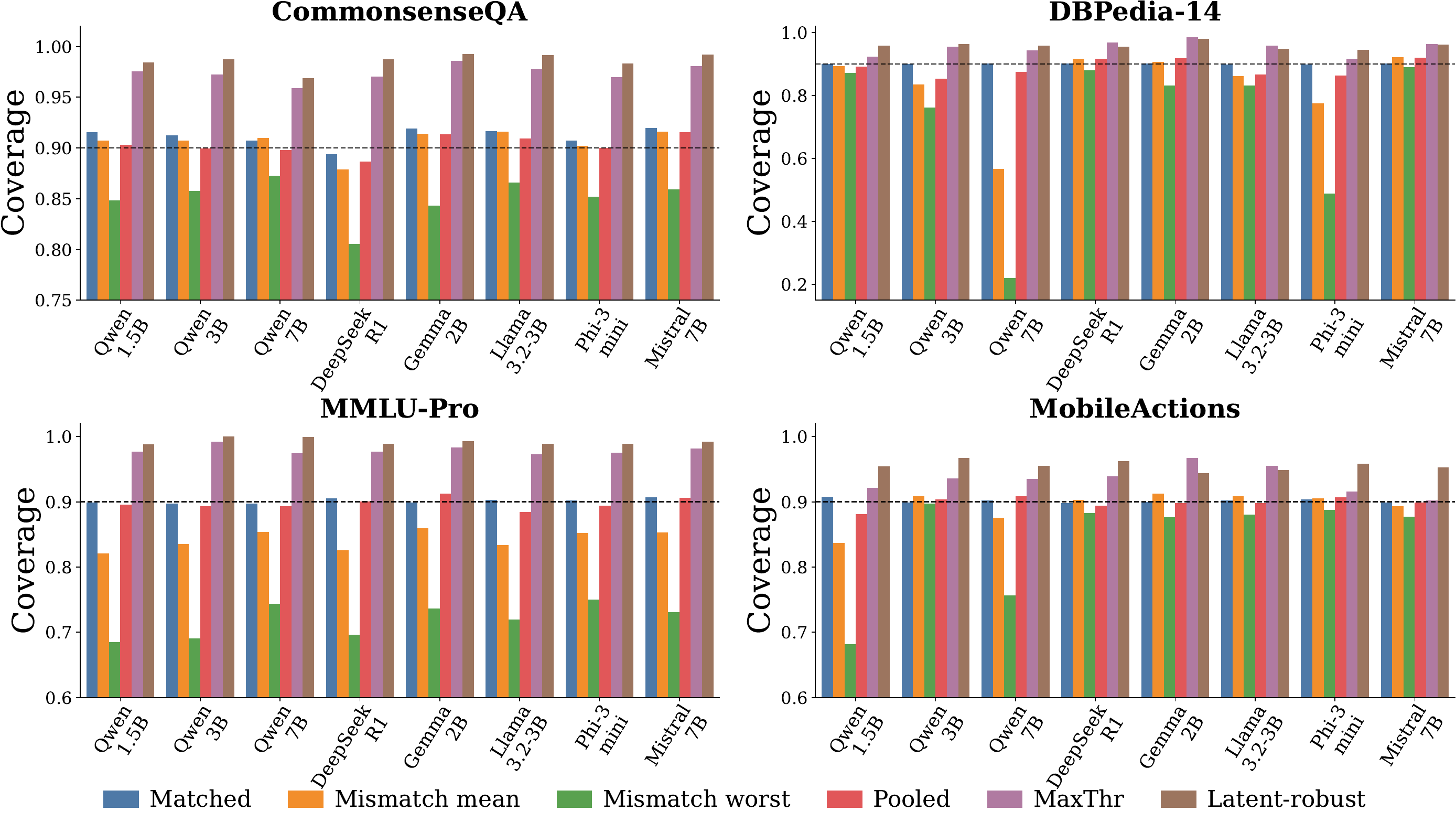}
    \caption{
    \textbf{Coverage with deployment quantization fixed to \(\texttt{nf4}\), averaged over deployment temperatures.}
    Coverage is shown for all six procedures across eight models and four datasets. 
    }
    \label{fig:exp3_nf4_avg_temp}
\end{figure*}
\section{Discussion}
\label{sec:discussion}

The central claim of this paper is evaluative: for deployed language-model systems, inference configuration is part of the validity problem itself, not an implementation detail to be held fixed or averaged away. Standard CP evaluations ask whether the guarantee holds when calibration and deployment share an inference pipeline. Our results show this is insufficient. Even with the task distribution and trained model fixed, changing the inference configuration between calibration and deployment can move the conformal score distribution enough to invalidate split-conformal validity, and matched-only evaluation cannot, by construction, surface this failure.

The empirical picture shows the challenging nature of inference-time model shift. In our experiments, low-temperature deployment is consistently the most mismatch-sensitive regime, and aggressive quantization can amplify the effect, but the failures admit no monotone ordering over inference configurations. Their magnitude varies with the dataset and model family, and the direction of the effect can change across settings. 

\noindent \textbf{Implications for evaluation practice.} Our findings motivate a concrete revision of reporting practice. We argue that evaluations of CP for language models should treat inference configuration as a first-class experimental axis, on the same footing as the dataset and the model. Concretely, this entails three reporting steps: (i) state the deployment assumption being targeted: fixed configuration, stable mixture, or unobserved at test time, since each implies a different reliability target; (ii) report sensitivity to calibration--deployment mismatch along the inference axes that may vary at deployment, rather than matched coverage alone; and (iii) where the deployment configuration is not known in advance, report a baseline aligned with the targeted deployment regime. In the setting studied here, these reporting steps require no new theoretical machinery, but they do require treating inference configuration as part of the experimental design. We view this as a necessary step toward conformal guarantees that remain meaningful when language-model systems are actually deployed.

\noindent \textbf{Limitations.} Our analysis is primarily a characterization rather than a mechanistic account: we identify when mismatch matters but do not provide a predictive theory of when a given configuration change will induce undercoverage, overcoverage, or only mild effects. The audit is carried out on a finite deployment grid, which captures many realistic serving choices but does not address richer deployment variation, including continuous temperatures or temporal drift in serving policies.
\section{Conclusion}

We identify \emph{inference-time model shift} as a distinct failure mode for conformal prediction in language models: even under a fixed task distribution, changing deployment-time inference configuration can alter the conformal score distribution and break the assumptions behind matched calibration. Across eight small language models, four datasets, and a grid of quantization and temperature settings, this effect leads matched-only evaluation to substantially overstate deployment-time reliability. These results motivate a simple change in practice: conformal evaluations for language models should treat inference configuration as part of the experimental design, not as a fixed implementation detail.

\bibliographystyle{plain}
\bibliography{refs}
%%%%%%%%%%%%%%%%%%%%%%%%%%%%%%%%%%%%%%%%%%%%%%%%%%%%%%%%%%%%

\clearpage
\appendix
\section*{Appendix}
\startcontents[sections]
\printcontents[sections]{l}{1}{\setcounter{tocdepth}{2}}

% \section*{Appendix}
% \addcontentsline{toc}{section}{Appendix}

% % Show only the appendix in this TOC
% \etocsettocstyle{\section*{Contents of this appendix}}{}
% \localtableofcontents
%\clearpage

\section{Additional Theory and Full Statements}
\label{app:theory}

This appendix completes the technical development from the main text. We first fix notation and recall the split-conformal threshold used throughout. We then formalize the reliability targets induced by different deployment assumptions and provide full proofs for the statements in Sections~\ref{sec:shift} and~\ref{sec:approaches}. The purpose of these results is not to characterize all effects of inference-configuration mismatch, but to make precise which guarantees remain valid under which deployment regimes.

\subsection{Setup and split-conformal preliminaries}
\label{app:setup}

We use the notation from Sections~\ref{sec:shift} and~\ref{sec:approaches}. For a calibration sample
\[
\mathcal D=\{(X_i,Y_i,\pi_i)\}_{i=1}^m
\]
and nonconformity score \(A:\Delta^{|\mathcal Y(x)|}\times \mathcal Y(x)\to\mathbb R\), define the calibration scores
\[
V_i := A(\pi_i,Y_i), \qquad i=1,\dots,m.
\]
Let \(V_{(1)}\le \cdots \le V_{(m)}\) denote the order statistics. For target miscoverage \(\alpha\in(0,1)\), define
\[
k_\alpha(m):=\left\lceil (m+1)(1-\alpha)\right\rceil
\]
and the split-conformal threshold
\[
\widehat q_\alpha(\mathcal D)
:=
\begin{cases}
V_{(k_\alpha(m))}, & \text{if } k_\alpha(m)\le m,\\[0.4em]
+\infty, & \text{if } k_\alpha(m)=m+1.
\end{cases}
\]
Given a predictive vector \(\pi\) for a test input \(x\), the associated prediction set is
\[
\widehat S_\alpha(x;\pi,\mathcal D)
:=
\{y\in\mathcal Y(x): A(\pi,y)\le \widehat q_\alpha(\mathcal D)\}.
\]

We use the following standard split-conformal fact repeatedly.

\begin{lemma}[Standard split-conformal validity]
\label{lem:standard_split_cp}
Let \(\{(X_i,Y_i,\pi_i)\}_{i=1}^{m}\) be a calibration sample and let \((X_{m+1},Y_{m+1},\pi_{m+1})\) be a test tuple. If the \(m+1\) tuples are exchangeable and \(V_i=A(\pi_i,Y_i)\), then
\[
\mathbb P\!\left(
Y_{m+1}\in \widehat S_\alpha(X_{m+1};\pi_{m+1},\mathcal D)
\right)
=
\mathbb P\!\left(
V_{m+1}\le \widehat q_\alpha(\mathcal D)
\right)
\ge 1-\alpha.
\]
\end{lemma}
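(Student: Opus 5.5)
The statement to prove is Lemma~\ref{lem:standard_split_cp}, the classical rank argument for split conformal prediction. I would first reduce set membership to a score comparison. By the definition of \(\widehat S_\alpha\), \(Y_{m+1}\in\widehat S_\alpha(X_{m+1};\pi_{m+1},\mathcal D)\) holds exactly when \(A(\pi_{m+1},Y_{m+1})\le \widehat q_\alpha(\mathcal D)\), that is, when \(V_{m+1}\le\widehat q_\alpha(\mathcal D)\). This gives the equality in the statement. The case \(k_\alpha(m)=m+1\) is trivial, since then \(\widehat q_\alpha=+\infty\) and the probability is \(1\). I would therefore assume \(k:=k_\alpha(m)\le m\).

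Next I would transfer exchangeability to the scores. The vector \((V_1,\dots,V_{m+1})\) is obtained by applying the same fixed measurable map \((x,y,\pi)\mapsto A(\pi,y)\) coordinatewise to exchangeable tuples, so it is itself exchangeable. I would then pass from a calibration order statistic to an order statistic of the augmented sample. Let \(W_{(1)}\le\dots\le W_{(m+1)}\) be the order statistics of all \(m+1\) scores. The claim is that \(V_{m+1}\le V_{(k)}\) if and only if \(V_{m+1}\le W_{(k)}\).
\begin{itemize}
\item If \(V_{m+1}\le V_{(k)}\), then at most \(k-1\) calibration scores lie strictly below \(V_{m+1}\). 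Hence \(V_{m+1}\) is at most the \(k\)-th smallest value among all \(m+1\) scores.
\item Conversely, if \(V_{m+1}> V_{(k)}\), then at least \(k\) calibration scores are strictly smaller than \(V_{m+1}\). Hence \(W_{(k)}\le V_{(k)}<V_{m+1}\).
\end{itemize}

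The key step is then the symmetry bound
\[
\mathbb P\bigl(V_{m+1}\le W_{(k)}\bigr)\ge k/(m+1).
\]
By exchangeability, \(\mathbb P(V_j\le W_{(k)})\) is the same for every \(j\). Summing over \(j\) gives
\[
(m+1)\,\mathbb P\bigl(V_{m+1}\le W_{(k)}\bigr)=\mathbb E\,\#\{j: V_j\le W_{(k)}\}.
\]
At least \(k\) indices always satisfy \(V_j\le W_{(k)}\), with ties only increasing the count. This yields the bound. Finally, since \(k=\lceil (m+1)(1-\alpha)\rceil\), we have \(k/(m+1)\ge 1-\alpha\), which concludes the proof.

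The main obstacle is handling ties carefully. Ties arise in practice because finite label sets and discrete scores can produce repeated values. The bound must be written with non-strict inequalities so that the counting argument stays a lower bound and needs no continuity or tie-breaking assumption. Only the reverse bound \(1-\alpha+1/(m+1)\) would require a no-ties assumption, and the lemma does not need it.
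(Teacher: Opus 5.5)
The paper does not prove this lemma. It is stated in Appendix~\ref{app:setup} as a ``standard split-conformal fact'' with no proof and then used as a black box, for example in the proof of Proposition~\ref{prop:pooled}. So there is no paper proof to compare against, and your argument supplies a correct and complete one. Each step is sound:
\begin{itemize}
\item the reduction to the event $\{V_{m+1}\le \widehat q_\alpha(\mathcal D)\}$;
\item the trivial case $k_\alpha(m)=m+1$;
\item the transfer of exchangeability to the scores;
\item the averaging identity $(m+1)\,\mathbb P(V_{m+1}\le W_{(k)})=\mathbb E\,\#\{j:V_j\le W_{(k)}\}\ge k$.
\end{itemize}
The averaging identity uses that $W_{(k)}$ is a permutation-invariant function of $(V_1,\dots,V_{m+1})$. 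You rely on this when you say $\mathbb P(V_j\le W_{(k)})$ does not depend on $j$, and it is worth stating explicitly.

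The common textbook route argues that ``the rank of $V_{m+1}$ is uniform on $\{1,\dots,m+1\}$.'' Your counting version needs neither continuity nor randomized tie-breaking, which is the right choice in this paper. LAC scores on small finite label sets can tie heavily. For Qwen2.5-7B on DBPedia-14 at $\alpha\in\{0.2,0.3\}$ in Table~\ref{tab:alpha_robustness_dbpedia_14}, matched coverage is about $0.88$ at both levels with identical set sizes. This is consistent with exactly such ties: the lower bound holds while the upper bound $1-\alpha+1/(m+1)$ fails, matching your closing remark.

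One simplification: the inequality only needs the inclusion $\{V_{m+1}\le W_{(k)}\}\subseteq\{V_{m+1}\le V_{(k)}\}$. This follows at once from $W_{(k)}\le V_{(k)}$, since adding a point to a sample cannot increase its $k$-th order statistic. Your first bullet, which proves the reverse inclusion, is correct but not needed.
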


We also use the following monotonicity property.

\begin{lemma}[Monotonicity in the threshold]
\label{lem:monotonicity}
If \(q'\ge q\), then for every pair \((x,\pi)\),
\[
\{y\in\mathcal Y(x):A(\pi,y)\le q\}
\subseteq
\{y\in\mathcal Y(x):A(\pi,y)\le q'\}.
\]
\end{lemma}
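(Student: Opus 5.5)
The statement is Lemma~\ref{lem:monotonicity}, and the plan is a direct element-chasing argument from the definition of the sets, with no probabilistic content at all. I will fix an arbitrary pair \((x,\pi)\) and thresholds \(q\le q'\) in \(\mathbb R\cup\{+\infty\}\). Then I will take an arbitrary element \(y\) of the left-hand set \(\{y\in\mathcal Y(x):A(\pi,y)\le q\}\), and check that it satisfies the two membership conditions of the right-hand set.

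The first condition, \(y\in\mathcal Y(x)\), is inherited immediately, since both sets are carved out of the same candidate label set \(\mathcal Y(x)\). For the second condition, membership on the left gives \(A(\pi,y)\le q\). Combining this with \(q\le q'\) by transitivity of the order on the extended reals gives \(A(\pi,y)\le q'\), so \(y\) lies in the right-hand set. Since \(y\) was arbitrary, the inclusion follows.

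The only point needing a moment's care is the case \(q'=+\infty\), which is allowed because \(\widehat q_\alpha(\mathcal D)\) may equal \(+\infty\) when \(k_\alpha(m)=m+1\). I will handle it by noting that \(A\) is real-valued, so \(A(\pi,y)\le +\infty\) always holds. In that case the right-hand set is all of \(\mathcal Y(x)\), and the inclusion is trivial. The case \(q=+\infty\) forces \(q'=+\infty\), so the two sets coincide.

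There is no genuine obstacle here. Nothing about \(\pi\), the distribution of the data, or the structure of \(A\) beyond being real-valued is used, so the inclusion holds pointwise and deterministically. This is exactly the form needed later, when the result is combined with Lemma~\ref{lem:standard_split_cp} for \(\hat q_{\max}\ge \hat q_c\) and for \(\hat q_\Gamma(x)\ge \hat q_C\) on the event \(C\in\hat\Gamma(Z(X))\).
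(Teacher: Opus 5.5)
Your element-chasing argument is correct and is exactly what the paper means by ``immediate from the definition of the prediction set.'' The paper gives no further detail, so your explicit treatment of the \(q'=+\infty\) case (which can occur when \(k_\alpha(m)=m+1\)) is a harmless addition rather than a different route.
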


\begin{proof}
Immediate from the definition of the prediction set.
\end{proof}

\subsection{Proof of Proposition~\ref{prop:exchangeability_failure}}
\label{app:proof_prop_exchangeability}

We restate the proposition for convenience.

\begin{proposition*}[Matched validity is not transportable across inference configurations]
Consider split conformal prediction with score \(A\). If calibration scores are generated under \(c_{\mathrm{cal}}\) and test scores under \(c_{\mathrm{dep}}\), and \(F_{c_{\mathrm{cal}}}\neq F_{c_{\mathrm{dep}}}\), then calibration and test scores are, in general, not exchangeable, and the usual split-conformal coverage guarantee need not hold.
\end{proposition*}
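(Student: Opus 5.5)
The plan has two parts, matching the two claims in the statement. First I show that unequal marginals rule out exchangeability. Then I give an explicit counterexample in which the coverage guarantee of Lemma~\ref{lem:standard_split_cp} fails. Since the statement is hedged (``in general'', ``need not hold''), a single construction per claim is enough.

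\textbf{Step 1 (non-exchangeability).} Suppose the calibration scores \(V_1,\dots,V_m\) are drawn from \(F_{c_{\mathrm{cal}}}\) and the test score \(V_{m+1}\) is drawn from \(F_{c_{\mathrm{dep}}}\). If the vector \((V_1,\dots,V_{m+1})\) were exchangeable, then the transposition of indices \(1\) and \(m+1\) would leave its joint law unchanged. In particular, the marginal law of \(V_1\) would equal that of \(V_{m+1}\), giving \(F_{c_{\mathrm{cal}}}=F_{c_{\mathrm{dep}}}\). This contradicts the hypothesis. So under this sampling model, exchangeability fails whenever the two score distributions differ. The hypothesis of Lemma~\ref{lem:standard_split_cp} is therefore violated, and its proof, which relies on the rank of \(V_{m+1}\) being uniform among the \(m+1\) scores, does not apply.

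\textbf{Step 2 (failure of coverage).} Non-exchangeability alone does not imply undercoverage; for example, it is harmless if \(F_{c_{\mathrm{dep}}}\) is stochastically smaller than \(F_{c_{\mathrm{cal}}}\). I therefore construct an explicit instance.

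1. Take any fixed \(P_{XY}\) and choose configurations for which the deployment scores are stochastically larger. Concretely, let \(F_{c_{\mathrm{cal}}}\) be continuous and suppose there is a \(\delta>0\) with \(F_{c_{\mathrm{dep}}}(q)\le F_{c_{\mathrm{cal}}}(q)-\delta\) on the relevant range.
2. A fully explicit realization is binary labels with LAC score \(A(\pi,y)=1-\pi_y\). Under \(c_{\mathrm{cal}}\) the true-label score is uniform on \([0,1/2]\). Under \(c_{\mathrm{dep}}\) it equals that uniform variable plus a shift \(s>0\), capped at \(1\), as produced for instance by lower confidence in the true label.
3. Condition on the calibration sample, so that the threshold \(\widehat q_\alpha\) is fixed. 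Coverage at deployment is then \(\mathbb E[F_{c_{\mathrm{dep}}}(\widehat q_\alpha)]\).
4. By the standard uniform order-statistic computation, \(\mathbb E[F_{c_{\mathrm{cal}}}(\widehat q_\alpha)]=k_\alpha(m)/(m+1)\), which lies in \([1-\alpha,\,1-\alpha+1/(m+1))\).
5. Combining this with the shift, deployment coverage is at most \(1-\alpha+1/(m+1)-\delta\), which is strictly below \(1-\alpha\) once \(m\) is large enough.
6. Even more simply, take a point-mass example: calibration scores all equal to \(0\) and deployment scores all equal to \(1\). Then \(\widehat q_\alpha=0\) and coverage is \(0\).

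\textbf{Main obstacle.} The only real subtlety is that the statement concerns a fixed \(P_{XY}\) and a fixed trained model. The counterexample must therefore come from a change in the map \(x\mapsto\pi^{(c)}(x)\) alone, not from a change in the data. I would handle this by fixing \(P_{XY}\) and defining \(\pi^{(c_{\mathrm{dep}})}\) as a temperature-like flattening of \(\pi^{(c_{\mathrm{cal}})}\) that lowers the true-label probability. To make this monotone, I take a model that is always correct, so that raising the temperature decreases \(\pi_Y\) and increases the LAC score. This shows such shifts arise within the formal setup of Section~\ref{sec:shift}.
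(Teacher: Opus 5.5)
Your proposal is correct and follows the paper's proof: the paper also argues that exchangeability of the calibration and test scores would force equal marginals, and its counterexample is exactly your item 6 (all calibration scores \(0\), test score \(1\), so \(\widehat q_\alpha=0\) and coverage is \(0\); like the paper, this tacitly needs \(\alpha\ge 1/(m+1)\) so that \(k_\alpha(m)\le m\), since otherwise \(\widehat q_\alpha=+\infty\)). Your shifted-uniform construction and your realization of the shift through \(x\mapsto\pi^{(c)}(x)\) under a fixed \(P_{XY}\) go beyond the paper's purely score-level example and are not needed for the hedged claim; note also that item 5 needs a tail argument, because in that example \(F_{c_{\mathrm{cal}}}(q)-F_{c_{\mathrm{dep}}}(q)=\min(2q,2s)\) is not bounded away from zero near \(q=0\).
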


\begin{proof}
Let \(V_1,\dots,V_n\) denote the calibration scores generated under \(c_{\mathrm{cal}}\), and let \(V_{n+1}\) denote the test score generated under \(c_{\mathrm{dep}}\). By construction,
\[
V_i \sim F_{c_{\mathrm{cal}}}
\quad \text{for } i=1,\dots,n,
\qquad
V_{n+1}\sim F_{c_{\mathrm{dep}}}.
\]
If the vector \((V_1,\dots,V_n,V_{n+1})\) were exchangeable, then all coordinates would have the same marginal distribution. Since \(F_{c_{\mathrm{cal}}}\neq F_{c_{\mathrm{dep}}}\), this is impossible in general. Hence calibration and test scores are not exchangeable in general.

The standard split-conformal proof applies by ranking the test score among the calibration scores, and therefore requires exchangeability of the \(n+1\) scores. Once exchangeability is absent, that proof no longer applies.

To see that the coverage guarantee can in fact fail, consider the simple score-level counterexample in which
\[
V_1=\cdots=V_n=0 \quad \text{a.s.},
\qquad
V_{n+1}=1 \quad \text{a.s.}
\]
Then \(F_{c_{\mathrm{cal}}}\neq F_{c_{\mathrm{dep}}}\). For any \(\alpha<1\), the split-conformal threshold computed from the calibration scores is \(\widehat q_\alpha=0\), while the test score equals \(1\) almost surely. Therefore
\[
\mathbb P(V_{n+1}\le \widehat q_\alpha)=0,
\]
and equivalently the resulting conformal predictor has coverage \(0\). Thus the usual split-conformal coverage guarantee need not hold under calibration--deployment mismatch in the score-generating configuration.
\end{proof}

\begin{remark}
Proposition~\ref{prop:exchangeability_failure} is deliberately minimal. It does not claim that every configuration mismatch causes undercoverage, nor that mismatch effects are monotone across configurations. It identifies the point at which the standard split-conformal proof breaks: when calibration and test scores are no longer exchangeable because the score-generating mechanism has changed.
\end{remark}

\subsection{Reliability targets under deployment uncertainty}
\label{app:targets}

Once inference configuration may vary after calibration, there is no single universal notion of deployment-time validity. The relevant target depends on the deployment assumption.

\begin{definition}[Fixed-configuration validity]
\label{def:fixed_config_validity}
Let \(c\in\mathcal C\) be a fixed deployment configuration. A predictor \(\widehat S\) is valid at level \(1-\alpha\) for configuration \(c\) if
\[
\mathbb P\!\left(Y\in \widehat S(X)\mid C=c\right)\ge 1-\alpha.
\]
\end{definition}

\begin{definition}[Mixture-marginal validity]
\label{def:mixture_validity}
Let \(G\) be a deployment distribution on \(\mathcal C\). A predictor \(\widehat S\) is mixture-marginally valid at level \(1-\alpha\) under \(G\) if
\[
\mathbb P\!\left(Y\in \widehat S(X)\right)\ge 1-\alpha
\]
when \(C\sim G\).
\end{definition}

\begin{definition}[Configuration-conditional validity]
\label{def:conditional_validity}
A predictor \(\widehat S\) is configuration-conditionally valid at level \(1-\alpha\) if
\[
\mathbb P\!\left(Y\in \widehat S(X)\mid C=c\right)\ge 1-\alpha
\qquad\text{for every } c\in\mathcal C.
\]
\end{definition}

These targets are distinct. Fixed-configuration validity refers to one known deployment state. Mixture-marginal validity averages over a deployment mixture. Configuration-conditional validity is stronger: it requires validity for every configuration separately.

The procedures studied in the main text align with these targets as follows:
\begin{itemize}
    \item \textsc{Matched} targets fixed-configuration validity.
    \item \textsc{Pooled} targets mixture-marginal validity under a stable deployment mixture.
    \item \textsc{MaxThr} targets configuration-conditional validity under the assumptions of Proposition~\ref{prop:maxthr}.
    \item \textsc{Latent-Robust}, as analyzed in Theorem~\ref{thm:latent}, combines configuration-specific Stage-1 calibration with a Stage-2 conformal localization step and yields a marginal coverage guarantee at level \(1-\alpha_{\mathrm{stage1}}-\beta_{\mathrm{stage2}}\).
\end{itemize}

The last point is important: although \textsc{Latent-Robust} is motivated by latent deployment uncertainty, the theorem in the main text is an unconditional coverage statement. It does \emph{not} by itself imply the same configuration-conditional guarantee as Proposition~\ref{prop:maxthr}; obtaining such a result would require a stronger, configuration-conditional form of Stage-2 control.

The following elementary implication will also be useful.

\begin{lemma}[Configuration-conditional validity implies mixture-marginal validity]
\label{lem:conditional_implies_marginal}
If \(\widehat S\) satisfies
\[
\mathbb P\!\left(Y\in \widehat S(X)\mid C=c\right)\ge 1-\alpha
\qquad\text{for every } c\in\mathcal C,
\]
then \(\widehat S\) is mixture-marginally valid at level \(1-\alpha\) for any deployment distribution on \(\mathcal C\).
\end{lemma}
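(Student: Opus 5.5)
The plan is to apply the law of total probability over the finite configuration space \(\mathcal C\). Let \(G\) be an arbitrary deployment distribution on \(\mathcal C\), and suppose the test configuration \(C\) is drawn from \(G\). All randomness in the calibration sample and the test tuple \((X,Y)\) is held under the same joint law used in the conditional statements of the hypothesis. Because \(\mathcal C=\mathcal Q\times\mathcal T\) is finite, we can condition on the events \(\{C=c\}\) directly, with no measure-theoretic subtleties.

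\textbf{Decomposition.} For each \(c\) with \(G(c)>0\), write
\[
\mathbb P\!\left(Y\in\widehat S(X)\right)=\sum_{c\in\mathcal C:\,G(c)>0} G(c)\,\mathbb P\!\left(Y\in\widehat S(X)\mid C=c\right).
\]
Configurations with \(G(c)=0\) contribute nothing and can be dropped, so the conditional probabilities appearing in the sum are well defined.

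\textbf{Bound each term.} By hypothesis, each conditional probability is at least \(1-\alpha\). The right-hand side is therefore bounded below by
\[
(1-\alpha)\sum_{c:\,G(c)>0}G(c)=1-\alpha,
\]
which is exactly mixture-marginal validity in the sense of Definition~\ref{def:mixture_validity}.

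\textbf{Main point of care.} The only real obstacle is making precise that the conditional guarantees in the hypothesis continue to hold under the joint law induced by \(G\). This requires that the law of \((X,Y,\pi)\) given \(C=c\), together with the calibration data, does not depend on \(G\). In particular, \(C\) is drawn independently of \(P_{XY}\) and of the calibration sample, so conditioning on \(C=c\) recovers the fixed-configuration setting. I would state this convention explicitly at the start of the proof. Once it is in place, the argument is just a convex combination of numbers each at least \(1-\alpha\).
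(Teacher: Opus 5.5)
Your proof is correct and follows the paper's argument: by the law of total probability over the finite set \(\mathcal C\), marginal coverage is a \(G\)-weighted average of the configuration-conditional coverages, each of which is at least \(1-\alpha\). Dropping the configurations with \(G(c)=0\), and stating explicitly that the law given \(C=c\) (including the calibration data) does not depend on \(G\), are minor refinements that the paper leaves implicit.
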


\begin{proof}
By the law of total probability,
\[
\mathbb P(Y\in \widehat S(X))
=
\sum_{c\in\mathcal C}
\mathbb P(C=c)\,
\mathbb P(Y\in \widehat S(X)\mid C=c)
\ge
\sum_{c\in\mathcal C}
\mathbb P(C=c)\,(1-\alpha)
=
1-\alpha.
\]
\end{proof}

\subsection{Pooled calibration and mixture-marginal validity}
\label{app:pooled}

We now state formally the guarantee targeted by pooled calibration.

\begin{proposition}[Pooled calibration under a stable deployment mixture]
\label{prop:pooled}
Let \(G\) be a deployment distribution on \(\mathcal C\). Suppose the calibration tuples
\[
\{(X_i,Y_i,C_i,\pi_i)\}_{i=1}^{m}
\]
and the test tuple \((X_{m+1},Y_{m+1},C_{m+1},\pi_{m+1})\) are exchangeable under the joint law induced by \(P_{XY}\), the mixture \(G\), and the configuration-dependent predictive mechanism \(c\mapsto \pi^{(c)}(\cdot)\). Let \(\widehat S_{\mathrm{pool}}\) be the split-conformal predictor obtained by pooling all calibration tuples and ignoring configuration labels when computing the threshold. Then
\[
\mathbb P\!\left(
Y_{m+1}\in \widehat S_{\mathrm{pool}}(X_{m+1})
\right)\ge 1-\alpha.
\]
That is, pooled calibration is mixture-marginally valid under the stable deployment mixture \(G\).
\end{proposition}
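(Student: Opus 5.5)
The plan is to reduce Proposition~\ref{prop:pooled} directly to Lemma~\ref{lem:standard_split_cp}. The pooled predictor is exactly the split-conformal predictor $\widehat S_\alpha(\cdot;\pi,\mathcal D)$ with $\mathcal D=\{(X_i,Y_i,\pi_i)\}_{i=1}^m$. It discards the configuration labels $C_i$ and uses the threshold $\widehat q_\alpha(\mathcal D)=V_{(k_\alpha(m))}$, where $V_i=A(\pi_i,Y_i)$.

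\textbf{Step 1: pass exchangeability to the reduced tuples.} By hypothesis, the $m+1$ full tuples $(X_i,Y_i,C_i,\pi_i)$ are exchangeable. The projection $(x,y,c,\pi)\mapsto (x,y,\pi)$ is a fixed measurable map applied coordinatewise. For any permutation $\sigma$ of $\{1,\dots,m+1\}$, the permuted sequence of full tuples has the same joint law. Its image under the coordinatewise projection is the permuted sequence of reduced tuples. Hence the reduced tuples $(X_i,Y_i,\pi_i)_{i=1}^{m+1}$ are also exchangeable. The same argument applied to $(x,y,c,\pi)\mapsto A(\pi,y)$ shows that the scores $V_1,\dots,V_{m+1}$ are exchangeable.

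\textbf{Step 2: apply the standard lemma.} With the reduced tuples exchangeable, Lemma~\ref{lem:standard_split_cp} applies verbatim. It gives $\mathbb P(V_{m+1}\le \widehat q_\alpha(\mathcal D))\ge 1-\alpha$. By the definition of the prediction set, the event $\{V_{m+1}\le \widehat q_\alpha(\mathcal D)\}$ is exactly $\{Y_{m+1}\in \widehat S_{\mathrm{pool}}(X_{m+1})\}$. The convention $\widehat q_\alpha=+\infty$ when $k_\alpha(m)=m+1$ is already covered by the lemma. The probability is taken over the joint law in which each $C_i\sim G$, including the test configuration $C_{m+1}$. This is precisely mixture-marginal validity under $G$ in the sense of Definition~\ref{def:mixture_validity}.

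\textbf{Main obstacle.} The argument itself is routine. The substantive point is interpretive: making sure the hypothesis genuinely delivers exchangeability across configurations. If calibration configurations were assigned deterministically or with proportions different from $G$, Step~1 would fail. I would add a remark that the natural sufficient condition is i.i.d.\ draws $C_i\sim G$ independent of $(X_i,Y_i)\sim P_{XY}$, with $\pi_i=\pi^{(C_i)}(X_i)$. Under that condition the full tuples are i.i.d. and hence exchangeable. The guarantee is therefore only marginal over $C\sim G$ and gives no control of $\mathbb P(Y\in\widehat S_{\mathrm{pool}}(X)\mid C=c)$, in contrast with Proposition~\ref{prop:maxthr}.
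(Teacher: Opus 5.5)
Your proposal is correct and follows the paper's proof: the paper also notes that the scores $V_i=A(\pi_i,Y_i)$ are a fixed measurable function of each exchangeable tuple, hence exchangeable, and then applies Lemma~\ref{lem:standard_split_cp} to the pooled calibration sample. Your added remark is consistent with the paper's own remark following the proposition. It names i.i.d.\ draws $C_i\sim G$ independent of $(X_i,Y_i)$ as the natural sufficient condition, and notes that the guarantee gives no configuration-conditional control.
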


\begin{proof}
Let \(V_i:=A(\pi_i,Y_i)\) for \(i=1,\dots,m+1\). Since the tuples are exchangeable and \(V_i\) is a measurable function of the \(i\)-th tuple, the pooled scores \(V_1,\dots,V_{m+1}\) are exchangeable. Applying Lemma~\ref{lem:standard_split_cp} to the pooled calibration sample yields
\[
\mathbb P\!\left(
Y_{m+1}\in \widehat S_{\mathrm{pool}}(X_{m+1})
\right)
\ge 1-\alpha.
\]
This is exactly the claimed mixture-marginal validity statement.
\end{proof}

\begin{remark}
Proposition~\ref{prop:pooled} is a statement about the mixture induced by \(G\). It does not imply validity within each configuration \(c\in\mathcal C\). Therefore pooled calibration may be appropriate when average reliability over a stable deployment mixture is the target, but can be inadequate when the objective is reliability for individual latent deployment states.
\end{remark}

\subsection{Proof of Proposition~\ref{prop:maxthr}}
\label{app:proof_prop_maxthr}

We restate the proposition for convenience.

\begin{proposition*}[Max-threshold validity]
Assume that, for each \(c\in\mathcal C\), the calibration data in \(\mathcal D_{\mathrm{cal}}^{(c)}\) and a test point generated under \(c\) are exchangeable with respect to \(A\), so that the configuration-specific predictor based on \(\widehat q_c\) is valid at level \(1-\alpha\). Then
\[
\mathbb P(Y\in \widehat S_{\max}(X)\mid C=c)\ge 1-\alpha
\qquad\text{for every } c\in\mathcal C.
\]
Consequently, \(\widehat S_{\max}\) is also valid marginally for any deployment distribution on \(C\).
\end{proposition*}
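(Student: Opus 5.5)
The argument reduces the max-threshold guarantee to the configuration-specific guarantees through set inclusion. Fix an arbitrary configuration \(c\in\mathcal C\) and condition on \(C=c\), so the test tuple \((X,Y,\pi)\) has \(\pi=\pi^{(C)}(X)\) generated under \(c\).

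First, by assumption, the scores from \(\mathcal D_{\mathrm{cal}}^{(c)}\) together with the test score \(A(\pi,Y)\) are exchangeable under \(C=c\). Lemma~\ref{lem:standard_split_cp} therefore applies to the configuration-specific sample and gives
\[
\mathbb P\bigl(A(\pi,Y)\le \widehat q_c \,\big|\, C=c\bigr)\ge 1-\alpha .
\]
This lower bound already averages over the randomness in \(\mathcal D_{\mathrm{cal}}^{(c)}\).

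Second, by definition \(\widehat q_{\max}=\max_{c'}\widehat q_{c'}\ge \widehat q_c\) pointwise, that is, on every realization of the full calibration data. Lemma~\ref{lem:monotonicity} then gives the deterministic inclusion
\[
\{y:A(\pi,y)\le\widehat q_c\}\subseteq \widehat S_{\max}(X).
\]
Hence the event \(\{A(\pi,Y)\le\widehat q_c\}\) is contained in \(\{Y\in\widehat S_{\max}(X)\}\), and monotonicity of probability (under the conditional law given \(C=c\)) yields
\[
\mathbb P(Y\in \widehat S_{\max}(X)\mid C=c)\ge 1-\alpha .
\]
Because \(c\) was arbitrary, \(\widehat S_{\max}\) is configuration-conditionally valid.

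The marginal statement then follows directly from Lemma~\ref{lem:conditional_implies_marginal} applied to any deployment distribution on \(\mathcal C\).

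The main point needing care is that \(\widehat q_{\max}\) depends on all calibration subsets, not only \(\mathcal D_{\mathrm{cal}}^{(c)}\). No exchangeability between the other subsets and the test point is used or needed. Those other subsets only enlarge the threshold, and the inclusion holds realization by realization, so no joint-law argument is required. The only assumption used is the stated per-configuration exchangeability.
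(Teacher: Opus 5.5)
Your proof is correct and follows the paper's argument essentially step for step: per-configuration validity of \(\widehat S_c\), then the pointwise inequality \(\widehat q_{\max}\ge\widehat q_c\) with Lemma~\ref{lem:monotonicity} giving \(\widehat S_c\subseteq\widehat S_{\max}\), then Lemma~\ref{lem:conditional_implies_marginal} for the marginal claim. The only differences are that you derive the per-configuration bound from Lemma~\ref{lem:standard_split_cp} where the paper takes it directly as the stated assumption, and that you spell out something the paper leaves implicit: the other calibration subsets need no exchangeability with the test point.
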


\begin{proof}
Fix \(c\in\mathcal C\). By assumption, the configuration-specific split-conformal predictor
\[
\widehat S_c(x):=\{y\in \mathcal Y(x):A(\pi,y)\le \widehat q_c\}
\]
satisfies
\[
\mathbb P(Y\in \widehat S_c(X)\mid C=c)\ge 1-\alpha.
\]
By construction,
\[
\widehat q_{\max}=\max_{c'\in\mathcal C}\widehat q_{c'}\ge \widehat q_c.
\]
Hence, by Lemma~\ref{lem:monotonicity},
\[
\widehat S_c(x)\subseteq \widehat S_{\max}(x)
\qquad\text{for every } x.
\]
Therefore
\[
\mathbb P(Y\in \widehat S_{\max}(X)\mid C=c)
\ge
\mathbb P(Y\in \widehat S_c(X)\mid C=c)
\ge 1-\alpha.
\]
Since \(c\in\mathcal C\) was arbitrary, the configuration-conditional statement holds for every configuration.

The marginal validity statement now follows from Lemma~\ref{lem:conditional_implies_marginal}.
\end{proof}

\subsection{Proof of Theorem~\ref{thm:latent}}
\label{app:proof_thm_latent}

We restate the theorem for convenience.

\begin{theorem*}[Latent-robust validity]
Assume:
\begin{enumerate}
    \item for each \(c\in\mathcal C\), the Stage-1 calibration data in \(\mathcal D_1^{(c)}\) and a test point generated under \(c\) are exchangeable with respect to \(A\), so that \(\widehat S_c\) is valid at level \(1-\alpha_{\mathrm{stage1}}\); and
    \item the Stage-2 calibration tuples in \(\mathcal D_2\) and the test tuple are exchangeable with respect to \((C,Z)\), so that
    \[
    \mathbb P\!\left(C\in \widehat\Gamma(Z(X))\right)\ge 1-\beta_{\mathrm{stage2}}.
    \]
\end{enumerate}
Then
\[
\mathbb P\!\left(Y\in \widehat S_{\Gamma}(X)\right)\ge 1-\alpha_{\mathrm{stage1}}-\beta_{\mathrm{stage2}}.
\]
\end{theorem*}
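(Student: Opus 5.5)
The plan is a union bound over two failure events, combined with the monotonicity of Lemma~\ref{lem:monotonicity}. Let \(C\) be the (latent) deployment configuration of the test tuple. Define the localization event \(E_2:=\{C\in\widehat\Gamma(Z(X))\}\) and the Stage-1 coverage event \(E_1:=\{Y\in\widehat S_C(X)\}\), that is, \(A(\pi,Y)\le\widehat q_C\) for the threshold of the configuration that actually generated \(\pi\).

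The key deterministic step is the inclusion \(E_1\cap E_2\subseteq\{Y\in\widehat S_\Gamma(X)\}\). On \(E_2\), the set \(\widehat\Gamma(Z(X))\) is nonempty and contains \(C\). Hence \(\widehat q_\Gamma(x)=\max_{c\in\widehat\Gamma(Z(x))}\widehat q_c\ge\widehat q_C\), and Lemma~\ref{lem:monotonicity} gives \(\widehat S_C(X)\subseteq\widehat S_\Gamma(X)\). On \(E_1\) we then get \(Y\in\widehat S_\Gamma(X)\). The fallback case \(\widehat\Gamma=\varnothing\) lies outside \(E_2\), so it needs no separate treatment, although the fallback to \(\max_c\widehat q_c\) only helps there.

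It then follows that
\[
\mathbb P(Y\notin\widehat S_\Gamma(X))\le\mathbb P(E_1^c)+\mathbb P(E_2^c).
\]
Assumption (ii) bounds \(\mathbb P(E_2^c)\le\beta_{\mathrm{stage2}}\) directly.

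For \(\mathbb P(E_1^c)\), I will condition on \(C=c\). Assumption (i) gives \(\mathbb P(Y\in\widehat S_c(X)\mid C=c)\ge1-\alpha_{\mathrm{stage1}}\) for every \(c\). Averaging over the law of \(C\), as in Lemma~\ref{lem:conditional_implies_marginal}, yields \(\mathbb P(E_1)\ge1-\alpha_{\mathrm{stage1}}\).

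The main obstacle is bookkeeping about randomness rather than any real inequality. The Stage-1 thresholds are random and are used inside \(Z\), and hence inside \(\widehat\Gamma\). So I must check that the two guarantees are probability statements over the joint law of \(\mathcal D_1,\mathcal D_2\) and the test tuple, which is what the hypotheses assert.

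Because the union bound requires no independence between \(E_1\) and \(E_2\), it suffices to verify two things. First, \(\mathbb P(E_1^c)\) is computed with \(\mathcal D_1\) random and the test point drawn under its own \(C\). Second, assumption (ii) is stated unconditionally on \(\mathcal D_1\), or conditionally on \(\mathcal D_1\) and then integrated. I would state explicitly that the disjointness of \(\mathcal D_1\) and \(\mathcal D_2\) makes \(Z\) a fixed function given \(\mathcal D_1\), so that the Stage-2 conformal argument applies conditionally on \(\mathcal D_1\) and then integrates out.
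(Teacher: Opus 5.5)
Your proposal is correct and follows essentially the same route as the paper's proof. On the localization event the adaptive threshold dominates \(\widehat q_C\), so Lemma~\ref{lem:monotonicity} gives \(\widehat S_C(X)\subseteq\widehat S_\Gamma(X)\); the union bound then splits miscoverage into the Stage-1 term, bounded by \(\alpha_{\mathrm{stage1}}\) via the law of total probability over \(C\), and \(\mathbb P(E_2^c)\le\beta_{\mathrm{stage2}}\) from assumption (ii) — your extra bookkeeping about the randomness of the Stage-1 thresholds is harmless but unnecessary, since both hypotheses are already stated as unconditional probability bounds over the joint law.
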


\begin{proof}
Let
\[
E:=\{C\in \widehat\Gamma(Z(X))\}
\]
denote the event that the Stage-2 conformal set of plausible configurations contains the true deployment configuration.

Define the configuration-specific Stage-1 predictor
\[
\widehat S_C(X)
:=
\{y\in\mathcal Y(X): A(\pi,y)\le \widehat q_C\},
\]
where \(\widehat q_C\) denotes the Stage-1 threshold corresponding to the realized configuration \(C\).

On the event \(E\), the realized configuration \(C\) belongs to \(\widehat\Gamma(Z(X))\). Therefore, by definition of \(\widehat q_\Gamma(X)\),
\[
\widehat q_\Gamma(X)
=
\max_{c\in \widehat\Gamma(Z(X))}\widehat q_c
\ge \widehat q_C.
\]
Hence, by Lemma~\ref{lem:monotonicity},
\[
\widehat S_C(X)\subseteq \widehat S_\Gamma(X)
\qquad\text{on } E.
\]
Equivalently,
\[
\{Y\notin \widehat S_\Gamma(X)\}\cap E
\subseteq
\{Y\notin \widehat S_C(X)\}.
\]
It follows that
\begin{align*}
\mathbb P(Y\notin \widehat S_\Gamma(X))
&=
\mathbb P\bigl(\{Y\notin \widehat S_\Gamma(X)\}\cap E\bigr)
+
\mathbb P\bigl(\{Y\notin \widehat S_\Gamma(X)\}\cap E^c\bigr) \\
&\le
\mathbb P(Y\notin \widehat S_C(X))
+
\mathbb P(E^c).
\end{align*}

We now bound the two terms separately.

For the first term, use the law of total probability and the Stage-1 validity assumption:
\begin{align*}
\mathbb P(Y\notin \widehat S_C(X))
&=
\sum_{c\in\mathcal C}
\mathbb P(C=c)\,
\mathbb P(Y\notin \widehat S_c(X)\mid C=c) \\
&\le
\sum_{c\in\mathcal C}
\mathbb P(C=c)\,\alpha_{\mathrm{stage1}}
=
\alpha_{\mathrm{stage1}}.
\end{align*}

For the second term, assumption (ii) gives
\[
\mathbb P(E^c)
=
\mathbb P\!\left(C\notin \widehat\Gamma(Z(X))\right)
\le \beta_{\mathrm{stage2}}.
\]

Combining the two bounds yields
\[
\mathbb P(Y\notin \widehat S_\Gamma(X))
\le
\alpha_{\mathrm{stage1}}+\beta_{\mathrm{stage2}}.
\]
Equivalently,
\[
\mathbb P(Y\in \widehat S_\Gamma(X))
\ge
1-\alpha_{\mathrm{stage1}}-\beta_{\mathrm{stage2}},
\]
which proves the theorem.
\end{proof}

\begin{remark}
Theorem~\ref{thm:latent} guarantees marginal coverage at level \(1-\alpha_{\mathrm{stage1}}-\beta_{\mathrm{stage2}}\), but it does not imply that \textsc{Latent-Robust} is uniformly less conservative than \textsc{MaxThr}. In our default construction, the total error budget is split across two stages, so the Stage-1 thresholds are computed at level \(1-\alpha_{\mathrm{stage1}}\) with \(\alpha_{\mathrm{stage1}}<\alpha\). This can inflate prediction sets relative to \textsc{MaxThr}, even though \(\widehat q_\Gamma(X)\) is adaptive.
\end{remark}

\begin{remark}
If one strengthened assumption (ii) to the configuration-conditional statement
\[
\mathbb P\!\left(C\in \widehat\Gamma(Z(X))\mid C=c\right)\ge 1-\beta_{\mathrm{stage2}}
\qquad\text{for every } c\in\mathcal C,
\]
then the same argument would yield
\[
\mathbb P\!\left(Y\in \widehat S_\Gamma(X)\mid C=c\right)
\ge
1-\alpha_{\mathrm{stage1}}-\beta_{\mathrm{stage2}}
\qquad\text{for every } c\in\mathcal C.
\]
We do not claim this stronger guarantee in the main paper.
\end{remark}

\subsection{Remarks and non-claims}
\label{app:remarks}

We close with several clarifications that are useful when interpreting the main results.

\begin{remark}[No universal monotone ordering over configurations]
The theory above does not assume that one inference configuration is uniformly ``harder'' or ``worse'' than another. In particular, nothing in the formal setup implies a monotone ordering of coverage across temperatures or quantization modes.
\end{remark}

\begin{remark}[Mismatch need not imply undercoverage]
The failure of exchangeability under inference-configuration mismatch means that the standard split-conformal guarantee is no longer justified. It does not imply that mismatch must always reduce coverage. Depending on the direction of the score-distribution change, mismatch may produce undercoverage, overcoverage, or only mild effects.
\end{remark}

\begin{remark}[Different baselines target different deployment questions]
\textsc{Matched}, \textsc{Pooled}, \textsc{MaxThr}, and \textsc{Latent-Robust} should not be interpreted as points on a single efficiency frontier. They correspond to different deployment assumptions and therefore to different reliability targets.
\end{remark}

\begin{remark}[Scope of the theory]
The theory in this appendix is intentionally modest. Its role is to make precise how deployment-time changes in inference configuration affect the assumptions behind standard split conformal prediction, and to clarify which guarantees remain valid under which deployment regimes. It does not attempt to provide a predictive theory of when a particular configuration change will induce undercoverage, overcoverage, or only mild effects.
\end{remark}
%%%%%%%%%%%%%%%%%%%%%%%%%%%%%%%%%

\section{More on related work}
\label{app:prior-audit}

Table~\ref{tab:prior-audit} summarizes how representative prior work relates to the
calibration--deployment inference-configuration mismatch studied in this paper. The review focuses
on whether inference configuration is treated as an experimental axis for conformal validity, rather
than on whether a paper reports every low-level generation parameter. Matched calibration is the
right target when the deployment configuration is fixed and known. Our setting instead asks whether
coverage transports when calibration and deployment differ only in inference configuration.

\begin{table}[t]
\centering
\small
\setlength{\tabcolsep}{3pt}
\renewcommand{\arraystretch}{1.12}
\caption{Representative prior work and its relation to calibration--deployment inference-configuration mismatch.}
\label{tab:prior-audit}
\begin{tabular}{p{0.22\linewidth}p{0.30\linewidth}p{0.36\linewidth}}
\toprule
Work & Primary focus & Relation to our setting \\
\midrule
Kumar et al.~\cite{kumar2023conformal}
& CP for multiple-choice QA with LLMs
& Finite-label LLM setting, but not calibration--deployment mismatch across inference configurations. \\

Quach et al.~\cite{quach2024clm}
& Conformal language modeling for generated candidate sets
& CP for generation under a fixed score-generating setup. \\

Mohri and Hashimoto~\cite{mohri2024factuality}
& Conformal factuality guarantees
& Correctness/factuality guarantees, not inference-configuration-induced score shift. \\

Cherian et al.~\cite{cherian2024llmvalidity}
& Enhanced conformal validity for LLM outputs
& Stronger validity notions for LLM outputs, but not quantization/temperature mismatch. \\

Su et al.~\cite{su2024api}
& API-only CP without logit access
& Black-box uncertainty scores with the inference interface treated as fixed. \\

Wang et al.~\cite{wang2024conu}
& Correctness coverage for open-ended generation
& Open-ended correctness coverage, not coverage transport across inference configurations. \\

Ulmer et al.~\cite{ulmer2024nonexchangeable}
& Non-exchangeable conformal language generation
& Addresses non-exchangeability in generation, but not inference configuration as the shift mechanism. \\

Wang et al.~\cite{wang2025sconu}
& Selective conformal uncertainty and outlier testing
& Tests uncertainty outliers and exchangeability violations, but not deployment-time quantization or temperature mismatch. \\

Gui et al.~\cite{gui2024alignment}
& Conformal alignment and trust selection
& Trustworthy output selection, not CP evaluation under inference-configuration mismatch. \\

Zhong et al.~\cite{zhong2025quantized}
& Calibration of quantized LLMs
& Shows quantization affects calibration, motivating our setting, but does not study split-conformal transport across calibration/deployment configurations. \\
\bottomrule
\end{tabular}
\end{table}

%%%%%%%%%%%%%%%%%%%%%%%%%%%%%%%
\section{Supplementary Experimental Details}
\label{app:exp_details}

This section records the benchmark configuration used to generate the supplementary results.

\paragraph{Reporting convention.}
All primary supplementary tables report results separately for each dataset--model pair. Every tabulated performance entry is reported as mean $\pm$ standard deviation across the three random runs. We do not average across models in the primary appendix evidence, since averaging across model families would obscure an important part of the phenomenon under study. When deployment axes are averaged, such as averaging over deployment quantization at fixed temperature or averaging over deployment temperature at fixed quantization, that aggregation is part of the experimental design and is performed within run before summarization across runs.

\subsection{Datasets and split sizes}
\label{app:datasets}

\begin{table}[t]
\centering
\small
\begin{tabular}{lccc}
\toprule
Dataset & $n_{\mathrm{cal}}$ & $n_{\mathrm{test}}$ & $n_{\mathrm{total}}$ \\
\midrule
CommonsenseQA & 6000 & 6000 & 12000 \\
DBPedia-14 & 8000 & 50000 & 58000 \\
MMLU-Pro & 2500 & 8000 & 10500 \\
MobileActions & 2500 & 5000 & 7500 \\
\bottomrule
\end{tabular}
\caption{Dataset sizes used throughout the benchmark.}
\label{tab:dataset_overview}
\end{table}

Table~\ref{tab:dataset_overview} reports the calibration and test sizes used for the four benchmark datasets. These split sizes are fixed throughout the reported experiments.

\subsection{Models, score functions, and deployment grid}
\label{app:models}

The benchmark covers eight small language models:
Qwen2.5-1.5B-Instruct,
Qwen2.5-3B-Instruct,
Qwen2.5-7B-Instruct,
DeepSeek-R1-Distill-Qwen-7B,
Gemma-2-2B-it,
Llama-3.2-3B-Instruct,
Phi-3-mini-4k-instruct,
and Mistral-7B-Instruct-v0.3.

We evaluate three nonconformity scores: APS, LAC, and RAPS. The RAPS hyperparameters are fixed to
\[
\lambda = 0.01,
\qquad
k_{\mathrm{reg}} = 1.
\]

The inference-configuration grid is the Cartesian product of
\[
\mathcal{Q}=\{\mathrm{fp16},\mathrm{int8},\mathrm{nf4}\}
\qquad\text{and}\qquad
\mathcal{T}=\{0.5,1.0,1.5,2.0\},
\]
yielding 12 total deployment configurations.

\subsection{Latent split implementation and setting}
\label{app:runs}

The main-paper latent setting uses a 50/50 Stage-1/Stage-2 split of the calibration data:
\[
d_1 = 0.5.
\]
For the default setting at $\alpha=0.1$, the results use
\[
\alpha_{\mathrm{stage1}} = 0.05,
\qquad
\beta_{\mathrm{stage2}} = 0.05.
\]
Unless otherwise stated, the main appendix tables use the default setting above.

\paragraph{Stage-2 latent-configuration predictor.}
In our implementation, the Stage-2 conformal predictor $\hat{\Gamma}$ is built using a
multinomial logistic-regression router. After the Stage-1 thresholds
$\{\hat q_c\}_{c\in\mathcal{C}}$ are fixed, we compute the set-size signature
\[
    Z(x)=\bigl(|\hat S_c(x)|\bigr)_{c\in\mathcal{C}},
\]
where
\[
    \hat S_c(x)=\{y\in\mathcal{Y}(x):A(\pi,y)\leq \hat q_c\}
\]
is the hypothetical prediction set obtained by applying the Stage-1 threshold for configuration
$c$. The router takes $Z(x)$ as input and predicts the inference configuration
$C\in\mathcal{C}$. To preserve the split-conformal interpretation of the Stage-2 configuration
set, we split the Stage-2 data into a router-training split $D_{2,\mathrm{train}}$ and a
router-calibration split $D_{2,\mathrm{cal}}$. The router is fit on
$D_{2,\mathrm{train}}$ and then held fixed when computing configuration nonconformity
scores on $D_{2,\mathrm{cal}}$. Let $\hat p(c\mid Z(x))$ denote the router's predicted
probability for configuration $c$. We use the configuration nonconformity score
\[
    B(Z(x),C)=1-\hat p(C\mid Z(x)),
\]
so that configurations assigned higher router probability have smaller nonconformity. The
Stage-2 threshold $\hat r_{\beta}$ is the standard split-conformal quantile of
\[
    \{B(Z_i,C_i):(X_i,C_i)\in D_{2,\mathrm{cal}}\}
\]
at miscoverage level $\beta_{\mathrm{stage2}}$. At test time, the plausible latent-configuration
set is
\[
    \hat{\Gamma}(Z(x))
    =
    \{c\in\mathcal{C}:1-\hat p(c\mid Z(x))\leq \hat r_{\beta}\}.
\]
The LATENT-ROBUST threshold is then
\[
    \hat q_{\Gamma}(x)=\max_{c\in\hat{\Gamma}(Z(x))}\hat q_c,
\]
with fallback to $\max_{c\in\mathcal{C}}\hat q_c$ if
$\hat{\Gamma}(Z(x))=\emptyset$.
\subsection{Quantization and temperature implementation.} \label{app:quant-temp}

For each model, we instantiate the same pretrained checkpoint under each quantization mode
$\mathcal{Q} \in \{\mathrm{fp16}, \mathrm{int8}, \mathrm{nf4}\}$. The fp16 setting uses half-precision
inference, while the int8 and nf4 settings use post-training low-bit inference following standard
LLM quantization practice \citep{dettmers2022gpt3,dettmers2023qlora}. In all cases,
quantization changes only the numerical representation used at inference time: the model checkpoint,
training procedure, prompt template, candidate labels, calibration split, and evaluation split are held
fixed. No model weights are fine-tuned or otherwise updated between inference configurations.

Temperature is applied before candidate-label scoring, at the token-logit level. Let
$\ell_{q,k}(v \mid x, y_{<k})$ denote the pre-softmax logit assigned, under quantization mode $q$,
to vocabulary token $v$ at position $k$ when scoring candidate label $y=(y_1,\ldots,y_m)$
conditioned on input $x$ and the candidate prefix $y_{<k}$. For temperature $t>0$, we form the
temperature-scaled token distribution
\[
    p_{q,t}(v \mid x, y_{<k})
    =
    \frac{\exp\!\left(\ell_{q,k}(v \mid x, y_{<k})/t\right)}
    {\sum_{v' \in \mathcal{V}}
    \exp\!\left(\ell_{q,k}(v' \mid x, y_{<k})/t\right)} ,
\]
where $\mathcal{V}$ is the model vocabulary. We then compute the candidate-label score from the
temperature-scaled token probabilities as the length-normalized log-likelihood
\[
    s_{q,t}(y \mid x)
    =
    \frac{1}{m}
    \sum_{k=1}^{m}
    \log p_{q,t}(y_k \mid x, y_{<k}) .
\]
The finite-label predictive distribution used by conformal prediction is obtained by exponentiating
and normalizing these candidate-label log-scores over the candidate set,
\[
    \pi_{q,t}(y \mid x)
    =
    \frac{\exp\!\left(s_{q,t}(y \mid x)\right)}
    {\sum_{y' \in \mathcal{Y}(x)}
    \exp\!\left(s_{q,t}(y' \mid x)\right)} ,
    \qquad y \in \mathcal{Y}(x).
\]
Thus, changing $t$ changes the token-level predictive distribution before candidate-label likelihoods
are computed, while changing $q$ changes the numerical representation used to compute the logits.
The conformal score is then computed from $\pi_{q,t}(\cdot \mid x)$ and the true label. Therefore,
changing $(q,t)$ changes the inference-time score-generating procedure while leaving the task
distribution, trained checkpoint, prompt template, candidate labels, and data splits unchanged.

\subsection{Compute resources}
All experiments were run on a local workstation equipped with an AMD Ryzen Threadripper PRO CPU and a single NVIDIA RTX A6000 GPU with 48GB of GPU memory. Model inference, including fp16, int8, and nf4 configurations, was executed on the GPU. CPU resources were used for data loading, preprocessing, conformal calibration, the logistic-regression router used by LATENT-ROBUST, and result aggregation. The experiments were run sequentially over the dataset--model--configuration grid.

\section{Additional Main-Benchmark Results}
\label{app:main_results}

Unless otherwise stated, all tables in this section use LAC with $\alpha=0.1$ and the main-paper latent setting
\[
\alpha_{\mathrm{stage1}}=\beta_{\mathrm{stage2}}=\alpha/2,
\qquad
d_1=0.5.
\]
Each performance entry reports mean $\pm$ standard deviation across runs for coverage and set size.

\subsection{Deployment fixed to \texorpdfstring{$\mathrm{nf4}@T=0.5$}{nf4@T=0.5}}
\label{app:exp1}

\begin{table*}[t]
\centering
\centerline{
\resizebox{1.4\textwidth}{!}{%
\begin{tabular}{lcccccccccccc}
\toprule
Model & \multicolumn{2}{c}{Matched} & \multicolumn{2}{c}{Mismatch mean} & \multicolumn{2}{c}{Mismatch worst} & \multicolumn{2}{c}{Pooled} & \multicolumn{2}{c}{MaxThr} & \multicolumn{2}{c}{Latent-robust} \\
\cmidrule(lr){2-3} \cmidrule(lr){4-5} \cmidrule(lr){6-7} \cmidrule(lr){8-9} \cmidrule(lr){10-11} \cmidrule(lr){12-13}
 & Cov. & Size & Cov. & Size & Cov. & Size & Cov. & Size & Cov. & Size & Cov. & Size \\
\midrule
Qwen2.5-1.5B & 0.915 $\pm$ 0.023 & 1.876 $\pm$ 0.149 & 0.832 $\pm$ 0.014 & 1.339 $\pm$ 0.044 & 0.797 $\pm$ 0.016 & 1.166 $\pm$ 0.020 & 0.823 $\pm$ 0.006 & 1.255 $\pm$ 0.010 & 0.911 $\pm$ 0.024 & 1.850 $\pm$ 0.149 & 0.962 $\pm$ 0.023 & 2.652 $\pm$ 0.402 \\
Qwen2.5-3B & 0.913 $\pm$ 0.013 & 1.489 $\pm$ 0.049 & 0.846 $\pm$ 0.011 & 1.187 $\pm$ 0.011 & 0.821 $\pm$ 0.008 & 1.086 $\pm$ 0.008 & 0.838 $\pm$ 0.015 & 1.154 $\pm$ 0.019 & 0.910 $\pm$ 0.014 & 1.474 $\pm$ 0.048 & 0.998 $\pm$ 0.002 & 4.957 $\pm$ 0.012 \\
Qwen2.5-7B & 0.907 $\pm$ 0.005 & 1.369 $\pm$ 0.010 & 0.866 $\pm$ 0.012 & 1.153 $\pm$ 0.037 & 0.844 $\pm$ 0.009 & 1.074 $\pm$ 0.020 & 0.861 $\pm$ 0.011 & 1.120 $\pm$ 0.041 & 0.905 $\pm$ 0.015 & 1.353 $\pm$ 0.072 & 0.954 $\pm$ 0.015 & 2.007 $\pm$ 0.198 \\
DeepSeek-R1-7B & 0.895 $\pm$ 0.003 & 2.637 $\pm$ 0.029 & 0.759 $\pm$ 0.008 & 1.675 $\pm$ 0.036 & 0.708 $\pm$ 0.007 & 1.374 $\pm$ 0.022 & 0.752 $\pm$ 0.008 & 1.591 $\pm$ 0.005 & 0.892 $\pm$ 0.004 & 2.616 $\pm$ 0.027 & 0.952 $\pm$ 0.013 & 3.404 $\pm$ 0.198 \\
Gemma-2-2B & 0.919 $\pm$ 0.017 & 2.544 $\pm$ 0.193 & 0.816 $\pm$ 0.013 & 1.655 $\pm$ 0.092 & 0.755 $\pm$ 0.011 & 1.208 $\pm$ 0.037 & 0.799 $\pm$ 0.014 & 1.430 $\pm$ 0.080 & 0.943 $\pm$ 0.014 & 2.956 $\pm$ 0.196 & 0.973 $\pm$ 0.008 & 3.573 $\pm$ 0.139 \\
Llama-3.2-3B & 0.919 $\pm$ 0.012 & 1.993 $\pm$ 0.045 & 0.841 $\pm$ 0.014 & 1.470 $\pm$ 0.020 & 0.805 $\pm$ 0.014 & 1.252 $\pm$ 0.003 & 0.830 $\pm$ 0.013 & 1.372 $\pm$ 0.027 & 0.923 $\pm$ 0.016 & 2.064 $\pm$ 0.116 & 0.967 $\pm$ 0.013 & 2.826 $\pm$ 0.223 \\
Phi-3-mini & 0.908 $\pm$ 0.010 & 1.885 $\pm$ 0.061 & 0.830 $\pm$ 0.007 & 1.406 $\pm$ 0.024 & 0.800 $\pm$ 0.007 & 1.244 $\pm$ 0.029 & 0.822 $\pm$ 0.003 & 1.355 $\pm$ 0.021 & 0.908 $\pm$ 0.010 & 1.887 $\pm$ 0.060 & 0.957 $\pm$ 0.016 & 2.533 $\pm$ 0.273 \\
Mistral-7B & 0.921 $\pm$ 0.007 & 2.504 $\pm$ 0.046 & 0.829 $\pm$ 0.001 & 1.619 $\pm$ 0.015 & 0.781 $\pm$ 0.003 & 1.262 $\pm$ 0.006 & 0.819 $\pm$ 0.009 & 1.475 $\pm$ 0.076 & 0.926 $\pm$ 0.008 & 2.557 $\pm$ 0.009 & 0.970 $\pm$ 0.007 & 3.429 $\pm$ 0.047 \\
\bottomrule
\end{tabular}%
}
}
\caption{Coverage and average prediction-set size for CommonsenseQA with deployment fixed to $\mathrm{nf4}@T=0.5$, using LAC with $\alpha=0.1$. Each entry reports mean $\pm$ standard deviation across runs.}
\label{tab:exp1_commonsense_qa}
\end{table*}
\begin{table*}[t]
\centering
\centerline{
\resizebox{1.4\textwidth}{!}{%
\begin{tabular}{lcccccccccccc}
\toprule
Model & \multicolumn{2}{c}{Matched} & \multicolumn{2}{c}{Mismatch mean} & \multicolumn{2}{c}{Mismatch worst} & \multicolumn{2}{c}{Pooled} & \multicolumn{2}{c}{MaxThr} & \multicolumn{2}{c}{Latent-robust} \\
\cmidrule(lr){2-3} \cmidrule(lr){4-5} \cmidrule(lr){6-7} \cmidrule(lr){8-9} \cmidrule(lr){10-11} \cmidrule(lr){12-13}
 & Cov. & Size & Cov. & Size & Cov. & Size & Cov. & Size & Cov. & Size & Cov. & Size \\
\midrule
Qwen2.5-1.5B & 0.899 $\pm$ 0.006 & 1.231 $\pm$ 0.042 & 0.884 $\pm$ 0.002 & 1.139 $\pm$ 0.013 & 0.876 $\pm$ 0.002 & 1.098 $\pm$ 0.007 & 0.883 $\pm$ 0.001 & 1.130 $\pm$ 0.004 & 0.901 $\pm$ 0.004 & 1.254 $\pm$ 0.033 & 0.947 $\pm$ 0.008 & 2.505 $\pm$ 0.319 \\
Qwen2.5-3B & 0.900 $\pm$ 0.002 & 1.659 $\pm$ 0.019 & 0.795 $\pm$ 0.003 & 1.168 $\pm$ 0.007 & 0.757 $\pm$ 0.001 & 1.061 $\pm$ 0.003 & 0.803 $\pm$ 0.006 & 1.185 $\pm$ 0.016 & 0.900 $\pm$ 0.002 & 1.658 $\pm$ 0.018 & 0.946 $\pm$ 0.002 & 2.342 $\pm$ 0.034 \\
Qwen2.5-7B & 0.908 $\pm$ 0.000 & 0.916 $\pm$ 0.000 & 0.946 $\pm$ 0.001 & 0.966 $\pm$ 0.000 & 0.881 $\pm$ 0.001 & 0.888 $\pm$ 0.000 & 0.964 $\pm$ 0.000 & 0.988 $\pm$ 0.000 & 0.965 $\pm$ 0.000 & 0.991 $\pm$ 0.000 & 0.966 $\pm$ 0.000 & 0.993 $\pm$ 0.000 \\
DeepSeek-R1-7B & 0.902 $\pm$ 0.004 & 1.701 $\pm$ 0.036 & 0.871 $\pm$ 0.002 & 1.472 $\pm$ 0.012 & 0.838 $\pm$ 0.002 & 1.251 $\pm$ 0.004 & 0.875 $\pm$ 0.001 & 1.452 $\pm$ 0.009 & 0.924 $\pm$ 0.002 & 2.024 $\pm$ 0.025 & 0.956 $\pm$ 0.003 & 3.026 $\pm$ 0.200 \\
Gemma-2-2B & 0.902 $\pm$ 0.004 & 3.485 $\pm$ 0.134 & 0.799 $\pm$ 0.001 & 2.251 $\pm$ 0.046 & 0.744 $\pm$ 0.000 & 1.232 $\pm$ 0.002 & 0.782 $\pm$ 0.000 & 1.611 $\pm$ 0.003 & 0.940 $\pm$ 0.004 & 5.953 $\pm$ 0.237 & 0.987 $\pm$ 0.005 & 10.787 $\pm$ 0.672 \\
Llama-3.2-3B & 0.899 $\pm$ 0.002 & 1.750 $\pm$ 0.024 & 0.829 $\pm$ 0.001 & 1.216 $\pm$ 0.005 & 0.815 $\pm$ 0.000 & 1.152 $\pm$ 0.002 & 0.833 $\pm$ 0.000 & 1.226 $\pm$ 0.002 & 0.898 $\pm$ 0.002 & 1.748 $\pm$ 0.026 & 0.947 $\pm$ 0.004 & 3.576 $\pm$ 0.266 \\
Phi-3-mini & 0.898 $\pm$ 0.001 & 0.970 $\pm$ 0.002 & 0.888 $\pm$ 0.001 & 0.955 $\pm$ 0.001 & 0.837 $\pm$ 0.003 & 0.878 $\pm$ 0.004 & 0.906 $\pm$ 0.000 & 0.983 $\pm$ 0.000 & 0.921 $\pm$ 0.000 & 1.015 $\pm$ 0.000 & 0.937 $\pm$ 0.004 & 1.060 $\pm$ 0.013 \\
Mistral-7B & 0.901 $\pm$ 0.004 & 1.133 $\pm$ 0.012 & 0.892 $\pm$ 0.002 & 1.115 $\pm$ 0.007 & 0.870 $\pm$ 0.003 & 1.045 $\pm$ 0.006 & 0.891 $\pm$ 0.003 & 1.103 $\pm$ 0.005 & 0.926 $\pm$ 0.004 & 1.264 $\pm$ 0.023 & 0.961 $\pm$ 0.004 & 2.088 $\pm$ 0.186 \\
\bottomrule
\end{tabular}%
}
}
\caption{Coverage and average prediction-set size for DBPedia-14 with deployment fixed to $\mathrm{nf4}@T=0.5$, using LAC with $\alpha=0.1$. Each entry reports mean $\pm$ standard deviation across runs.}
\label{tab:exp1_dbpedia_14}
\end{table*}
\begin{table*}[t]
\centering
%\small
\centerline{
\resizebox{1.4\textwidth}{!}{%
\begin{tabular}{lcccccccccccc}
\toprule
Model & \multicolumn{2}{c}{Matched} & \multicolumn{2}{c}{Mismatch mean} & \multicolumn{2}{c}{Mismatch worst} & \multicolumn{2}{c}{Pooled} & \multicolumn{2}{c}{MaxThr} & \multicolumn{2}{c}{Latent-robust} \\
\cmidrule(lr){2-3} \cmidrule(lr){4-5} \cmidrule(lr){6-7} \cmidrule(lr){8-9} \cmidrule(lr){10-11} \cmidrule(lr){12-13}
 & Cov. & Size & Cov. & Size & Cov. & Size & Cov. & Size & Cov. & Size & Cov. & Size \\
\midrule
Qwen2.5-1.5B & 0.901 $\pm$ 0.009 & 7.765 $\pm$ 0.072 & 0.597 $\pm$ 0.005 & 4.293 $\pm$ 0.019 & 0.468 $\pm$ 0.003 & 2.876 $\pm$ 0.004 & 0.650 $\pm$ 0.025 & 4.829 $\pm$ 0.236 & 0.908 $\pm$ 0.007 & 7.860 $\pm$ 0.043 & 0.954 $\pm$ 0.003 & 8.533 $\pm$ 0.022 \\
Qwen2.5-3B & 0.899 $\pm$ 0.011 & 6.963 $\pm$ 0.139 & 0.628 $\pm$ 0.009 & 3.870 $\pm$ 0.175 & 0.480 $\pm$ 0.004 & 2.261 $\pm$ 0.033 & 0.670 $\pm$ 0.004 & 4.095 $\pm$ 0.063 & 0.967 $\pm$ 0.056 & 8.655 $\pm$ 1.406 & 1.000 $\pm$ 0.000 & 9.464 $\pm$ 0.017 \\
Qwen2.5-7B & 0.899 $\pm$ 0.007 & 6.167 $\pm$ 0.092 & 0.678 $\pm$ 0.002 & 3.454 $\pm$ 0.021 & 0.563 $\pm$ 0.002 & 2.286 $\pm$ 0.005 & 0.706 $\pm$ 0.009 & 3.586 $\pm$ 0.082 & 0.898 $\pm$ 0.008 & 6.148 $\pm$ 0.092 & 1.000 $\pm$ 0.001 & 9.461 $\pm$ 0.018 \\
DeepSeek-R1-7B & 0.904 $\pm$ 0.006 & 7.643 $\pm$ 0.050 & 0.583 $\pm$ 0.007 & 3.949 $\pm$ 0.056 & 0.443 $\pm$ 0.005 & 2.508 $\pm$ 0.032 & 0.651 $\pm$ 0.017 & 4.462 $\pm$ 0.165 & 0.908 $\pm$ 0.009 & 7.719 $\pm$ 0.099 & 0.958 $\pm$ 0.007 & 8.491 $\pm$ 0.095 \\
Gemma-2-2B & 0.899 $\pm$ 0.003 & 7.495 $\pm$ 0.062 & 0.663 $\pm$ 0.002 & 4.717 $\pm$ 0.028 & 0.507 $\pm$ 0.003 & 3.037 $\pm$ 0.020 & 0.727 $\pm$ 0.001 & 5.293 $\pm$ 0.020 & 0.935 $\pm$ 0.005 & 8.028 $\pm$ 0.074 & 0.972 $\pm$ 0.001 & 8.656 $\pm$ 0.033 \\
Llama-3.2-3B & 0.902 $\pm$ 0.003 & 7.239 $\pm$ 0.037 & 0.644 $\pm$ 0.004 & 4.213 $\pm$ 0.048 & 0.525 $\pm$ 0.006 & 2.997 $\pm$ 0.050 & 0.694 $\pm$ 0.009 & 4.629 $\pm$ 0.100 & 0.902 $\pm$ 0.003 & 7.232 $\pm$ 0.043 & 0.957 $\pm$ 0.004 & 8.152 $\pm$ 0.075 \\
Phi-3-mini & 0.904 $\pm$ 0.003 & 6.374 $\pm$ 0.027 & 0.671 $\pm$ 0.004 & 3.601 $\pm$ 0.034 & 0.568 $\pm$ 0.005 & 2.499 $\pm$ 0.028 & 0.696 $\pm$ 0.006 & 3.745 $\pm$ 0.058 & 0.903 $\pm$ 0.003 & 6.362 $\pm$ 0.032 & 0.957 $\pm$ 0.004 & 7.425 $\pm$ 0.112 \\
Mistral-7B & 0.906 $\pm$ 0.008 & 7.217 $\pm$ 0.087 & 0.650 $\pm$ 0.004 & 3.992 $\pm$ 0.012 & 0.510 $\pm$ 0.006 & 2.421 $\pm$ 0.012 & 0.694 $\pm$ 0.005 & 4.305 $\pm$ 0.045 & 0.927 $\pm$ 0.004 & 7.605 $\pm$ 0.023 & 0.968 $\pm$ 0.001 & 8.425 $\pm$ 0.061 \\
\bottomrule
\end{tabular}%
}
}
\caption{Coverage and average prediction-set size for MMLU-Pro with deployment fixed to $\mathrm{nf4}@T=0.5$, using LAC with $\alpha=0.1$. Each entry reports mean $\pm$ standard deviation across runs.}
\label{tab:exp1_mmlu_pro}
\end{table*}
\begin{table*}[t]
\centering
\centerline{
\resizebox{1.4\textwidth}{!}{%
\begin{tabular}{lcccccccccccc}
\toprule
Model & \multicolumn{2}{c}{Matched} & \multicolumn{2}{c}{Mismatch mean} & \multicolumn{2}{c}{Mismatch worst} & \multicolumn{2}{c}{Pooled} & \multicolumn{2}{c}{MaxThr} & \multicolumn{2}{c}{Latent-robust} \\
\cmidrule(lr){2-3} \cmidrule(lr){4-5} \cmidrule(lr){6-7} \cmidrule(lr){8-9} \cmidrule(lr){10-11} \cmidrule(lr){12-13}
 & Cov. & Size & Cov. & Size & Cov. & Size & Cov. & Size & Cov. & Size & Cov. & Size \\
\midrule
Qwen2.5-1.5B & 0.907 $\pm$ 0.000 & 0.975 $\pm$ 0.002 & 0.905 $\pm$ 0.002 & 0.974 $\pm$ 0.001 & 0.866 $\pm$ 0.003 & 0.909 $\pm$ 0.005 & 0.915 $\pm$ 0.001 & 0.990 $\pm$ 0.004 & 0.927 $\pm$ 0.000 & 1.012 $\pm$ 0.002 & 0.940 $\pm$ 0.010 & 1.049 $\pm$ 0.029 \\
Qwen2.5-3B & 0.898 $\pm$ 0.006 & 1.017 $\pm$ 0.010 & 0.899 $\pm$ 0.002 & 1.019 $\pm$ 0.003 & 0.893 $\pm$ 0.001 & 1.006 $\pm$ 0.001 & 0.897 $\pm$ 0.002 & 1.014 $\pm$ 0.003 & 0.913 $\pm$ 0.004 & 1.052 $\pm$ 0.008 & 0.961 $\pm$ 0.001 & 1.265 $\pm$ 0.008 \\
Qwen2.5-7B & 0.902 $\pm$ 0.003 & 0.919 $\pm$ 0.003 & 0.939 $\pm$ 0.002 & 0.969 $\pm$ 0.003 & 0.911 $\pm$ 0.009 & 0.930 $\pm$ 0.012 & 0.944 $\pm$ 0.001 & 0.976 $\pm$ 0.000 & 0.950 $\pm$ 0.001 & 0.986 $\pm$ 0.002 & 0.955 $\pm$ 0.001 & 0.998 $\pm$ 0.000 \\
DeepSeek-R1-7B & 0.899 $\pm$ 0.003 & 1.119 $\pm$ 0.003 & 0.880 $\pm$ 0.000 & 1.072 $\pm$ 0.005 & 0.871 $\pm$ 0.001 & 1.049 $\pm$ 0.003 & 0.879 $\pm$ 0.002 & 1.067 $\pm$ 0.001 & 0.900 $\pm$ 0.001 & 1.121 $\pm$ 0.003 & 0.946 $\pm$ 0.006 & 1.277 $\pm$ 0.035 \\
Gemma-2-2B & 0.900 $\pm$ 0.001 & 1.202 $\pm$ 0.009 & 0.870 $\pm$ 0.002 & 1.070 $\pm$ 0.002 & 0.857 $\pm$ 0.002 & 1.030 $\pm$ 0.002 & 0.868 $\pm$ 0.003 & 1.053 $\pm$ 0.005 & 0.899 $\pm$ 0.001 & 1.195 $\pm$ 0.008 & 0.942 $\pm$ 0.003 & 1.674 $\pm$ 0.014 \\
Llama-3.2-3B & 0.902 $\pm$ 0.001 & 1.169 $\pm$ 0.007 & 0.871 $\pm$ 0.001 & 1.081 $\pm$ 0.007 & 0.858 $\pm$ 0.001 & 1.048 $\pm$ 0.008 & 0.865 $\pm$ 0.001 & 1.066 $\pm$ 0.004 & 0.903 $\pm$ 0.001 & 1.170 $\pm$ 0.006 & 0.943 $\pm$ 0.007 & 1.451 $\pm$ 0.084 \\
Phi-3-mini & 0.904 $\pm$ 0.006 & 0.999 $\pm$ 0.008 & 0.909 $\pm$ 0.003 & 1.009 $\pm$ 0.001 & 0.906 $\pm$ 0.007 & 1.002 $\pm$ 0.010 & 0.909 $\pm$ 0.002 & 1.009 $\pm$ 0.000 & 0.909 $\pm$ 0.002 & 1.009 $\pm$ 0.000 & 0.940 $\pm$ 0.007 & 1.086 $\pm$ 0.015 \\
Mistral-7B & 0.899 $\pm$ 0.005 & 0.986 $\pm$ 0.007 & 0.902 $\pm$ 0.003 & 0.993 $\pm$ 0.002 & 0.897 $\pm$ 0.003 & 0.983 $\pm$ 0.003 & 0.903 $\pm$ 0.002 & 0.994 $\pm$ 0.004 & 0.904 $\pm$ 0.003 & 0.997 $\pm$ 0.001 & 0.951 $\pm$ 0.005 & 1.166 $\pm$ 0.016 \\
\bottomrule
\end{tabular}%
}
}
\caption{Coverage and average prediction-set size for MobileActions with deployment fixed to $\mathrm{nf4}@T=0.5$, using LAC with $\alpha=0.1$. Each entry reports mean $\pm$ standard deviation across runs.}
\label{tab:exp1_mobile_actions}
\end{table*}

Tables~\ref{tab:exp1_commonsense_qa}--\ref{tab:exp1_mobile_actions} correspond to the main deployment setting emphasized in the paper. Several points are visible at the per-model level. First, the matched baseline generally remains close to target, consistent with the standard split-conformal setting. Second, the degradation under calibration--deployment mismatch is often substantial, but its magnitude varies sharply across model families. Third, the strongest failures remain concentrated in MMLU-Pro, where Table~\ref{tab:exp1_mmlu_pro} shows that multiple models exhibit large drops from \textsc{Matched} to \textsc{Mismatch Mean} and especially to \textsc{Mismatch Worst}. By contrast, Table~\ref{tab:exp1_mobile_actions} shows that MobileActions is more stable for many models, though worst-case mismatch can still fall below target.

The merged presentation in Tables~\ref{tab:exp1_commonsense_qa}--\ref{tab:exp1_mobile_actions} also makes the reliability--efficiency tradeoff explicit. In settings where \textsc{MaxThr} and \textsc{Latent-Robust} recover coverage, they generally do so by increasing average prediction-set size. This is especially visible on the harder tasks, where robust coverage recovery is paired with a nontrivial efficiency cost. The attached standard deviations are typically small relative to the method gaps, indicating that the qualitative conclusions are not driven by a single random split.

\subsection{Deployment temperature fixed to \texorpdfstring{$T=0.5$}{T=0.5}}
\label{app:exp2}

\begin{table*}[t]
\centering
\centerline{
\resizebox{1.4\textwidth}{!}{%
\begin{tabular}{lcccccccccccc}
\toprule
Model & \multicolumn{2}{c}{Matched} & \multicolumn{2}{c}{Mismatch mean} & \multicolumn{2}{c}{Mismatch worst} & \multicolumn{2}{c}{Pooled} & \multicolumn{2}{c}{MaxThr} & \multicolumn{2}{c}{Latent-robust} \\
\cmidrule(lr){2-3} \cmidrule(lr){4-5} \cmidrule(lr){6-7} \cmidrule(lr){8-9} \cmidrule(lr){10-11} \cmidrule(lr){12-13}
 & Cov. & Size & Cov. & Size & Cov. & Size & Cov. & Size & Cov. & Size & Cov. & Size \\
\midrule
Qwen2.5-1.5B & 0.914 $\pm$ 0.020 & 1.765 $\pm$ 0.129 & 0.846 $\pm$ 0.011 & 1.344 $\pm$ 0.044 & 0.814 $\pm$ 0.010 & 1.181 $\pm$ 0.015 & 0.834 $\pm$ 0.007 & 1.269 $\pm$ 0.017 & 0.916 $\pm$ 0.021 & 1.788 $\pm$ 0.150 & 0.957 $\pm$ 0.020 & 2.456 $\pm$ 0.353 \\
Qwen2.5-3B & 0.907 $\pm$ 0.015 & 1.422 $\pm$ 0.047 & 0.857 $\pm$ 0.009 & 1.184 $\pm$ 0.011 & 0.831 $\pm$ 0.008 & 1.079 $\pm$ 0.008 & 0.851 $\pm$ 0.012 & 1.150 $\pm$ 0.020 & 0.915 $\pm$ 0.013 & 1.464 $\pm$ 0.032 & 0.997 $\pm$ 0.001 & 4.957 $\pm$ 0.017 \\
Qwen2.5-7B & 0.908 $\pm$ 0.014 & 1.303 $\pm$ 0.051 & 0.872 $\pm$ 0.013 & 1.138 $\pm$ 0.029 & 0.850 $\pm$ 0.009 & 1.066 $\pm$ 0.016 & 0.867 $\pm$ 0.017 & 1.109 $\pm$ 0.036 & 0.908 $\pm$ 0.016 & 1.307 $\pm$ 0.065 & 0.956 $\pm$ 0.013 & 1.907 $\pm$ 0.194 \\
DeepSeek-R1-7B & 0.901 $\pm$ 0.010 & 2.657 $\pm$ 0.064 & 0.774 $\pm$ 0.013 & 1.758 $\pm$ 0.042 & 0.719 $\pm$ 0.011 & 1.418 $\pm$ 0.031 & 0.769 $\pm$ 0.012 & 1.664 $\pm$ 0.008 & 0.906 $\pm$ 0.004 & 2.744 $\pm$ 0.038 & 0.950 $\pm$ 0.009 & 3.521 $\pm$ 0.200 \\
Gemma-2-2B & 0.916 $\pm$ 0.013 & 2.445 $\pm$ 0.186 & 0.807 $\pm$ 0.012 & 1.516 $\pm$ 0.073 & 0.759 $\pm$ 0.011 & 1.176 $\pm$ 0.030 & 0.790 $\pm$ 0.011 & 1.346 $\pm$ 0.064 & 0.925 $\pm$ 0.013 & 2.618 $\pm$ 0.187 & 0.958 $\pm$ 0.014 & 3.233 $\pm$ 0.176 \\
Llama-3.2-3B & 0.918 $\pm$ 0.013 & 1.938 $\pm$ 0.045 & 0.840 $\pm$ 0.015 & 1.432 $\pm$ 0.021 & 0.807 $\pm$ 0.016 & 1.240 $\pm$ 0.013 & 0.828 $\pm$ 0.013 & 1.342 $\pm$ 0.028 & 0.919 $\pm$ 0.018 & 1.967 $\pm$ 0.107 & 0.965 $\pm$ 0.011 & 2.661 $\pm$ 0.191 \\
Phi-3-mini & 0.911 $\pm$ 0.009 & 1.779 $\pm$ 0.039 & 0.844 $\pm$ 0.009 & 1.377 $\pm$ 0.025 & 0.813 $\pm$ 0.008 & 1.220 $\pm$ 0.024 & 0.839 $\pm$ 0.006 & 1.331 $\pm$ 0.020 & 0.913 $\pm$ 0.010 & 1.810 $\pm$ 0.051 & 0.956 $\pm$ 0.014 & 2.474 $\pm$ 0.270 \\
Mistral-7B & 0.918 $\pm$ 0.005 & 2.498 $\pm$ 0.039 & 0.822 $\pm$ 0.002 & 1.597 $\pm$ 0.016 & 0.776 $\pm$ 0.003 & 1.266 $\pm$ 0.012 & 0.813 $\pm$ 0.009 & 1.454 $\pm$ 0.058 & 0.919 $\pm$ 0.005 & 2.504 $\pm$ 0.015 & 0.964 $\pm$ 0.006 & 3.329 $\pm$ 0.031 \\
\bottomrule
\end{tabular}%
}
}
\caption{Coverage and average prediction-set size for CommonsenseQA with deployment temperature fixed to $T=0.5$ and deployment quantization averaged within run, using LAC with $\alpha=0.1$. Each entry reports mean $\pm$ standard deviation across runs.}
\label{tab:exp2_commonsense_qa}
\end{table*}
\begin{table*}[t]
\centering
\centerline{
\resizebox{1.4\textwidth}{!}{%
\begin{tabular}{lcccccccccccc}
\toprule
Model & \multicolumn{2}{c}{Matched} & \multicolumn{2}{c}{Mismatch mean} & \multicolumn{2}{c}{Mismatch worst} & \multicolumn{2}{c}{Pooled} & \multicolumn{2}{c}{MaxThr} & \multicolumn{2}{c}{Latent-robust} \\
\cmidrule(lr){2-3} \cmidrule(lr){4-5} \cmidrule(lr){6-7} \cmidrule(lr){8-9} \cmidrule(lr){10-11} \cmidrule(lr){12-13}
 & Cov. & Size & Cov. & Size & Cov. & Size & Cov. & Size & Cov. & Size & Cov. & Size \\
\midrule
Qwen2.5-1.5B & 0.899 $\pm$ 0.005 & 1.196 $\pm$ 0.027 & 0.885 $\pm$ 0.003 & 1.121 $\pm$ 0.010 & 0.878 $\pm$ 0.002 & 1.087 $\pm$ 0.006 & 0.884 $\pm$ 0.001 & 1.114 $\pm$ 0.004 & 0.902 $\pm$ 0.004 & 1.214 $\pm$ 0.025 & 0.948 $\pm$ 0.006 & 2.066 $\pm$ 0.207 \\
Qwen2.5-3B & 0.899 $\pm$ 0.002 & 1.306 $\pm$ 0.011 & 0.861 $\pm$ 0.002 & 1.140 $\pm$ 0.006 & 0.834 $\pm$ 0.001 & 1.047 $\pm$ 0.003 & 0.864 $\pm$ 0.004 & 1.135 $\pm$ 0.012 & 0.929 $\pm$ 0.001 & 1.494 $\pm$ 0.014 & 0.958 $\pm$ 0.001 & 2.023 $\pm$ 0.030 \\
Qwen2.5-7B & 0.912 $\pm$ 0.001 & 0.921 $\pm$ 0.000 & 0.952 $\pm$ 0.001 & 0.971 $\pm$ 0.000 & 0.903 $\pm$ 0.001 & 0.911 $\pm$ 0.000 & 0.965 $\pm$ 0.000 & 0.988 $\pm$ 0.000 & 0.966 $\pm$ 0.000 & 0.990 $\pm$ 0.000 & 0.967 $\pm$ 0.000 & 0.993 $\pm$ 0.000 \\
DeepSeek-R1-7B & 0.903 $\pm$ 0.003 & 1.868 $\pm$ 0.022 & 0.833 $\pm$ 0.003 & 1.482 $\pm$ 0.012 & 0.790 $\pm$ 0.003 & 1.285 $\pm$ 0.006 & 0.840 $\pm$ 0.001 & 1.481 $\pm$ 0.010 & 0.913 $\pm$ 0.002 & 2.014 $\pm$ 0.024 & 0.952 $\pm$ 0.004 & 3.132 $\pm$ 0.237 \\
Gemma-2-2B & 0.902 $\pm$ 0.003 & 3.926 $\pm$ 0.157 & 0.777 $\pm$ 0.002 & 1.827 $\pm$ 0.033 & 0.731 $\pm$ 0.001 & 1.179 $\pm$ 0.001 & 0.760 $\pm$ 0.001 & 1.400 $\pm$ 0.002 & 0.916 $\pm$ 0.003 & 4.822 $\pm$ 0.196 & 0.966 $\pm$ 0.008 & 9.508 $\pm$ 0.733 \\
Llama-3.2-3B & 0.900 $\pm$ 0.001 & 1.367 $\pm$ 0.002 & 0.876 $\pm$ 0.000 & 1.201 $\pm$ 0.002 & 0.861 $\pm$ 0.001 & 1.118 $\pm$ 0.003 & 0.877 $\pm$ 0.000 & 1.184 $\pm$ 0.003 & 0.926 $\pm$ 0.001 & 1.650 $\pm$ 0.023 & 0.963 $\pm$ 0.003 & 3.282 $\pm$ 0.250 \\
Phi-3-mini & 0.899 $\pm$ 0.002 & 0.945 $\pm$ 0.002 & 0.920 $\pm$ 0.001 & 0.975 $\pm$ 0.001 & 0.879 $\pm$ 0.003 & 0.915 $\pm$ 0.004 & 0.930 $\pm$ 0.000 & 0.990 $\pm$ 0.001 & 0.938 $\pm$ 0.000 & 1.008 $\pm$ 0.000 & 0.948 $\pm$ 0.002 & 1.035 $\pm$ 0.007 \\
Mistral-7B & 0.900 $\pm$ 0.004 & 1.259 $\pm$ 0.017 & 0.866 $\pm$ 0.003 & 1.122 $\pm$ 0.007 & 0.842 $\pm$ 0.003 & 1.049 $\pm$ 0.005 & 0.867 $\pm$ 0.003 & 1.115 $\pm$ 0.005 & 0.909 $\pm$ 0.004 & 1.306 $\pm$ 0.025 & 0.953 $\pm$ 0.004 & 2.141 $\pm$ 0.174 \\
\bottomrule
\end{tabular}%
}
}
\caption{Coverage and average prediction-set size for DBPedia-14 with deployment temperature fixed to $T=0.5$ and deployment quantization averaged within run, using LAC with $\alpha=0.1$. Each entry reports mean $\pm$ standard deviation across runs.}
\label{tab:exp2_dbpedia_14}
\end{table*}
\begin{table*}[t]
\centering
\centerline{
\resizebox{1.4\textwidth}{!}{%
\begin{tabular}{lcccccccccccc}
\toprule
Model & \multicolumn{2}{c}{Matched} & \multicolumn{2}{c}{Mismatch mean} & \multicolumn{2}{c}{Mismatch worst} & \multicolumn{2}{c}{Pooled} & \multicolumn{2}{c}{MaxThr} & \multicolumn{2}{c}{Latent-robust} \\
\cmidrule(lr){2-3} \cmidrule(lr){4-5} \cmidrule(lr){6-7} \cmidrule(lr){8-9} \cmidrule(lr){10-11} \cmidrule(lr){12-13}
 & Cov. & Size & Cov. & Size & Cov. & Size & Cov. & Size & Cov. & Size & Cov. & Size \\
\midrule
Qwen2.5-1.5B & 0.902 $\pm$ 0.006 & 7.548 $\pm$ 0.037 & 0.612 $\pm$ 0.006 & 4.179 $\pm$ 0.023 & 0.484 $\pm$ 0.006 & 2.770 $\pm$ 0.013 & 0.671 $\pm$ 0.023 & 4.761 $\pm$ 0.223 & 0.907 $\pm$ 0.006 & 7.629 $\pm$ 0.046 & 0.955 $\pm$ 0.003 & 8.385 $\pm$ 0.023 \\
Qwen2.5-3B & 0.923 $\pm$ 0.026 & 7.447 $\pm$ 0.577 & 0.642 $\pm$ 0.007 & 3.866 $\pm$ 0.140 & 0.501 $\pm$ 0.003 & 2.299 $\pm$ 0.036 & 0.686 $\pm$ 0.004 & 4.156 $\pm$ 0.059 & 0.969 $\pm$ 0.053 & 8.657 $\pm$ 1.404 & 1.000 $\pm$ 0.000 & 9.464 $\pm$ 0.017 \\
Qwen2.5-7B & 0.899 $\pm$ 0.005 & 6.070 $\pm$ 0.069 & 0.680 $\pm$ 0.003 & 3.389 $\pm$ 0.022 & 0.565 $\pm$ 0.002 & 2.204 $\pm$ 0.005 & 0.709 $\pm$ 0.009 & 3.510 $\pm$ 0.081 & 0.902 $\pm$ 0.007 & 6.114 $\pm$ 0.090 & 1.000 $\pm$ 0.001 & 9.459 $\pm$ 0.020 \\
DeepSeek-R1-7B & 0.908 $\pm$ 0.009 & 7.682 $\pm$ 0.080 & 0.591 $\pm$ 0.007 & 4.016 $\pm$ 0.049 & 0.454 $\pm$ 0.004 & 2.580 $\pm$ 0.027 & 0.659 $\pm$ 0.016 & 4.549 $\pm$ 0.161 & 0.911 $\pm$ 0.010 & 7.729 $\pm$ 0.092 & 0.957 $\pm$ 0.008 & 8.461 $\pm$ 0.087 \\
Gemma-2-2B & 0.899 $\pm$ 0.006 & 7.492 $\pm$ 0.081 & 0.638 $\pm$ 0.002 & 4.359 $\pm$ 0.023 & 0.497 $\pm$ 0.002 & 2.833 $\pm$ 0.016 & 0.695 $\pm$ 0.001 & 4.843 $\pm$ 0.021 & 0.911 $\pm$ 0.007 & 7.673 $\pm$ 0.081 & 0.959 $\pm$ 0.002 & 8.405 $\pm$ 0.036 \\
Llama-3.2-3B & 0.903 $\pm$ 0.003 & 7.075 $\pm$ 0.028 & 0.692 $\pm$ 0.003 & 4.536 $\pm$ 0.036 & 0.573 $\pm$ 0.005 & 3.234 $\pm$ 0.046 & 0.743 $\pm$ 0.008 & 4.995 $\pm$ 0.096 & 0.921 $\pm$ 0.002 & 7.392 $\pm$ 0.034 & 0.964 $\pm$ 0.004 & 8.195 $\pm$ 0.060 \\
Phi-3-mini & 0.903 $\pm$ 0.004 & 6.226 $\pm$ 0.068 & 0.687 $\pm$ 0.003 & 3.629 $\pm$ 0.033 & 0.585 $\pm$ 0.004 & 2.535 $\pm$ 0.028 & 0.713 $\pm$ 0.005 & 3.776 $\pm$ 0.055 & 0.908 $\pm$ 0.003 & 6.327 $\pm$ 0.038 & 0.959 $\pm$ 0.004 & 7.354 $\pm$ 0.108 \\
Mistral-7B & 0.908 $\pm$ 0.005 & 7.272 $\pm$ 0.029 & 0.641 $\pm$ 0.004 & 3.840 $\pm$ 0.013 & 0.506 $\pm$ 0.006 & 2.346 $\pm$ 0.013 & 0.688 $\pm$ 0.005 & 4.141 $\pm$ 0.035 & 0.916 $\pm$ 0.004 & 7.408 $\pm$ 0.021 & 0.960 $\pm$ 0.001 & 8.266 $\pm$ 0.065 \\
\bottomrule
\end{tabular}%
}
}
\caption{Coverage and average prediction-set size for MMLU-Pro with deployment temperature fixed to $T=0.5$ and deployment quantization averaged within run, using LAC with $\alpha=0.1$. Each entry reports mean $\pm$ standard deviation across runs.}
\label{tab:exp2_mmlu_pro}
\end{table*}
\begin{table*}[t]
\centering
\centerline{
\resizebox{1.4\textwidth}{!}{%
\begin{tabular}{lcccccccccccc}
\toprule
Model & \multicolumn{2}{c}{Matched} & \multicolumn{2}{c}{Mismatch mean} & \multicolumn{2}{c}{Mismatch worst} & \multicolumn{2}{c}{Pooled} & \multicolumn{2}{c}{MaxThr} & \multicolumn{2}{c}{Latent-robust} \\
\cmidrule(lr){2-3} \cmidrule(lr){4-5} \cmidrule(lr){6-7} \cmidrule(lr){8-9} \cmidrule(lr){10-11} \cmidrule(lr){12-13}
 & Cov. & Size & Cov. & Size & Cov. & Size & Cov. & Size & Cov. & Size & Cov. & Size \\
\midrule
Qwen2.5-1.5B & 0.904 $\pm$ 0.000 & 0.954 $\pm$ 0.001 & 0.919 $\pm$ 0.001 & 0.981 $\pm$ 0.001 & 0.891 $\pm$ 0.001 & 0.932 $\pm$ 0.001 & 0.925 $\pm$ 0.001 & 0.992 $\pm$ 0.002 & 0.934 $\pm$ 0.000 & 1.010 $\pm$ 0.001 & 0.946 $\pm$ 0.008 & 1.039 $\pm$ 0.021 \\
Qwen2.5-3B & 0.900 $\pm$ 0.004 & 1.037 $\pm$ 0.006 & 0.892 $\pm$ 0.002 & 1.017 $\pm$ 0.003 & 0.887 $\pm$ 0.001 & 1.006 $\pm$ 0.000 & 0.891 $\pm$ 0.001 & 1.014 $\pm$ 0.002 & 0.907 $\pm$ 0.004 & 1.054 $\pm$ 0.008 & 0.953 $\pm$ 0.001 & 1.266 $\pm$ 0.010 \\
Qwen2.5-7B & 0.902 $\pm$ 0.006 & 0.927 $\pm$ 0.007 & 0.930 $\pm$ 0.002 & 0.967 $\pm$ 0.003 & 0.898 $\pm$ 0.006 & 0.922 $\pm$ 0.007 & 0.936 $\pm$ 0.001 & 0.975 $\pm$ 0.000 & 0.943 $\pm$ 0.002 & 0.986 $\pm$ 0.002 & 0.949 $\pm$ 0.001 & 0.998 $\pm$ 0.000 \\
DeepSeek-R1-7B & 0.900 $\pm$ 0.001 & 1.124 $\pm$ 0.008 & 0.882 $\pm$ 0.001 & 1.076 $\pm$ 0.006 & 0.872 $\pm$ 0.001 & 1.053 $\pm$ 0.004 & 0.880 $\pm$ 0.001 & 1.070 $\pm$ 0.003 & 0.902 $\pm$ 0.001 & 1.129 $\pm$ 0.003 & 0.948 $\pm$ 0.008 & 1.289 $\pm$ 0.039 \\
Gemma-2-2B & 0.900 $\pm$ 0.001 & 1.142 $\pm$ 0.005 & 0.877 $\pm$ 0.001 & 1.056 $\pm$ 0.002 & 0.866 $\pm$ 0.001 & 1.025 $\pm$ 0.002 & 0.874 $\pm$ 0.002 & 1.041 $\pm$ 0.003 & 0.902 $\pm$ 0.001 & 1.147 $\pm$ 0.006 & 0.946 $\pm$ 0.001 & 1.547 $\pm$ 0.005 \\
Llama-3.2-3B & 0.902 $\pm$ 0.002 & 1.158 $\pm$ 0.009 & 0.870 $\pm$ 0.001 & 1.078 $\pm$ 0.004 & 0.857 $\pm$ 0.001 & 1.046 $\pm$ 0.005 & 0.865 $\pm$ 0.001 & 1.064 $\pm$ 0.002 & 0.903 $\pm$ 0.001 & 1.162 $\pm$ 0.005 & 0.946 $\pm$ 0.007 & 1.407 $\pm$ 0.069 \\
Phi-3-mini & 0.904 $\pm$ 0.006 & 1.005 $\pm$ 0.007 & 0.906 $\pm$ 0.004 & 1.009 $\pm$ 0.003 & 0.899 $\pm$ 0.008 & 0.994 $\pm$ 0.011 & 0.907 $\pm$ 0.003 & 1.011 $\pm$ 0.002 & 0.908 $\pm$ 0.003 & 1.014 $\pm$ 0.001 & 0.940 $\pm$ 0.006 & 1.094 $\pm$ 0.018 \\
Mistral-7B & 0.900 $\pm$ 0.002 & 0.985 $\pm$ 0.005 & 0.904 $\pm$ 0.002 & 0.994 $\pm$ 0.002 & 0.899 $\pm$ 0.002 & 0.984 $\pm$ 0.003 & 0.905 $\pm$ 0.001 & 0.995 $\pm$ 0.003 & 0.906 $\pm$ 0.002 & 0.998 $\pm$ 0.001 & 0.951 $\pm$ 0.004 & 1.166 $\pm$ 0.017 \\
\bottomrule
\end{tabular}%
}
}
\caption{Coverage and average prediction-set size for MobileActions with deployment temperature fixed to $T=0.5$ and deployment quantization averaged within run, using LAC with $\alpha=0.1$. Each entry reports mean $\pm$ standard deviation across runs.}
\label{tab:exp2_mobile_actions}
\end{table*}

Tables~\ref{tab:exp2_commonsense_qa}--\ref{tab:exp2_mobile_actions} report the experiment in which deployment temperature is fixed to $T=0.5$ and the deployment quantization mode is averaged within run by design. The resulting tables preserve the same qualitative picture seen in the main setting. In particular, Tables~\ref{tab:exp2_commonsense_qa} and \ref{tab:exp2_mmlu_pro} show that several model--dataset pairs on CommonsenseQA and MMLU-Pro remain strongly mismatch-sensitive, indicating that the phenomenon is not attributable only to one particular deployment quantization mode. This supports the interpretation that low-temperature deployment itself defines a regime of elevated sensitivity to calibration--deployment mismatch.

At the same time, the per-model view in Tables~\ref{tab:exp2_commonsense_qa}--\ref{tab:exp2_mobile_actions} continues to show that the effect is not uniform across model families. Some models remain comparatively stable under the same averaging regime in which others degrade sharply. This reinforces one of the paper's central points: there is no single monotone ordering of configurations that cleanly explains all failures.

\subsection{Deployment quantization fixed to \texorpdfstring{$\mathrm{nf4}$}{nf4}}
\label{app:exp3}

\begin{table*}[t]
\centering
\centerline{
\resizebox{1.4\textwidth}{!}{%
\begin{tabular}{lcccccccccccc}
\toprule
Model & \multicolumn{2}{c}{Matched} & \multicolumn{2}{c}{Mismatch mean} & \multicolumn{2}{c}{Mismatch worst} & \multicolumn{2}{c}{Pooled} & \multicolumn{2}{c}{MaxThr} & \multicolumn{2}{c}{Latent-robust} \\
\cmidrule(lr){2-3} \cmidrule(lr){4-5} \cmidrule(lr){6-7} \cmidrule(lr){8-9} \cmidrule(lr){10-11} \cmidrule(lr){12-13}
 & Cov. & Size & Cov. & Size & Cov. & Size & Cov. & Size & Cov. & Size & Cov. & Size \\
\midrule
Qwen2.5-1.5B & 0.916 $\pm$ 0.021 & 1.866 $\pm$ 0.107 & 0.907 $\pm$ 0.012 & 2.377 $\pm$ 0.122 & 0.848 $\pm$ 0.014 & 1.419 $\pm$ 0.049 & 0.903 $\pm$ 0.008 & 1.924 $\pm$ 0.068 & 0.976 $\pm$ 0.007 & 3.914 $\pm$ 0.155 & 0.984 $\pm$ 0.013 & 4.075 $\pm$ 0.532 \\
Qwen2.5-3B & 0.913 $\pm$ 0.012 & 1.478 $\pm$ 0.042 & 0.907 $\pm$ 0.010 & 2.021 $\pm$ 0.103 & 0.858 $\pm$ 0.014 & 1.227 $\pm$ 0.025 & 0.900 $\pm$ 0.018 & 1.543 $\pm$ 0.175 & 0.973 $\pm$ 0.005 & 3.635 $\pm$ 0.123 & 0.987 $\pm$ 0.007 & 4.087 $\pm$ 0.349 \\
Qwen2.5-7B & 0.907 $\pm$ 0.007 & 1.364 $\pm$ 0.012 & 0.910 $\pm$ 0.013 & 1.699 $\pm$ 0.168 & 0.872 $\pm$ 0.013 & 1.185 $\pm$ 0.051 & 0.898 $\pm$ 0.017 & 1.356 $\pm$ 0.127 & 0.959 $\pm$ 0.012 & 2.732 $\pm$ 0.455 & 0.969 $\pm$ 0.013 & 2.780 $\pm$ 0.370 \\
DeepSeek-R1-7B & 0.894 $\pm$ 0.003 & 2.618 $\pm$ 0.028 & 0.879 $\pm$ 0.006 & 2.908 $\pm$ 0.039 & 0.805 $\pm$ 0.010 & 1.959 $\pm$ 0.048 & 0.887 $\pm$ 0.004 & 2.835 $\pm$ 0.075 & 0.970 $\pm$ 0.001 & 4.279 $\pm$ 0.018 & 0.987 $\pm$ 0.005 & 4.542 $\pm$ 0.126 \\
Gemma-2-2B & 0.919 $\pm$ 0.021 & 2.480 $\pm$ 0.216 & 0.914 $\pm$ 0.009 & 3.069 $\pm$ 0.111 & 0.843 $\pm$ 0.015 & 1.844 $\pm$ 0.123 & 0.914 $\pm$ 0.013 & 2.855 $\pm$ 0.237 & 0.986 $\pm$ 0.003 & 4.480 $\pm$ 0.051 & 0.993 $\pm$ 0.002 & 4.620 $\pm$ 0.038 \\
Llama-3.2-3B & 0.916 $\pm$ 0.013 & 1.983 $\pm$ 0.058 & 0.916 $\pm$ 0.011 & 2.425 $\pm$ 0.072 & 0.866 $\pm$ 0.016 & 1.605 $\pm$ 0.041 & 0.909 $\pm$ 0.011 & 2.044 $\pm$ 0.073 & 0.977 $\pm$ 0.005 & 3.752 $\pm$ 0.153 & 0.992 $\pm$ 0.003 & 4.313 $\pm$ 0.118 \\
Phi-3-mini & 0.907 $\pm$ 0.011 & 1.892 $\pm$ 0.052 & 0.902 $\pm$ 0.007 & 2.209 $\pm$ 0.061 & 0.852 $\pm$ 0.010 & 1.519 $\pm$ 0.054 & 0.900 $\pm$ 0.005 & 1.928 $\pm$ 0.083 & 0.970 $\pm$ 0.004 & 3.484 $\pm$ 0.108 & 0.983 $\pm$ 0.009 & 3.890 $\pm$ 0.471 \\
Mistral-7B & 0.920 $\pm$ 0.006 & 2.505 $\pm$ 0.028 & 0.916 $\pm$ 0.003 & 2.918 $\pm$ 0.030 & 0.859 $\pm$ 0.003 & 1.864 $\pm$ 0.023 & 0.916 $\pm$ 0.013 & 2.707 $\pm$ 0.248 & 0.981 $\pm$ 0.002 & 4.286 $\pm$ 0.006 & 0.992 $\pm$ 0.002 & 4.596 $\pm$ 0.013 \\
\bottomrule
\end{tabular}%
}
}
\caption{Coverage and average prediction-set size for CommonsenseQA with deployment quantization fixed to $\mathrm{nf4}$ and deployment temperature averaged within run, using LAC with $\alpha=0.1$. Each entry reports mean $\pm$ standard deviation across runs.}
\label{tab:exp3_commonsense_qa}
\end{table*}
\begin{table*}[t]
\centering
\centerline{
\resizebox{1.4\textwidth}{!}{%
\begin{tabular}{lcccccccccccc}
\toprule
Model & \multicolumn{2}{c}{Matched} & \multicolumn{2}{c}{Mismatch mean} & \multicolumn{2}{c}{Mismatch worst} & \multicolumn{2}{c}{Pooled} & \multicolumn{2}{c}{MaxThr} & \multicolumn{2}{c}{Latent-robust} \\
\cmidrule(lr){2-3} \cmidrule(lr){4-5} \cmidrule(lr){6-7} \cmidrule(lr){8-9} \cmidrule(lr){10-11} \cmidrule(lr){12-13}
 & Cov. & Size & Cov. & Size & Cov. & Size & Cov. & Size & Cov. & Size & Cov. & Size \\
\midrule
Qwen2.5-1.5B & 0.899 $\pm$ 0.005 & 1.254 $\pm$ 0.035 & 0.893 $\pm$ 0.006 & 1.281 $\pm$ 0.068 & 0.872 $\pm$ 0.009 & 1.103 $\pm$ 0.030 & 0.892 $\pm$ 0.004 & 1.199 $\pm$ 0.026 & 0.924 $\pm$ 0.007 & 1.736 $\pm$ 0.188 & 0.958 $\pm$ 0.004 & 3.125 $\pm$ 0.276 \\
Qwen2.5-3B & 0.900 $\pm$ 0.003 & 1.648 $\pm$ 0.033 & 0.834 $\pm$ 0.005 & 1.462 $\pm$ 0.026 & 0.762 $\pm$ 0.007 & 1.055 $\pm$ 0.014 & 0.853 $\pm$ 0.008 & 1.356 $\pm$ 0.040 & 0.956 $\pm$ 0.002 & 3.220 $\pm$ 0.103 & 0.964 $\pm$ 0.002 & 3.176 $\pm$ 0.056 \\
Qwen2.5-7B & 0.902 $\pm$ 0.002 & 0.909 $\pm$ 0.001 & 0.567 $\pm$ 0.003 & 0.577 $\pm$ 0.003 & 0.220 $\pm$ 0.000 & 0.222 $\pm$ 0.000 & 0.874 $\pm$ 0.003 & 0.889 $\pm$ 0.003 & 0.943 $\pm$ 0.001 & 0.960 $\pm$ 0.001 & 0.959 $\pm$ 0.001 & 0.980 $\pm$ 0.001 \\
DeepSeek-R1-7B & 0.902 $\pm$ 0.005 & 1.681 $\pm$ 0.038 & 0.917 $\pm$ 0.002 & 2.511 $\pm$ 0.078 & 0.881 $\pm$ 0.004 & 1.520 $\pm$ 0.026 & 0.916 $\pm$ 0.001 & 1.968 $\pm$ 0.028 & 0.968 $\pm$ 0.001 & 6.025 $\pm$ 0.328 & 0.955 $\pm$ 0.002 & 2.982 $\pm$ 0.130 \\
Gemma-2-2B & 0.902 $\pm$ 0.005 & 3.508 $\pm$ 0.153 & 0.907 $\pm$ 0.001 & 7.144 $\pm$ 0.085 & 0.832 $\pm$ 0.002 & 2.550 $\pm$ 0.069 & 0.918 $\pm$ 0.001 & 7.484 $\pm$ 0.082 & 0.985 $\pm$ 0.001 & 11.988 $\pm$ 0.059 & 0.980 $\pm$ 0.002 & 10.233 $\pm$ 0.185 \\
Llama-3.2-3B & 0.899 $\pm$ 0.002 & 1.771 $\pm$ 0.024 & 0.862 $\pm$ 0.001 & 1.896 $\pm$ 0.022 & 0.832 $\pm$ 0.002 & 1.242 $\pm$ 0.013 & 0.866 $\pm$ 0.002 & 1.459 $\pm$ 0.015 & 0.959 $\pm$ 0.002 & 6.831 $\pm$ 0.334 & 0.948 $\pm$ 0.003 & 3.570 $\pm$ 0.139 \\
Phi-3-mini & 0.898 $\pm$ 0.002 & 0.969 $\pm$ 0.003 & 0.775 $\pm$ 0.003 & 0.822 $\pm$ 0.004 & 0.489 $\pm$ 0.005 & 0.506 $\pm$ 0.005 & 0.863 $\pm$ 0.002 & 0.919 $\pm$ 0.003 & 0.916 $\pm$ 0.002 & 1.006 $\pm$ 0.006 & 0.945 $\pm$ 0.003 & 1.096 $\pm$ 0.013 \\
Mistral-7B & 0.901 $\pm$ 0.004 & 1.132 $\pm$ 0.012 & 0.922 $\pm$ 0.003 & 1.580 $\pm$ 0.096 & 0.890 $\pm$ 0.004 & 1.100 $\pm$ 0.010 & 0.920 $\pm$ 0.002 & 1.267 $\pm$ 0.019 & 0.963 $\pm$ 0.004 & 3.215 $\pm$ 0.537 & 0.962 $\pm$ 0.004 & 2.120 $\pm$ 0.185 \\
\bottomrule
\end{tabular}%
}
}
\caption{Coverage and average prediction-set size for DBPedia-14 with deployment quantization fixed to $\mathrm{nf4}$ and deployment temperature averaged within run, using LAC with $\alpha=0.1$. Each entry reports mean $\pm$ standard deviation across runs.}
\label{tab:exp3_dbpedia_14}
\end{table*}
\begin{table*}[t]
\centering
\centerline{
\resizebox{1.4\textwidth}{!}{%
\begin{tabular}{lcccccccccccc}
\toprule
Model & \multicolumn{2}{c}{Matched} & \multicolumn{2}{c}{Mismatch mean} & \multicolumn{2}{c}{Mismatch worst} & \multicolumn{2}{c}{Pooled} & \multicolumn{2}{c}{MaxThr} & \multicolumn{2}{c}{Latent-robust} \\
\cmidrule(lr){2-3} \cmidrule(lr){4-5} \cmidrule(lr){6-7} \cmidrule(lr){8-9} \cmidrule(lr){10-11} \cmidrule(lr){12-13}
 & Cov. & Size & Cov. & Size & Cov. & Size & Cov. & Size & Cov. & Size & Cov. & Size \\
\midrule
Qwen2.5-1.5B & 0.899 $\pm$ 0.010 & 7.783 $\pm$ 0.100 & 0.821 $\pm$ 0.004 & 7.044 $\pm$ 0.019 & 0.685 $\pm$ 0.006 & 5.321 $\pm$ 0.026 & 0.895 $\pm$ 0.011 & 7.964 $\pm$ 0.120 & 0.977 $\pm$ 0.002 & 9.053 $\pm$ 0.004 & 0.989 $\pm$ 0.001 & 9.232 $\pm$ 0.012 \\
Qwen2.5-3B & 0.898 $\pm$ 0.007 & 6.964 $\pm$ 0.074 & 0.835 $\pm$ 0.005 & 6.636 $\pm$ 0.088 & 0.691 $\pm$ 0.006 & 4.438 $\pm$ 0.080 & 0.893 $\pm$ 0.002 & 7.437 $\pm$ 0.049 & 0.992 $\pm$ 0.014 & 9.259 $\pm$ 0.361 & 1.000 $\pm$ 0.000 & 9.463 $\pm$ 0.017 \\
Qwen2.5-7B & 0.898 $\pm$ 0.006 & 6.159 $\pm$ 0.068 & 0.853 $\pm$ 0.002 & 6.206 $\pm$ 0.025 & 0.743 $\pm$ 0.003 & 4.172 $\pm$ 0.018 & 0.893 $\pm$ 0.006 & 6.764 $\pm$ 0.101 & 0.974 $\pm$ 0.002 & 8.596 $\pm$ 0.029 & 1.000 $\pm$ 0.000 & 9.456 $\pm$ 0.023 \\
DeepSeek-R1-7B & 0.906 $\pm$ 0.006 & 7.641 $\pm$ 0.041 & 0.826 $\pm$ 0.006 & 6.943 $\pm$ 0.050 & 0.697 $\pm$ 0.008 & 5.152 $\pm$ 0.075 & 0.900 $\pm$ 0.007 & 7.927 $\pm$ 0.083 & 0.977 $\pm$ 0.002 & 9.026 $\pm$ 0.019 & 0.989 $\pm$ 0.002 & 9.217 $\pm$ 0.014 \\
Gemma-2-2B & 0.898 $\pm$ 0.002 & 7.432 $\pm$ 0.039 & 0.860 $\pm$ 0.001 & 7.241 $\pm$ 0.017 & 0.736 $\pm$ 0.003 & 5.504 $\pm$ 0.026 & 0.912 $\pm$ 0.001 & 7.922 $\pm$ 0.019 & 0.983 $\pm$ 0.001 & 9.078 $\pm$ 0.022 & 0.993 $\pm$ 0.000 & 9.259 $\pm$ 0.019 \\
Llama-3.2-3B & 0.903 $\pm$ 0.001 & 7.284 $\pm$ 0.009 & 0.834 $\pm$ 0.003 & 6.642 $\pm$ 0.049 & 0.720 $\pm$ 0.007 & 5.011 $\pm$ 0.086 & 0.884 $\pm$ 0.006 & 7.248 $\pm$ 0.091 & 0.973 $\pm$ 0.001 & 8.778 $\pm$ 0.012 & 0.989 $\pm$ 0.001 & 9.104 $\pm$ 0.014 \\
Phi-3-mini & 0.902 $\pm$ 0.004 & 6.419 $\pm$ 0.033 & 0.852 $\pm$ 0.003 & 6.302 $\pm$ 0.040 & 0.750 $\pm$ 0.005 & 4.491 $\pm$ 0.062 & 0.894 $\pm$ 0.004 & 6.884 $\pm$ 0.063 & 0.975 $\pm$ 0.001 & 8.597 $\pm$ 0.006 & 0.989 $\pm$ 0.001 & 8.940 $\pm$ 0.043 \\
Mistral-7B & 0.907 $\pm$ 0.008 & 7.212 $\pm$ 0.085 & 0.853 $\pm$ 0.002 & 6.881 $\pm$ 0.010 & 0.731 $\pm$ 0.005 & 4.925 $\pm$ 0.021 & 0.906 $\pm$ 0.002 & 7.668 $\pm$ 0.027 & 0.982 $\pm$ 0.001 & 8.996 $\pm$ 0.006 & 0.992 $\pm$ 0.000 & 9.205 $\pm$ 0.022 \\
\bottomrule
\end{tabular}%
}
}
\caption{Coverage and average prediction-set size for MMLU-Pro with deployment quantization fixed to $\mathrm{nf4}$ and deployment temperature averaged within run, using LAC with $\alpha=0.1$. Each entry reports mean $\pm$ standard deviation across runs.}
\label{tab:exp3_mmlu_pro}
\end{table*}
\begin{table*}[t]
\centering
\centerline{
\resizebox{1.4\textwidth}{!}{%
\begin{tabular}{lcccccccccccc}
\toprule
Model & \multicolumn{2}{c}{Matched} & \multicolumn{2}{c}{Mismatch mean} & \multicolumn{2}{c}{Mismatch worst} & \multicolumn{2}{c}{Pooled} & \multicolumn{2}{c}{MaxThr} & \multicolumn{2}{c}{Latent-robust} \\
\cmidrule(lr){2-3} \cmidrule(lr){4-5} \cmidrule(lr){6-7} \cmidrule(lr){8-9} \cmidrule(lr){10-11} \cmidrule(lr){12-13}
 & Cov. & Size & Cov. & Size & Cov. & Size & Cov. & Size & Cov. & Size & Cov. & Size \\
\midrule
Qwen2.5-1.5B & 0.908 $\pm$ 0.002 & 0.976 $\pm$ 0.002 & 0.837 $\pm$ 0.003 & 0.882 $\pm$ 0.003 & 0.682 $\pm$ 0.020 & 0.697 $\pm$ 0.021 & 0.881 $\pm$ 0.003 & 0.936 $\pm$ 0.004 & 0.921 $\pm$ 0.001 & 1.002 $\pm$ 0.003 & 0.954 $\pm$ 0.014 & 1.112 $\pm$ 0.077 \\
Qwen2.5-3B & 0.899 $\pm$ 0.005 & 1.017 $\pm$ 0.010 & 0.909 $\pm$ 0.004 & 1.045 $\pm$ 0.009 & 0.897 $\pm$ 0.003 & 1.013 $\pm$ 0.006 & 0.904 $\pm$ 0.003 & 1.028 $\pm$ 0.004 & 0.936 $\pm$ 0.006 & 1.137 $\pm$ 0.023 & 0.967 $\pm$ 0.002 & 1.465 $\pm$ 0.019 \\
Qwen2.5-7B & 0.902 $\pm$ 0.002 & 0.920 $\pm$ 0.003 & 0.875 $\pm$ 0.009 & 0.894 $\pm$ 0.010 & 0.757 $\pm$ 0.011 & 0.763 $\pm$ 0.012 & 0.909 $\pm$ 0.003 & 0.930 $\pm$ 0.004 & 0.935 $\pm$ 0.003 & 0.963 $\pm$ 0.004 & 0.955 $\pm$ 0.002 & 0.996 $\pm$ 0.004 \\
DeepSeek-R1-7B & 0.898 $\pm$ 0.002 & 1.117 $\pm$ 0.006 & 0.903 $\pm$ 0.002 & 1.142 $\pm$ 0.011 & 0.883 $\pm$ 0.004 & 1.078 $\pm$ 0.013 & 0.894 $\pm$ 0.002 & 1.107 $\pm$ 0.007 & 0.939 $\pm$ 0.001 & 1.272 $\pm$ 0.002 & 0.963 $\pm$ 0.006 & 1.491 $\pm$ 0.077 \\
Gemma-2-2B & 0.900 $\pm$ 0.002 & 1.195 $\pm$ 0.007 & 0.912 $\pm$ 0.001 & 1.989 $\pm$ 0.026 & 0.877 $\pm$ 0.001 & 1.089 $\pm$ 0.003 & 0.898 $\pm$ 0.005 & 1.216 $\pm$ 0.031 & 0.967 $\pm$ 0.000 & 4.165 $\pm$ 0.059 & 0.944 $\pm$ 0.004 & 2.053 $\pm$ 0.132 \\
Llama-3.2-3B & 0.902 $\pm$ 0.000 & 1.165 $\pm$ 0.006 & 0.909 $\pm$ 0.002 & 1.435 $\pm$ 0.046 & 0.880 $\pm$ 0.003 & 1.104 $\pm$ 0.012 & 0.898 $\pm$ 0.001 & 1.163 $\pm$ 0.009 & 0.955 $\pm$ 0.001 & 2.278 $\pm$ 0.042 & 0.949 $\pm$ 0.006 & 1.702 $\pm$ 0.152 \\
Phi-3-mini & 0.904 $\pm$ 0.006 & 1.000 $\pm$ 0.008 & 0.905 $\pm$ 0.006 & 1.003 $\pm$ 0.008 & 0.888 $\pm$ 0.019 & 0.970 $\pm$ 0.030 & 0.907 $\pm$ 0.004 & 1.006 $\pm$ 0.005 & 0.916 $\pm$ 0.002 & 1.025 $\pm$ 0.001 & 0.959 $\pm$ 0.008 & 1.152 $\pm$ 0.032 \\
Mistral-7B & 0.899 $\pm$ 0.005 & 0.986 $\pm$ 0.007 & 0.894 $\pm$ 0.004 & 0.978 $\pm$ 0.005 & 0.877 $\pm$ 0.004 & 0.952 $\pm$ 0.005 & 0.899 $\pm$ 0.002 & 0.986 $\pm$ 0.006 & 0.902 $\pm$ 0.003 & 0.992 $\pm$ 0.004 & 0.952 $\pm$ 0.004 & 1.244 $\pm$ 0.030 \\
\bottomrule
\end{tabular}%
}
}
\caption{Coverage and average prediction-set size for MobileActions with deployment quantization fixed to $\mathrm{nf4}$ and deployment temperature averaged within run, using LAC with $\alpha=0.1$. Each entry reports mean $\pm$ standard deviation across runs.}
\label{tab:exp3_mobile_actions}
\end{table*}

Tables~\ref{tab:exp3_commonsense_qa}--\ref{tab:exp3_mobile_actions} report the experiment in which deployment quantization is fixed to $\mathrm{nf4}$ and deployment temperature is averaged within run. These tables show that substantial mismatch effects remain visible for multiple models even after fixing quantization, which indicates that temperature is itself a primary driver of inference-time model shift. This complements the previous subsection: neither temperature nor quantization alone fully explains the phenomenon, and the strongest discrepancies arise when both dimensions of configuration are allowed to vary.

The model-disaggregated results in Tables~\ref{tab:exp3_commonsense_qa}--\ref{tab:exp3_mobile_actions} again matter here. In several cases the same broad type of mismatch produces noticeably different effects across models, including different degrees of undercoverage and conservativeness. The appendix therefore supports the main-paper claim that matched-only reporting can be materially misleading even under a fixed benchmark task distribution.

\section{Calibration--Deployment Mismatch Structure}
\label{app:mismatch}

We next visualize the full pairwise vanilla calibration$\rightarrow$deployment structure under LAC with $\alpha=0.1$. Each figure is model-specific and contains four panels, one for each dataset. The left subfigure reports empirical coverage, while the right subfigure reports the corresponding coverage gap relative to the nominal target of $0.9$. Rows index calibration configuration and columns index deployment configuration.

These figures are intentionally reported at the model level rather than after averaging across models. This is important because the direction and magnitude of mismatch effects vary across model families.

\begin{figure*}[t]
    \centering
    \begin{subfigure}[t]{0.99\textwidth}
        \centering
        \includegraphics[width=\textwidth]{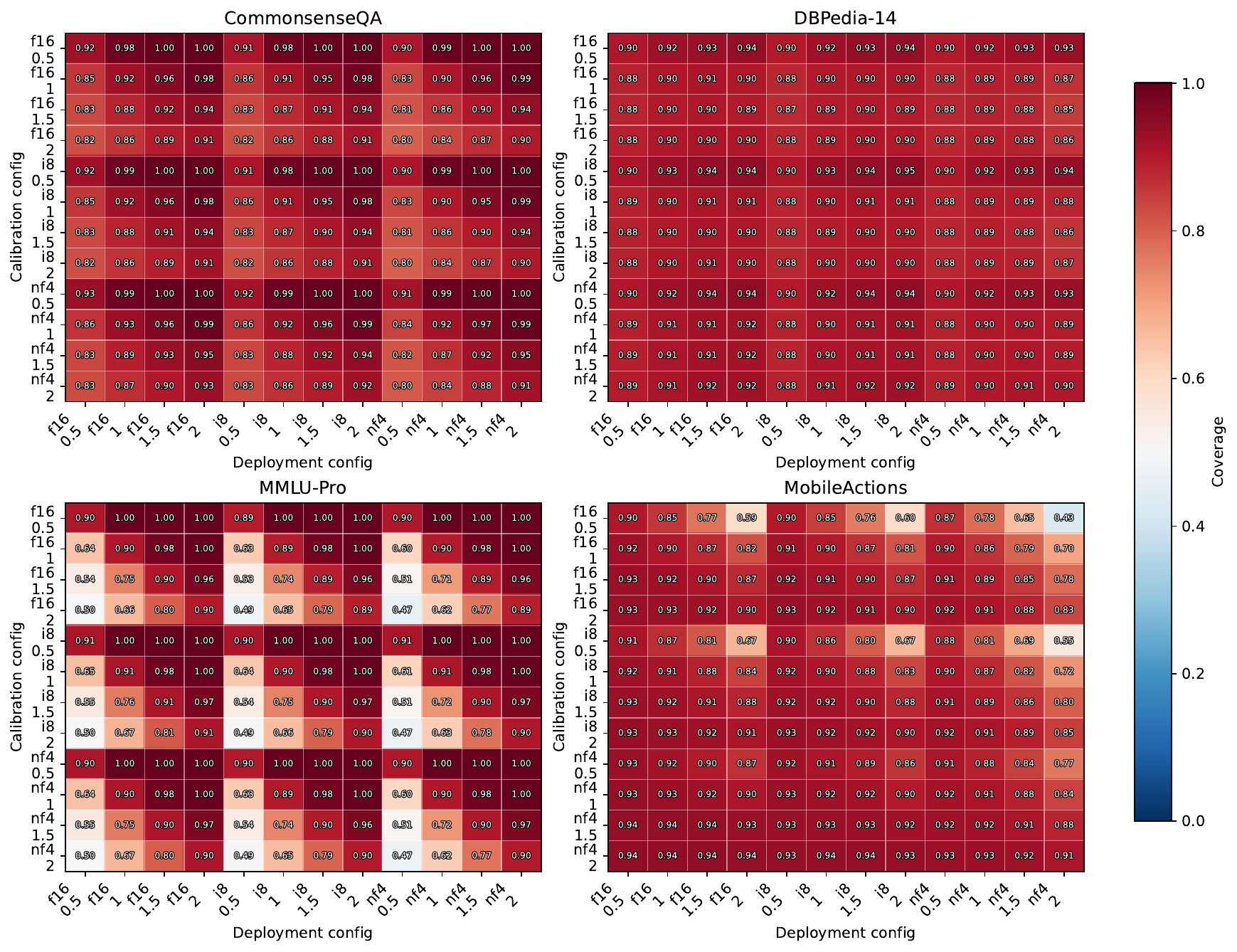}
        \caption{Coverage.}
    \end{subfigure}
    \hfill
    \begin{subfigure}[t]{0.99\textwidth}
        \centering
        \includegraphics[width=\textwidth]{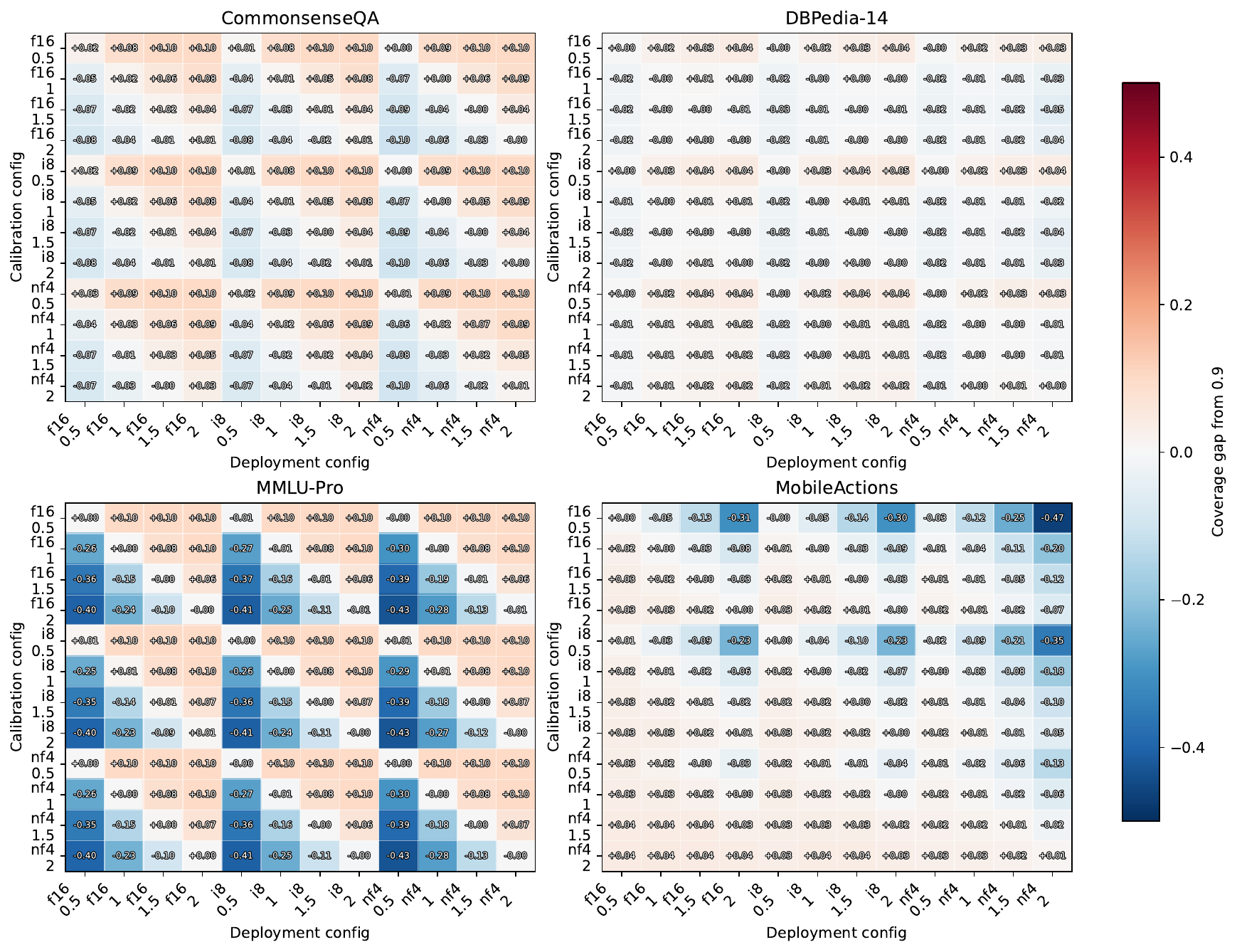}
        \caption{Coverage gap from target.}
    \end{subfigure}
    \caption{Pairwise vanilla calibration$\rightarrow$deployment structure for Qwen2.5-1.5B under LAC with $\alpha=0.1$. Each panel corresponds to one dataset. White cell annotations report the exact values used to construct the heatmap.}
    \label{fig:pairwise_qwen15}
\end{figure*}

\begin{figure*}[t]
    \centering
    \begin{subfigure}[t]{0.99\textwidth}
        \centering
        \includegraphics[width=\textwidth]{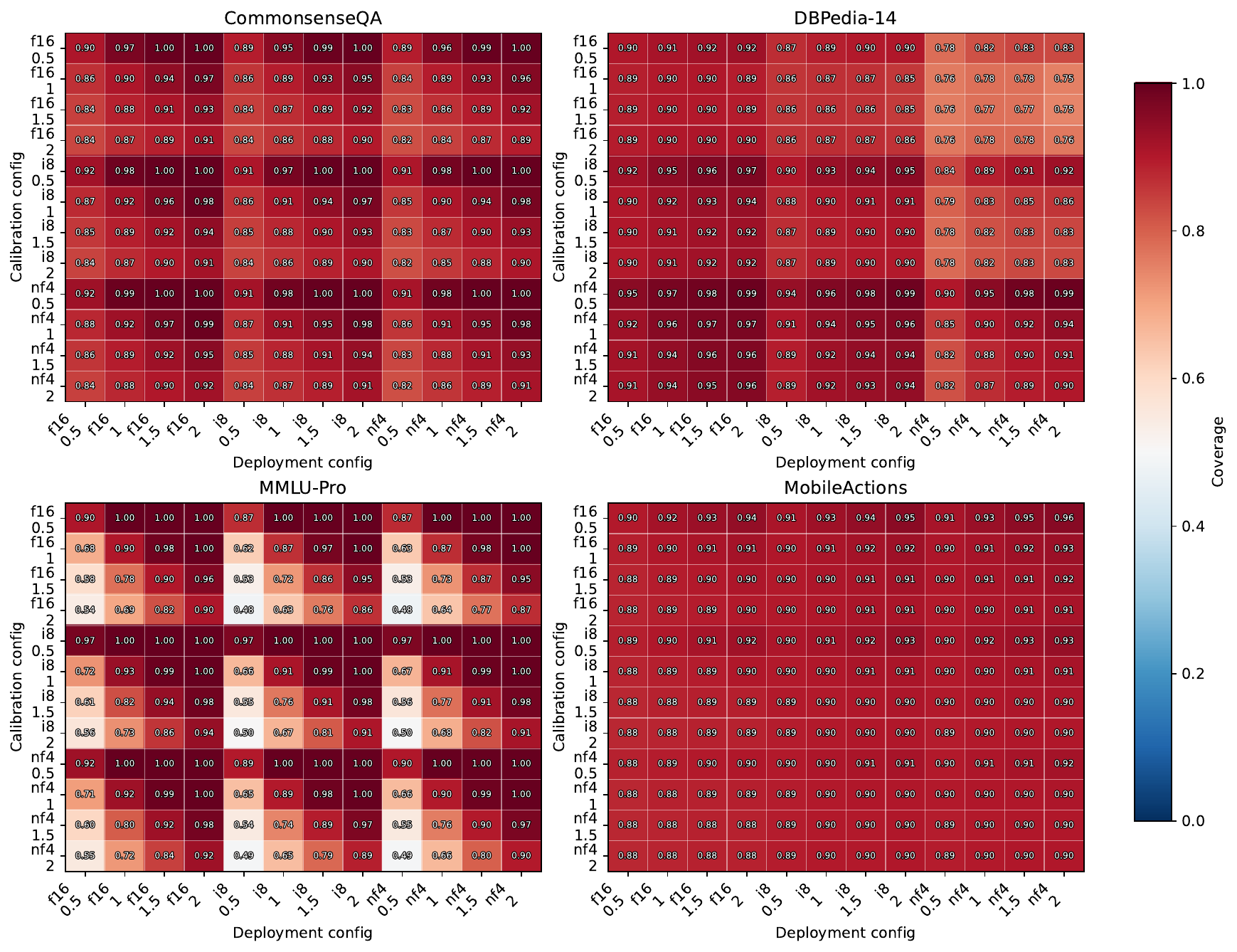}
        \caption{Coverage.}
    \end{subfigure}
    \hfill
    \begin{subfigure}[t]{0.99\textwidth}
        \centering
        \includegraphics[width=\textwidth]{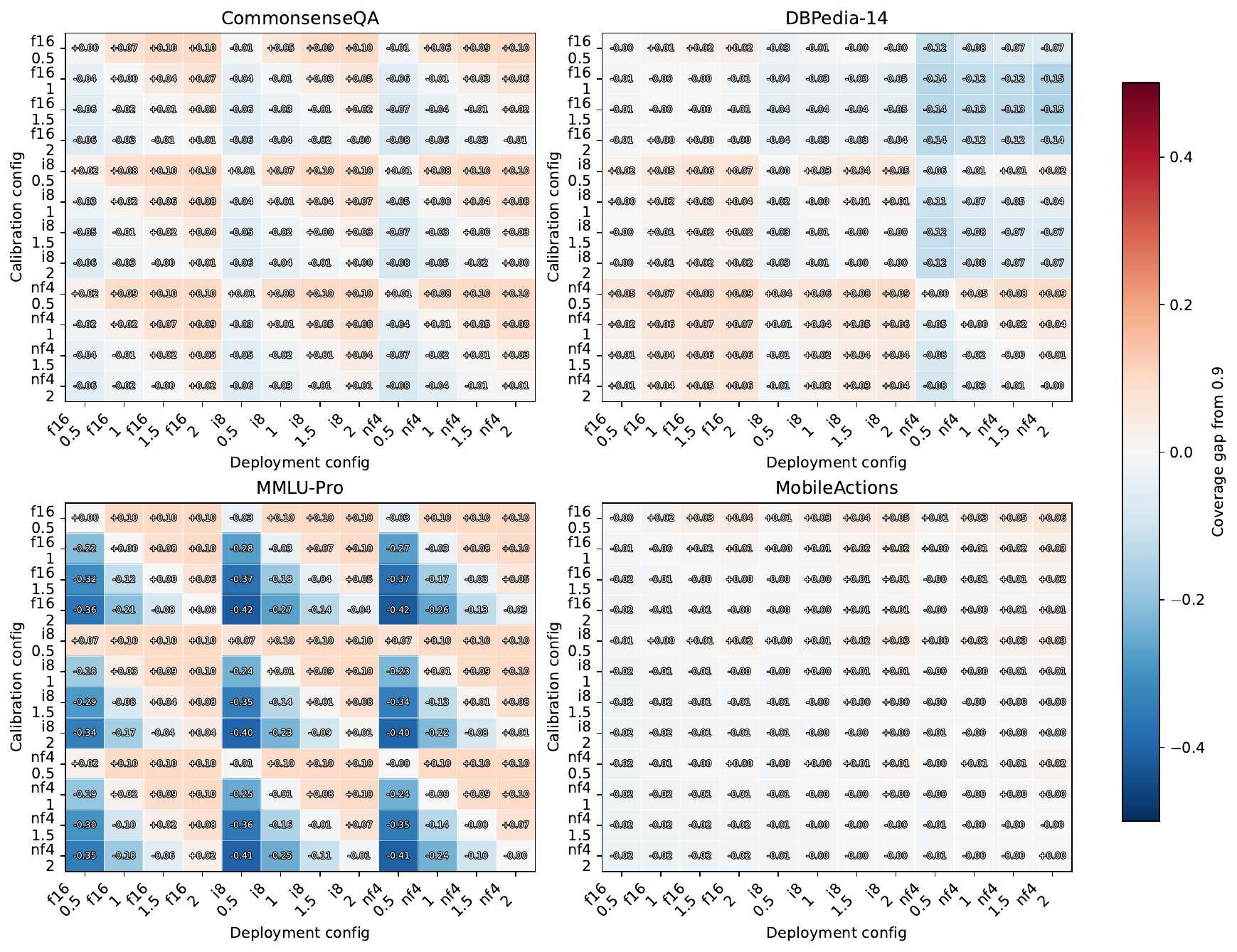}
        \caption{Coverage gap from target.}
    \end{subfigure}
    \caption{Pairwise vanilla calibration$\rightarrow$deployment structure for Qwen2.5-3B under LAC with $\alpha=0.1$.}
    \label{fig:pairwise_qwen3}
\end{figure*}

\begin{figure*}[t]
    \centering
    \begin{subfigure}[t]{0.99\textwidth}
        \centering
        \includegraphics[width=\textwidth]{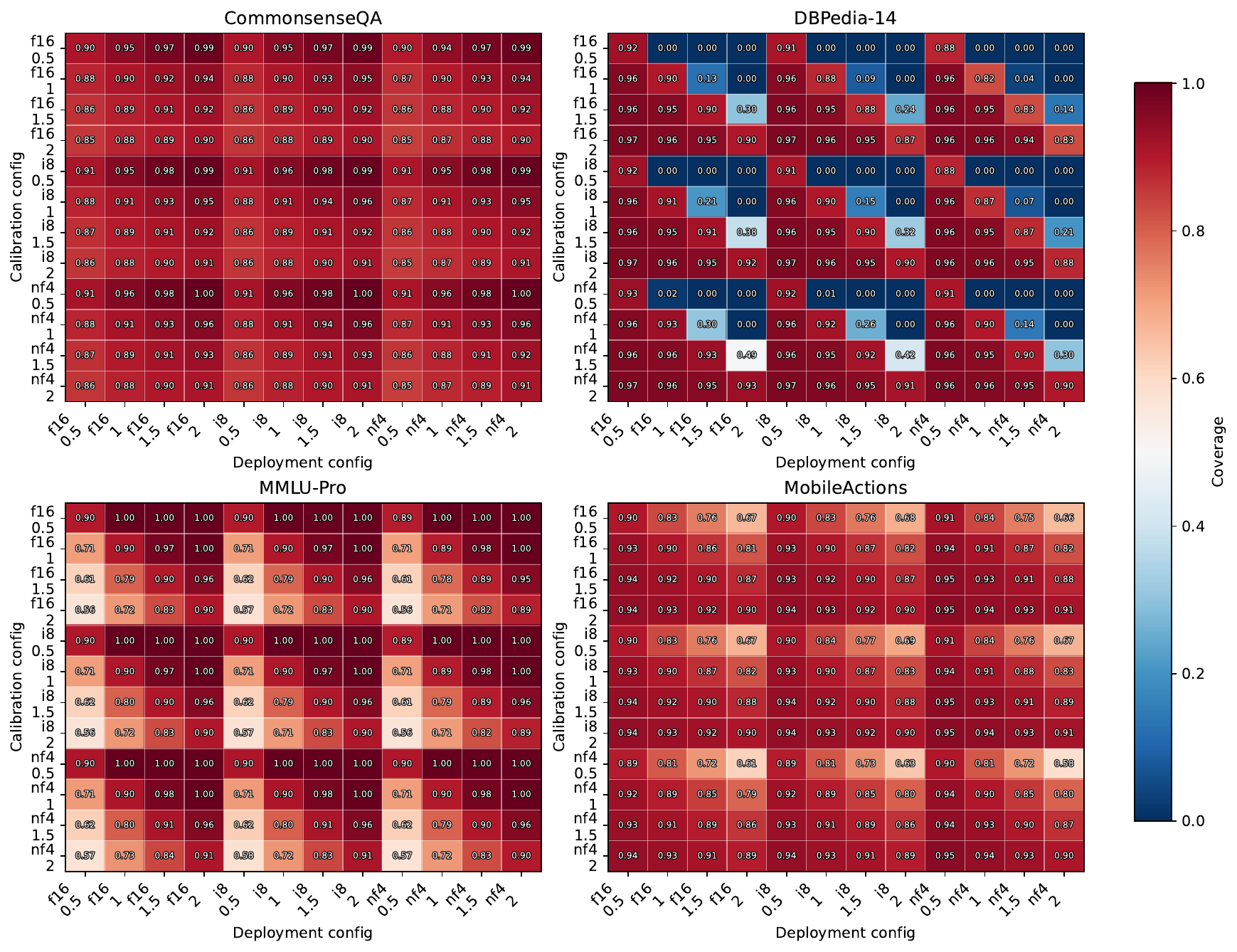}
        \caption{Coverage.}
    \end{subfigure}
    \hfill
    \begin{subfigure}[t]{0.99\textwidth}
        \centering
        \includegraphics[width=\textwidth]{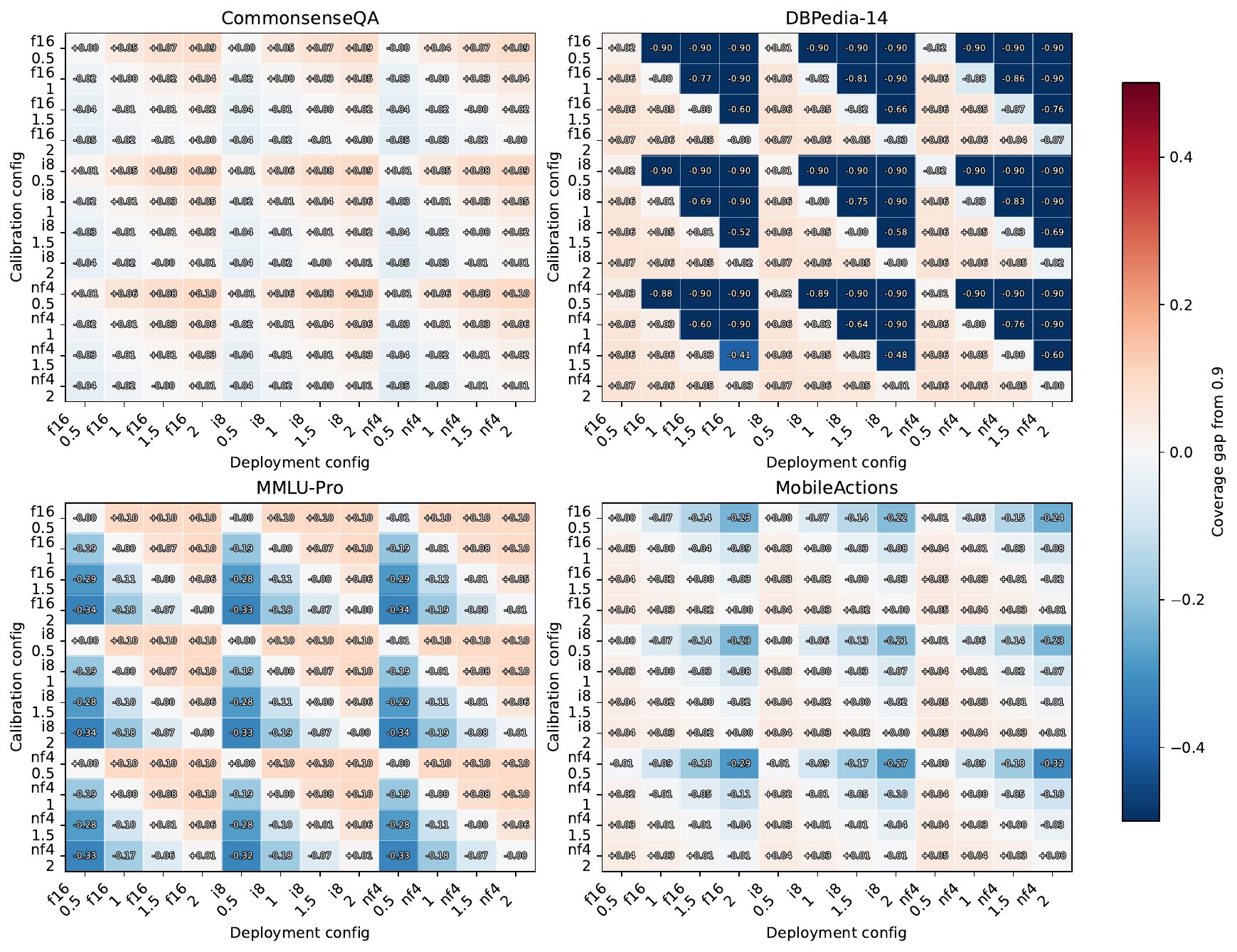}
        \caption{Coverage gap from target.}
    \end{subfigure}
    \caption{Pairwise vanilla calibration$\rightarrow$deployment structure for Qwen2.5-7B under LAC with $\alpha=0.1$.}
    \label{fig:pairwise_qwen7}
\end{figure*}

\begin{figure*}[t]
    \centering
    \begin{subfigure}[t]{0.99\textwidth}
        \centering
        \includegraphics[width=\textwidth]{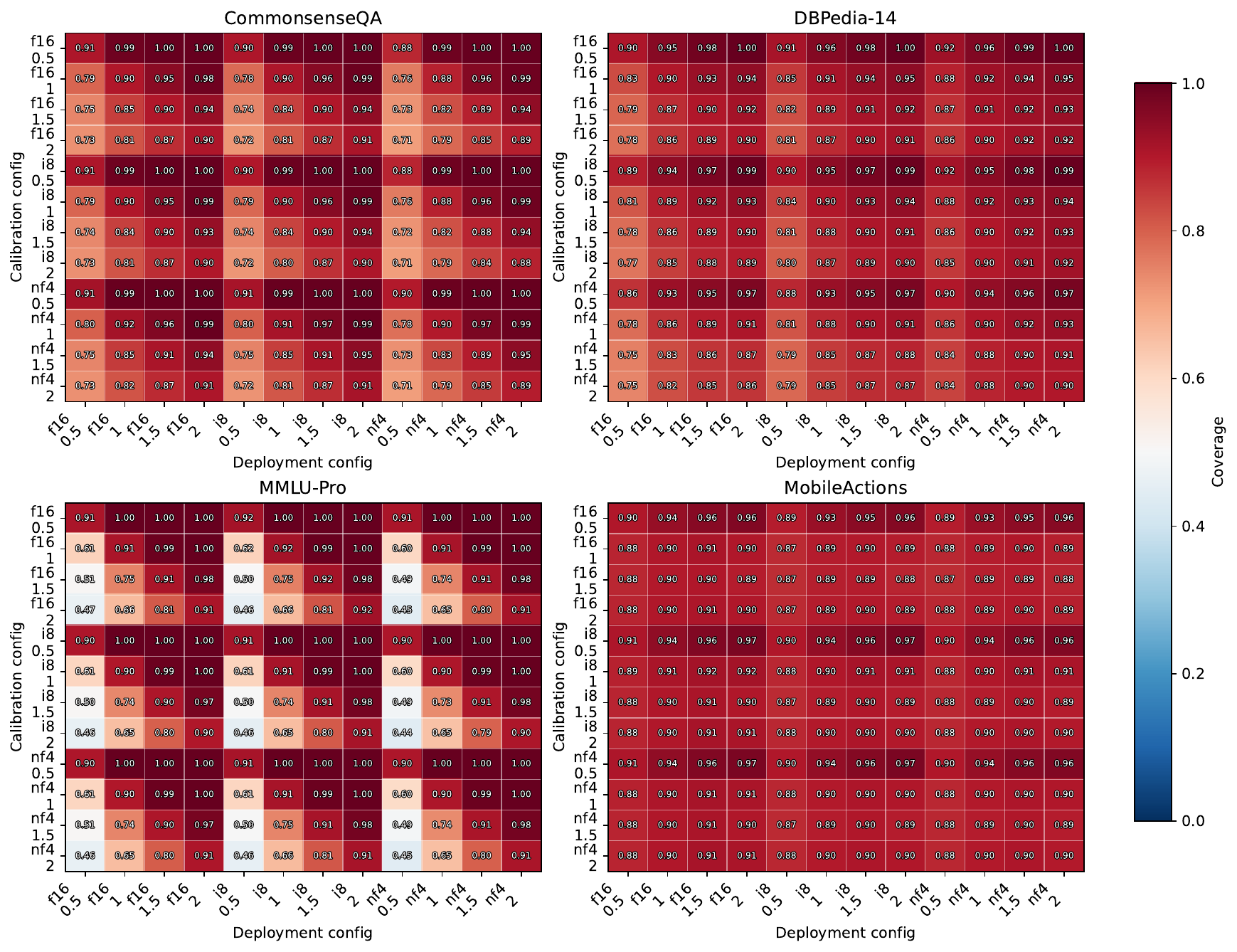}
        \caption{Coverage.}
    \end{subfigure}
    \hfill
    \begin{subfigure}[t]{0.99\textwidth}
        \centering
        \includegraphics[width=\textwidth]{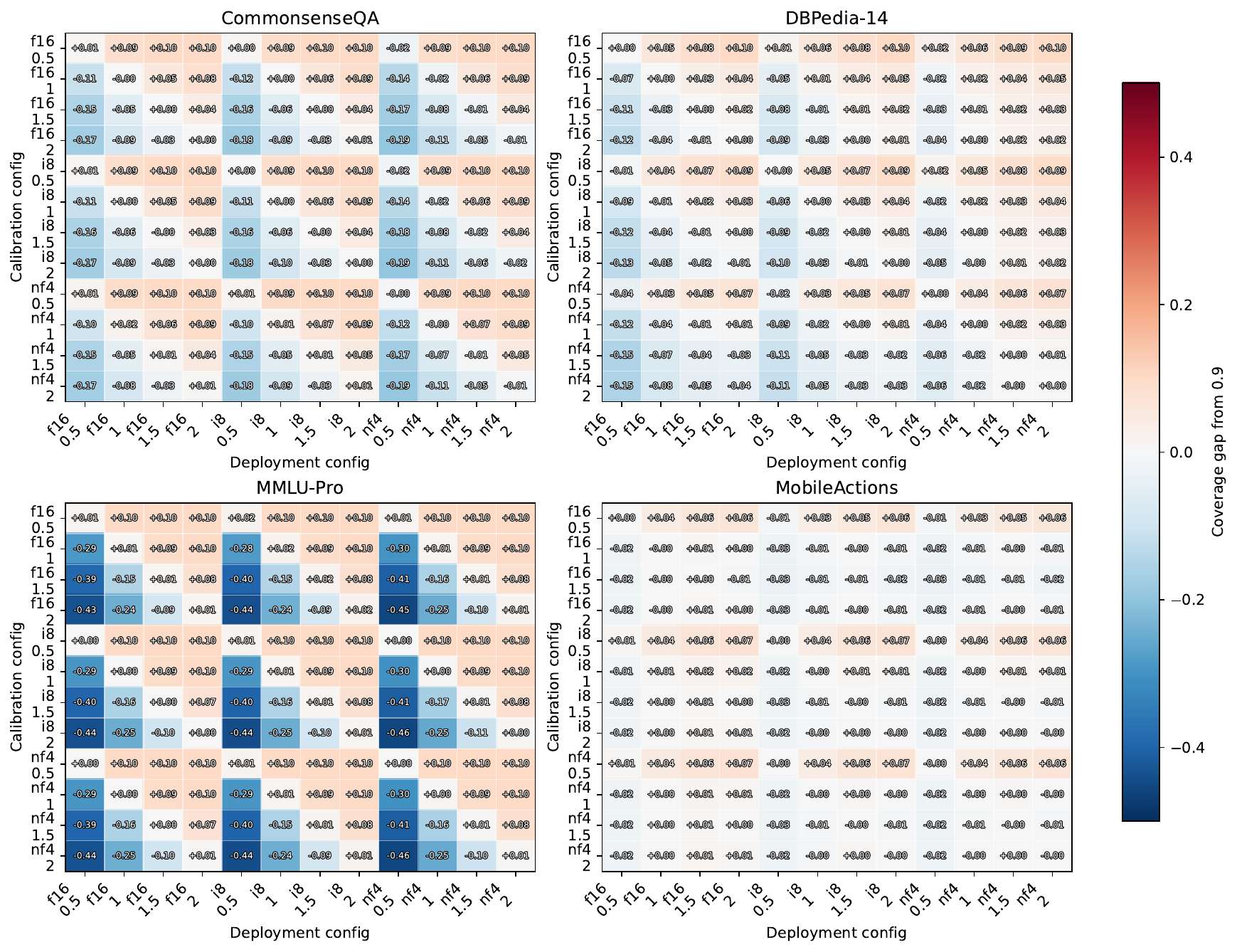}
        \caption{Coverage gap from target.}
    \end{subfigure}
    \caption{Pairwise vanilla calibration$\rightarrow$deployment structure for DeepSeek-R1-Distill-Qwen-7B under LAC with $\alpha=0.1$.}
    \label{fig:pairwise_deepseek}
\end{figure*}

\begin{figure*}[t]
    \centering
    \begin{subfigure}[t]{0.99\textwidth}
        \centering
        \includegraphics[width=\textwidth]{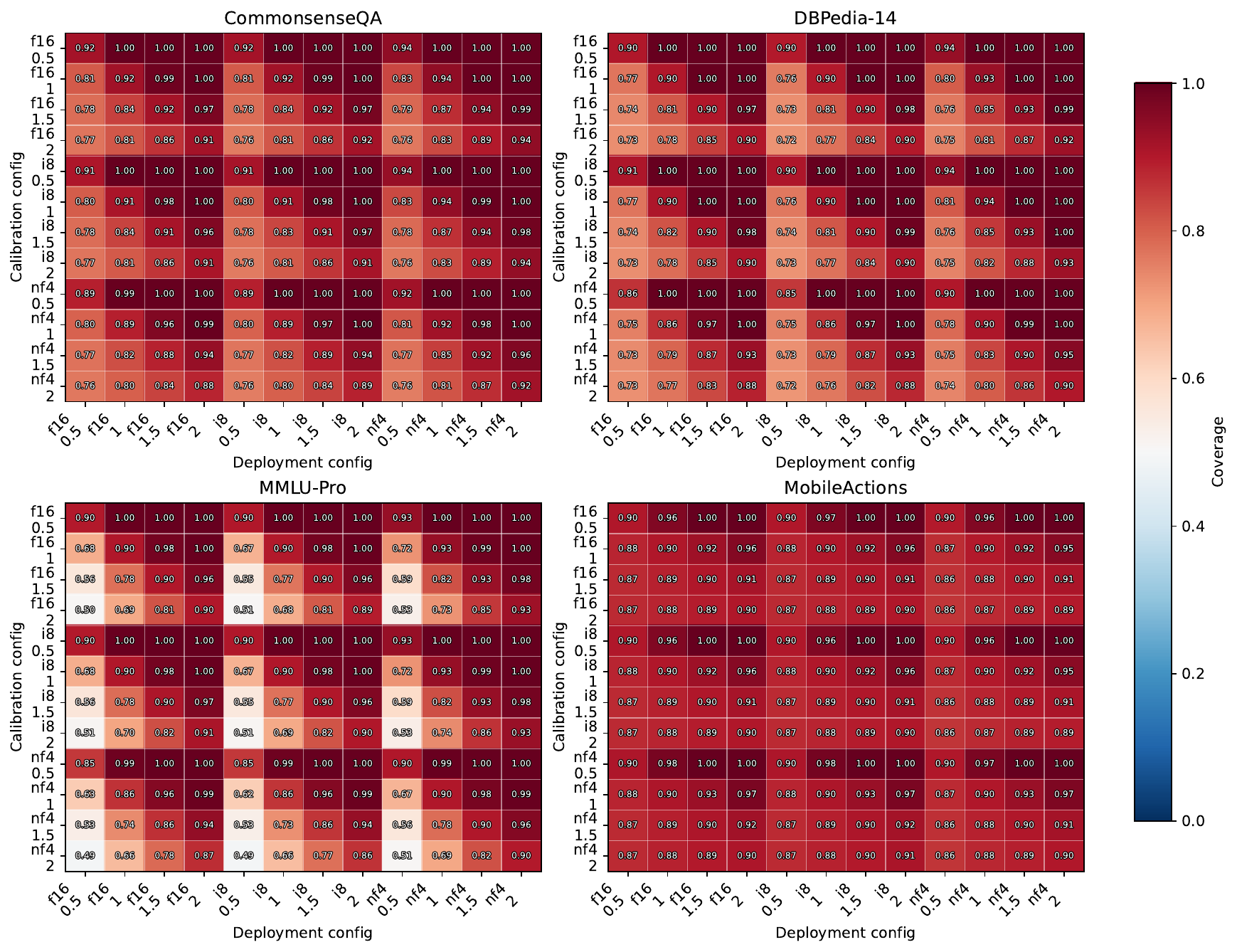}
        \caption{Coverage.}
    \end{subfigure}
    \hfill
    \begin{subfigure}[t]{0.99\textwidth}
        \centering
        \includegraphics[width=\textwidth]{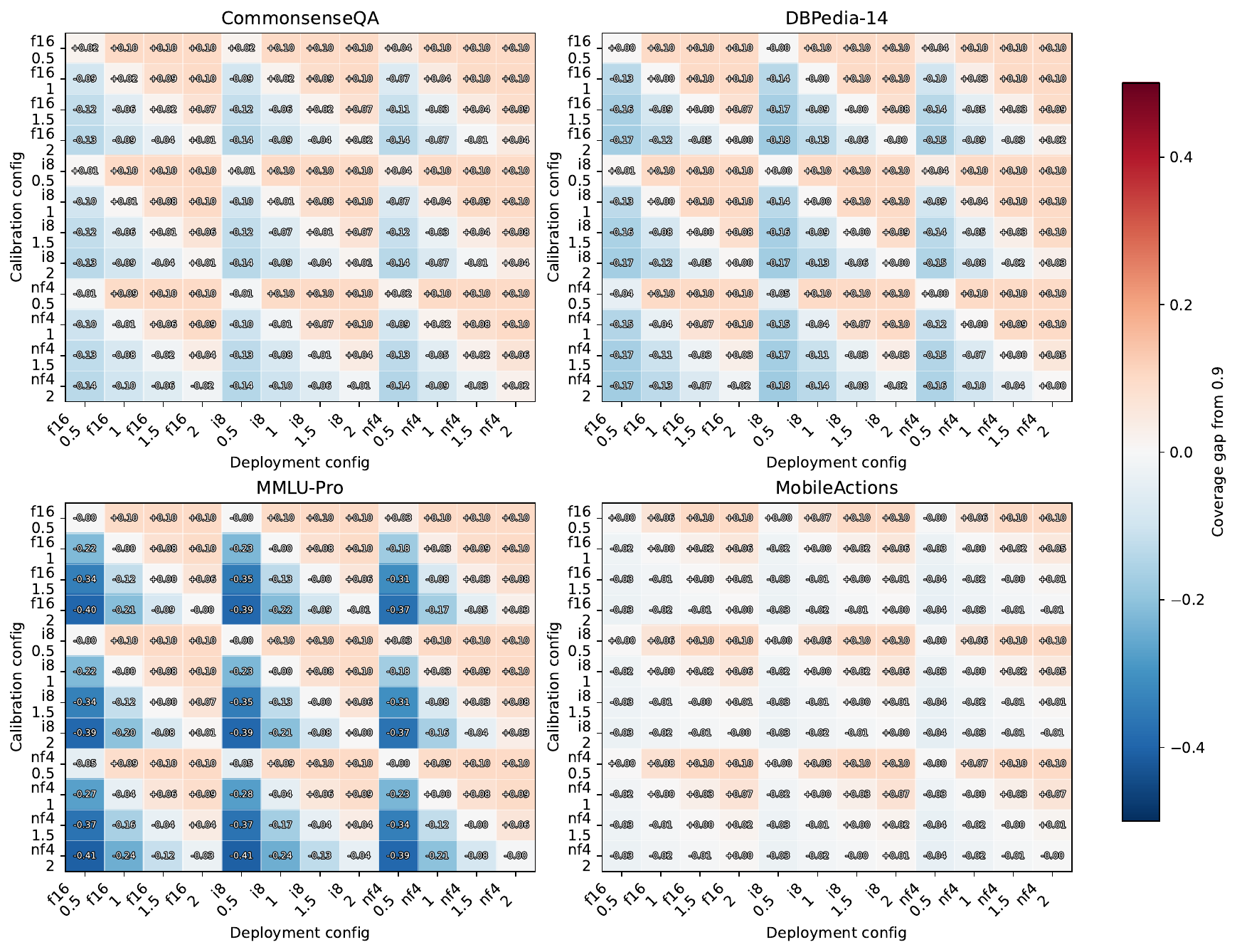}
        \caption{Coverage gap from target.}
    \end{subfigure}
    \caption{Pairwise vanilla calibration$\rightarrow$deployment structure for Gemma-2-2B under LAC with $\alpha=0.1$.}
    \label{fig:pairwise_gemma}
\end{figure*}

\begin{figure*}[t]
    \centering
    \begin{subfigure}[t]{0.99\textwidth}
        \centering
        \includegraphics[width=\textwidth]{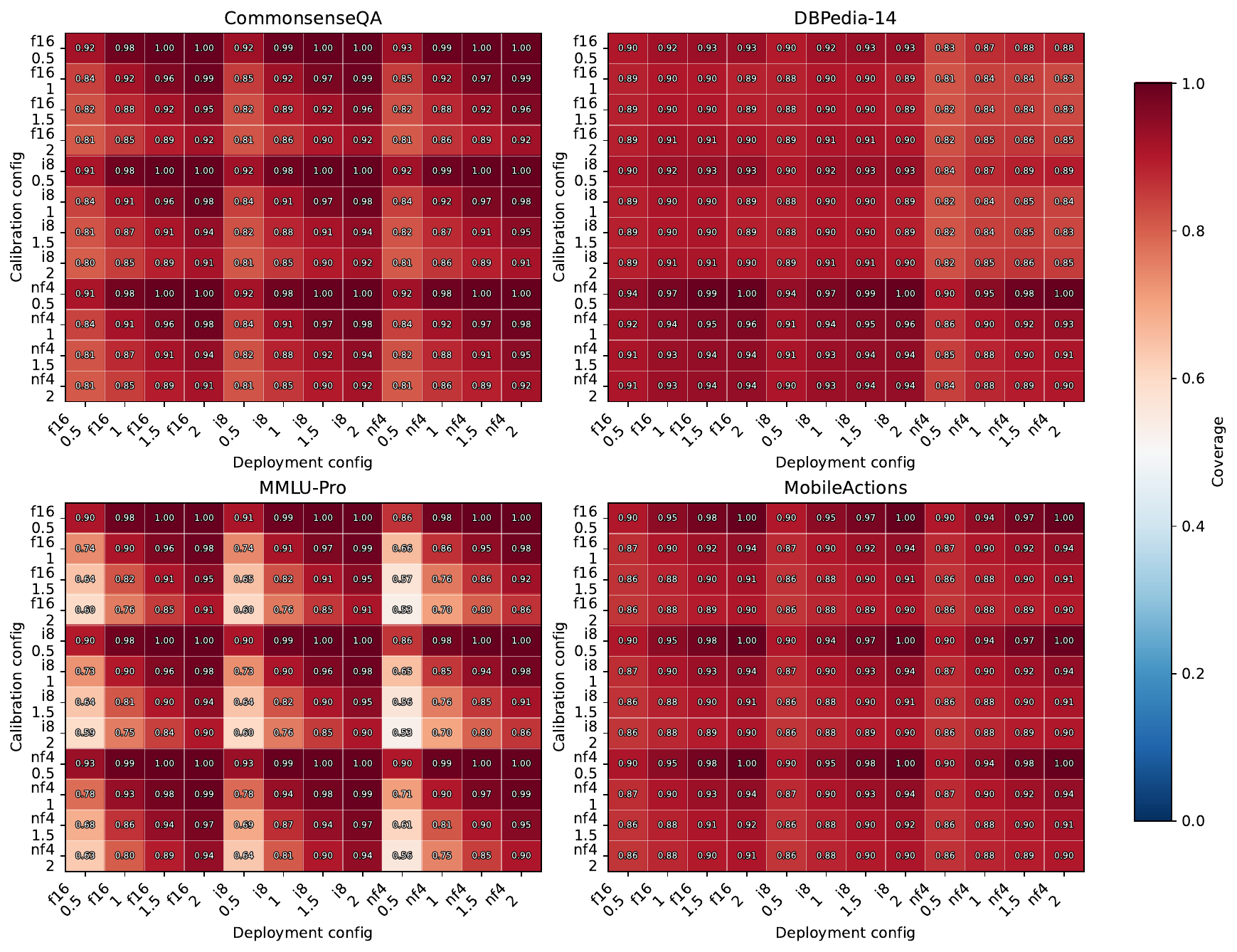}
        \caption{Coverage.}
    \end{subfigure}
    \hfill
    \begin{subfigure}[t]{0.99\textwidth}
        \centering
        \includegraphics[width=\textwidth]{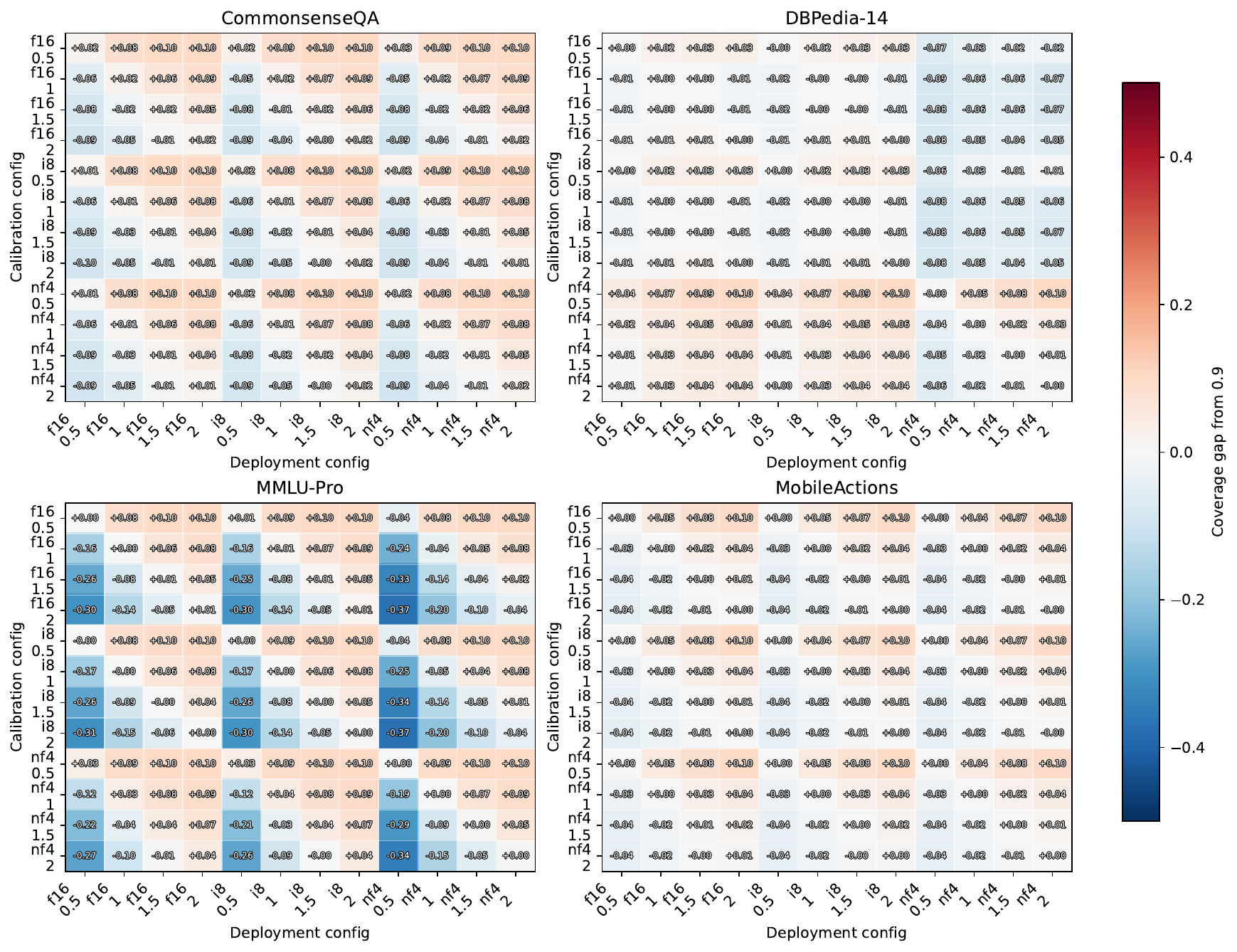}
        \caption{Coverage gap from target.}
    \end{subfigure}
    \caption{Pairwise vanilla calibration$\rightarrow$deployment structure for Llama-3.2-3B under LAC with $\alpha=0.1$.}
    \label{fig:pairwise_llama}
\end{figure*}

\begin{figure*}[t]
    \centering
    \begin{subfigure}[t]{0.99\textwidth}
        \centering
        \includegraphics[width=\textwidth]{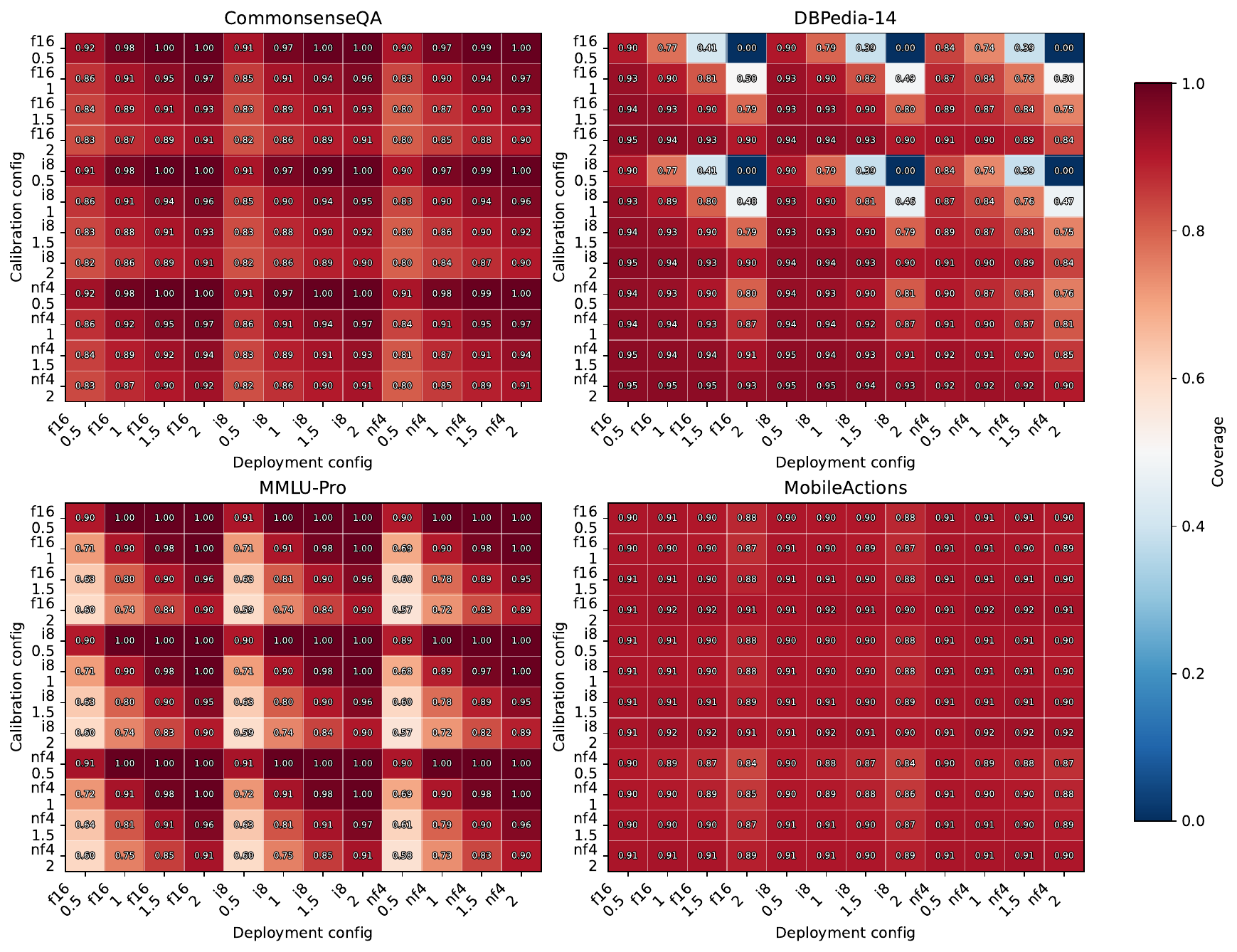}
        \caption{Coverage.}
    \end{subfigure}
    \hfill
    \begin{subfigure}[t]{0.99\textwidth}
        \centering
        \includegraphics[width=\textwidth]{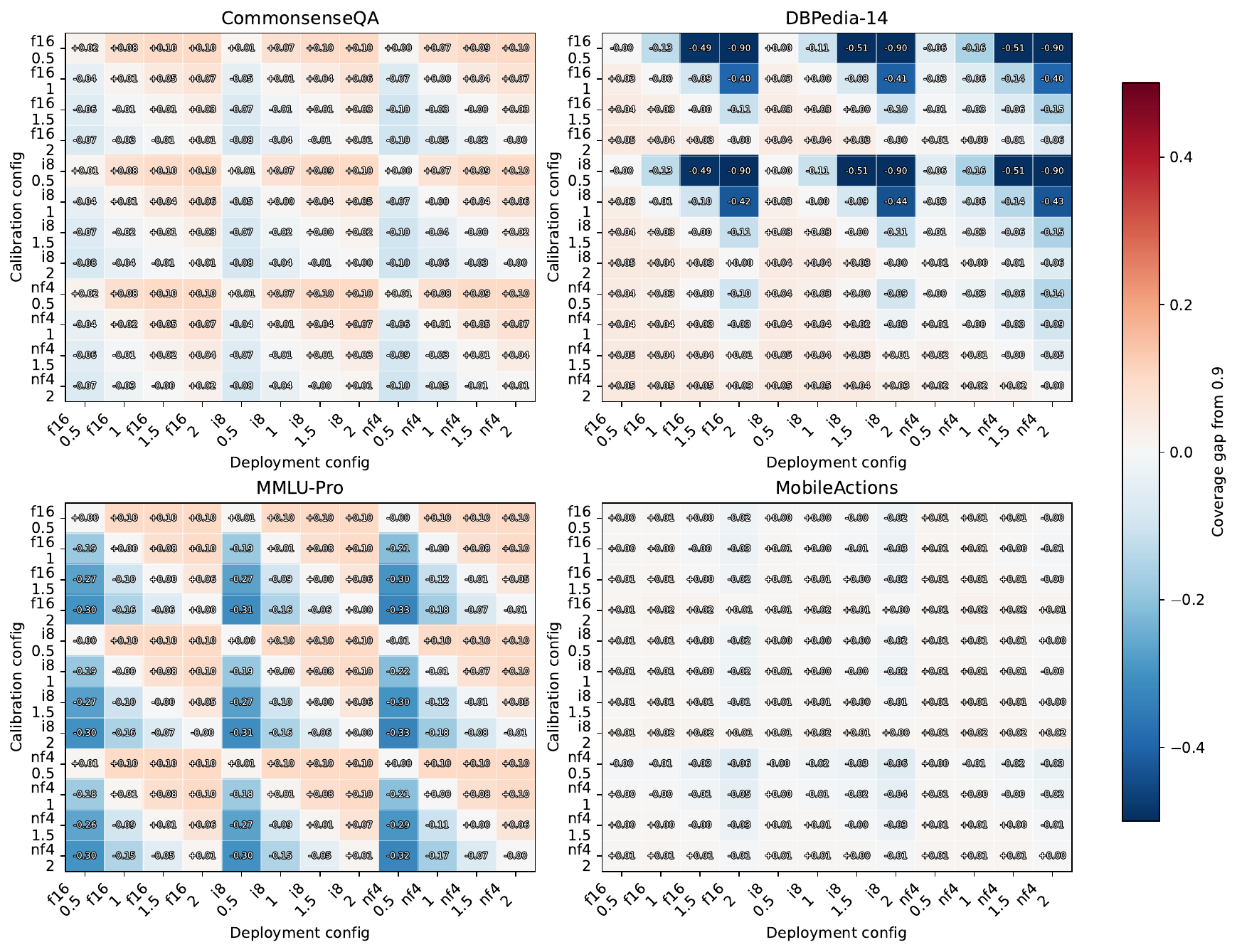}
        \caption{Coverage gap from target.}
    \end{subfigure}
    \caption{Pairwise vanilla calibration$\rightarrow$deployment structure for Phi-3-mini under LAC with $\alpha=0.1$.}
    \label{fig:pairwise_phi}
\end{figure*}

\begin{figure*}[t]
    \centering
    \begin{subfigure}[t]{0.99\textwidth}
        \centering
        \includegraphics[width=\textwidth]{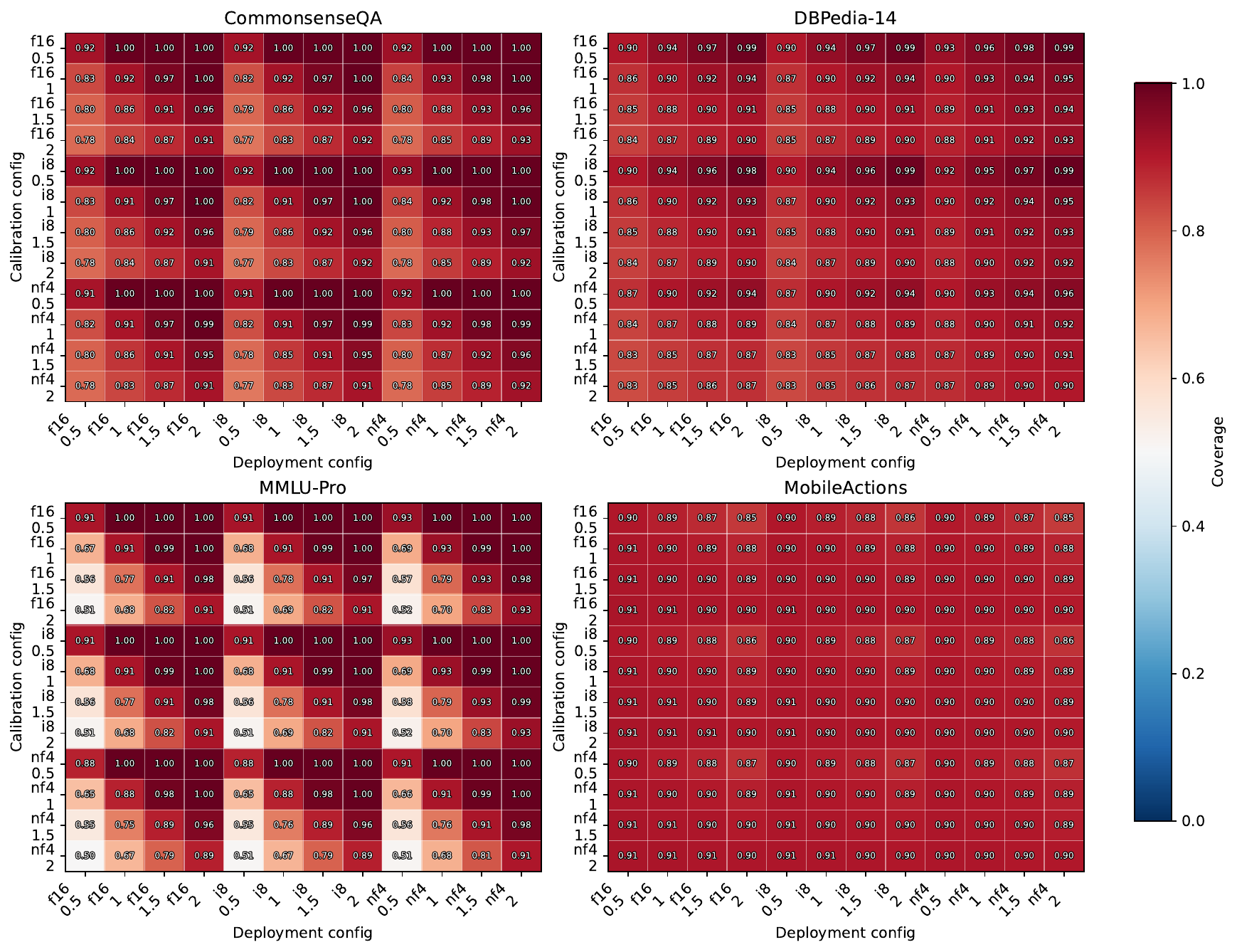}
        \caption{Coverage.}
    \end{subfigure}
    \hfill
    \begin{subfigure}[t]{0.99\textwidth}
        \centering
        \includegraphics[width=\textwidth]{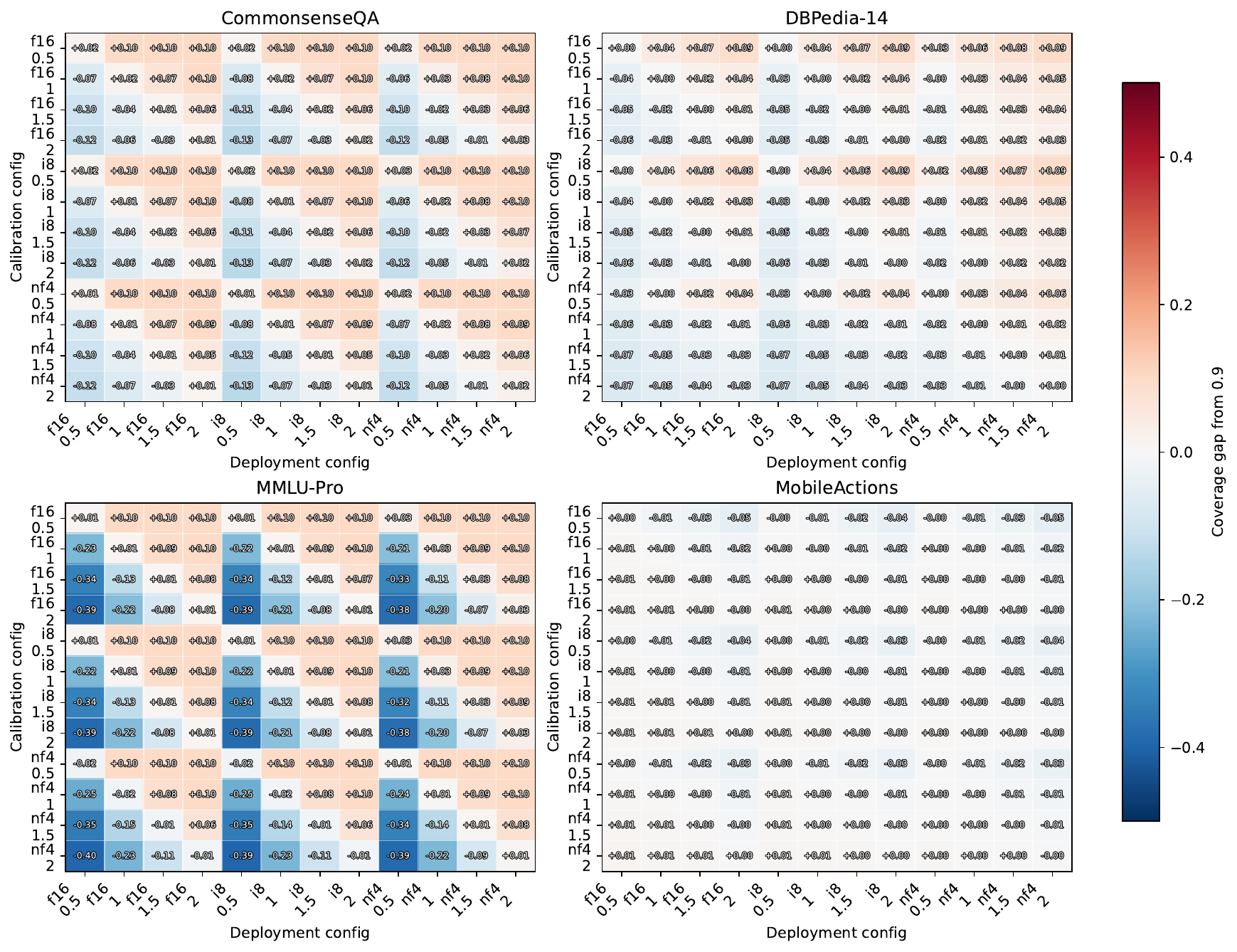}
        \caption{Coverage gap from target.}
    \end{subfigure}
    \caption{Pairwise vanilla calibration$\rightarrow$deployment structure for Mistral-7B under LAC with $\alpha=0.1$.}
    \label{fig:pairwise_mistral}
\end{figure*}

Figures~\ref{fig:pairwise_qwen15}--\ref{fig:pairwise_mistral} reveal structure that is difficult to see from summary statistics alone. First, the diagonal matched cells are generally close to target, as expected under the standard split-conformal setting. Second, off-diagonal behavior is highly configuration-dependent, with localized regions of severe undercoverage and other regions that are conservative. Third, these patterns differ materially across models. The same broad type of mismatch can therefore look relatively benign in one model family and clearly unreliable in another. This is exactly why model-level reporting is necessary for the present evaluation question.

Because each heatmap cell is numerically annotated, Figures~\ref{fig:pairwise_qwen15}--\ref{fig:pairwise_mistral} also support direct inspection of specific calibration$\rightarrow$deployment pairs without sacrificing the visual overview of the mismatch structure. The coverage-gap panels are especially useful for isolating undercoverage regions, while the raw coverage panels make the absolute magnitude of the deviations easy to interpret.

\section{Additional Robustness Analyses}
\label{app:robustness}

\subsection{Robustness to the score function}
\label{app:score_robustness}

\begin{table*}[t]
\centering
\small
\textbf{Panel A: APS}
\vspace{0.3em}
\centerline{
\resizebox{1.4\textwidth}{!}{%
% [inline block 0: 12 envs, 31140 chars in 4 pieces, piece 1 here, a bare % at each other -> data_tex | \begin{tabular}{lcccccccccccc} \toprule...]
\begin{tabular}{lcccccccccccc}
\toprule
Model & \multicolumn{2}{c}{Matched} & \multicolumn{2}{c}{Mismatch mean} & \multicolumn{2}{c}{Mismatch worst} & \multicolumn{2}{c}{Pooled} & \multicolumn{2}{c}{MaxThr} & \multicolumn{2}{c}{Latent-robust} \\
\cmidrule(lr){2-3} \cmidrule(lr){4-5} \cmidrule(lr){6-7} \cmidrule(lr){8-9} \cmidrule(lr){10-11} \cmidrule(lr){12-13}
 & Cov. & Size & Cov. & Size & Cov. & Size & Cov. & Size & Cov. & Size & Cov. & Size \\
\midrule
Qwen2.5-1.5B & 0.995 $\pm$ 0.002 & 4.193 $\pm$ 0.029 & 0.895 $\pm$ 0.008 & 2.097 $\pm$ 0.018 & 0.825 $\pm$ 0.006 & 1.277 $\pm$ 0.013 & 0.984 $\pm$ 0.009 & 3.256 $\pm$ 0.120 & 0.997 $\pm$ 0.001 & 4.427 $\pm$ 0.035 & 0.999 $\pm$ 0.001 & 4.732 $\pm$ 0.008 \\
Qwen2.5-3B & 0.943 $\pm$ 0.003 & 1.942 $\pm$ 0.010 & 0.895 $\pm$ 0.005 & 1.463 $\pm$ 0.006 & 0.854 $\pm$ 0.006 & 1.213 $\pm$ 0.013 & 0.943 $\pm$ 0.003 & 1.942 $\pm$ 0.010 & 0.943 $\pm$ 0.003 & 1.942 $\pm$ 0.010 & 0.943 $\pm$ 0.003 & 1.927 $\pm$ 0.012 \\
Qwen2.5-7B & 0.970 $\pm$ 0.005 & 2.455 $\pm$ 0.009 & 0.920 $\pm$ 0.004 & 1.604 $\pm$ 0.009 & 0.883 $\pm$ 0.006 & 1.226 $\pm$ 0.010 & 0.969 $\pm$ 0.008 & 2.373 $\pm$ 0.143 & 0.970 $\pm$ 0.005 & 2.455 $\pm$ 0.009 & 0.970 $\pm$ 0.005 & 2.438 $\pm$ 0.016 \\
DeepSeek-R1-7B & 0.980 $\pm$ 0.002 & 4.123 $\pm$ 0.044 & 0.830 $\pm$ 0.009 & 2.285 $\pm$ 0.031 & 0.738 $\pm$ 0.010 & 1.546 $\pm$ 0.032 & 0.926 $\pm$ 0.008 & 2.983 $\pm$ 0.076 & 0.981 $\pm$ 0.004 & 4.178 $\pm$ 0.029 & 1.000 $\pm$ 0.000 & 4.879 $\pm$ 0.052 \\
Gemma-2-2B & 0.995 $\pm$ 0.002 & 4.350 $\pm$ 0.047 & 0.859 $\pm$ 0.008 & 2.217 $\pm$ 0.050 & 0.775 $\pm$ 0.008 & 1.304 $\pm$ 0.031 & 0.984 $\pm$ 0.008 & 3.937 $\pm$ 0.130 & 0.996 $\pm$ 0.002 & 4.607 $\pm$ 0.018 & 0.998 $\pm$ 0.002 & 4.874 $\pm$ 0.010 \\
Llama-3.2-3B & 0.997 $\pm$ 0.002 & 4.230 $\pm$ 0.053 & 0.918 $\pm$ 0.008 & 2.377 $\pm$ 0.004 & 0.849 $\pm$ 0.011 & 1.491 $\pm$ 0.005 & 0.979 $\pm$ 0.004 & 3.220 $\pm$ 0.136 & 1.000 $\pm$ 0.000 & 4.392 $\pm$ 0.037 & 1.000 $\pm$ 0.000 & 4.652 $\pm$ 0.114 \\
\bottomrule
\end{tabular}%
}
}
\vspace{0.8em}
\textbf{Panel B: LAC}
\vspace{0.3em}
\centerline{
\resizebox{1.4\textwidth}{!}{%
\begin{tabular}{lcccccccccccc}
\toprule
Model & \multicolumn{2}{c}{Matched} & \multicolumn{2}{c}{Mismatch mean} & \multicolumn{2}{c}{Mismatch worst} & \multicolumn{2}{c}{Pooled} & \multicolumn{2}{c}{MaxThr} & \multicolumn{2}{c}{Latent-robust} \\
\cmidrule(lr){2-3} \cmidrule(lr){4-5} \cmidrule(lr){6-7} \cmidrule(lr){8-9} \cmidrule(lr){10-11} \cmidrule(lr){12-13}
 & Cov. & Size & Cov. & Size & Cov. & Size & Cov. & Size & Cov. & Size & Cov. & Size \\
\midrule
Qwen2.5-1.5B & 0.915 $\pm$ 0.023 & 1.876 $\pm$ 0.149 & 0.832 $\pm$ 0.014 & 1.339 $\pm$ 0.044 & 0.797 $\pm$ 0.016 & 1.166 $\pm$ 0.020 & 0.823 $\pm$ 0.006 & 1.255 $\pm$ 0.010 & 0.911 $\pm$ 0.024 & 1.850 $\pm$ 0.149 & 0.962 $\pm$ 0.023 & 2.652 $\pm$ 0.402 \\
Qwen2.5-3B & 0.913 $\pm$ 0.013 & 1.489 $\pm$ 0.049 & 0.846 $\pm$ 0.011 & 1.187 $\pm$ 0.011 & 0.821 $\pm$ 0.008 & 1.086 $\pm$ 0.008 & 0.838 $\pm$ 0.015 & 1.154 $\pm$ 0.019 & 0.910 $\pm$ 0.014 & 1.474 $\pm$ 0.048 & 0.998 $\pm$ 0.002 & 4.957 $\pm$ 0.012 \\
Qwen2.5-7B & 0.907 $\pm$ 0.005 & 1.369 $\pm$ 0.010 & 0.866 $\pm$ 0.012 & 1.153 $\pm$ 0.037 & 0.844 $\pm$ 0.009 & 1.074 $\pm$ 0.020 & 0.861 $\pm$ 0.011 & 1.120 $\pm$ 0.041 & 0.905 $\pm$ 0.015 & 1.353 $\pm$ 0.072 & 0.954 $\pm$ 0.015 & 2.007 $\pm$ 0.198 \\
DeepSeek-R1-7B & 0.895 $\pm$ 0.003 & 2.637 $\pm$ 0.029 & 0.759 $\pm$ 0.008 & 1.675 $\pm$ 0.036 & 0.708 $\pm$ 0.007 & 1.374 $\pm$ 0.022 & 0.752 $\pm$ 0.008 & 1.591 $\pm$ 0.005 & 0.892 $\pm$ 0.004 & 2.616 $\pm$ 0.027 & 0.952 $\pm$ 0.013 & 3.404 $\pm$ 0.198 \\
Gemma-2-2B & 0.919 $\pm$ 0.017 & 2.544 $\pm$ 0.193 & 0.816 $\pm$ 0.013 & 1.655 $\pm$ 0.092 & 0.755 $\pm$ 0.011 & 1.208 $\pm$ 0.037 & 0.799 $\pm$ 0.014 & 1.430 $\pm$ 0.080 & 0.943 $\pm$ 0.014 & 2.956 $\pm$ 0.196 & 0.973 $\pm$ 0.008 & 3.573 $\pm$ 0.139 \\
Llama-3.2-3B & 0.919 $\pm$ 0.012 & 1.993 $\pm$ 0.045 & 0.841 $\pm$ 0.014 & 1.470 $\pm$ 0.020 & 0.805 $\pm$ 0.014 & 1.252 $\pm$ 0.003 & 0.830 $\pm$ 0.013 & 1.372 $\pm$ 0.027 & 0.923 $\pm$ 0.016 & 2.064 $\pm$ 0.116 & 0.967 $\pm$ 0.013 & 2.826 $\pm$ 0.223 \\
\bottomrule
\end{tabular}%
}
}
\vspace{0.8em}
\textbf{Panel C: RAPS}
\vspace{0.3em}
\centerline{
\resizebox{1.4\textwidth}{!}{%
\begin{tabular}{lcccccccccccc}
\toprule
Model & \multicolumn{2}{c}{Matched} & \multicolumn{2}{c}{Mismatch mean} & \multicolumn{2}{c}{Mismatch worst} & \multicolumn{2}{c}{Pooled} & \multicolumn{2}{c}{MaxThr} & \multicolumn{2}{c}{Latent-robust} \\
\cmidrule(lr){2-3} \cmidrule(lr){4-5} \cmidrule(lr){6-7} \cmidrule(lr){8-9} \cmidrule(lr){10-11} \cmidrule(lr){12-13}
 & Cov. & Size & Cov. & Size & Cov. & Size & Cov. & Size & Cov. & Size & Cov. & Size \\
\midrule
Qwen2.5-1.5B & 0.967 $\pm$ 0.006 & 3.130 $\pm$ 0.053 & 0.893 $\pm$ 0.005 & 1.927 $\pm$ 0.015 & 0.827 $\pm$ 0.007 & 1.285 $\pm$ 0.006 & 0.930 $\pm$ 0.007 & 2.193 $\pm$ 0.031 & 0.967 $\pm$ 0.006 & 3.113 $\pm$ 0.038 & 0.983 $\pm$ 0.008 & 4.097 $\pm$ 0.105 \\
Qwen2.5-3B & 0.952 $\pm$ 0.005 & 2.235 $\pm$ 0.005 & 0.937 $\pm$ 0.008 & 2.084 $\pm$ 0.002 & 0.928 $\pm$ 0.009 & 2.017 $\pm$ 0.003 & 0.941 $\pm$ 0.014 & 2.124 $\pm$ 0.100 & 0.952 $\pm$ 0.005 & 2.235 $\pm$ 0.005 & 0.974 $\pm$ 0.006 & 3.417 $\pm$ 0.507 \\
Qwen2.5-7B & 0.963 $\pm$ 0.011 & 2.375 $\pm$ 0.003 & 0.932 $\pm$ 0.007 & 1.901 $\pm$ 0.006 & 0.891 $\pm$ 0.005 & 1.254 $\pm$ 0.008 & 0.940 $\pm$ 0.009 & 2.049 $\pm$ 0.011 & 0.963 $\pm$ 0.011 & 2.375 $\pm$ 0.003 & 0.982 $\pm$ 0.009 & 3.467 $\pm$ 0.462 \\
DeepSeek-R1-7B & 0.967 $\pm$ 0.008 & 4.046 $\pm$ 0.022 & 0.865 $\pm$ 0.019 & 2.645 $\pm$ 0.131 & 0.759 $\pm$ 0.024 & 1.625 $\pm$ 0.066 & 0.914 $\pm$ 0.006 & 3.081 $\pm$ 0.094 & 0.970 $\pm$ 0.006 & 4.104 $\pm$ 0.079 & 0.999 $\pm$ 0.002 & 4.885 $\pm$ 0.199 \\
Gemma-2-2B & 0.964 $\pm$ 0.003 & 3.490 $\pm$ 0.058 & 0.864 $\pm$ 0.007 & 2.237 $\pm$ 0.075 & 0.779 $\pm$ 0.008 & 1.304 $\pm$ 0.033 & 0.905 $\pm$ 0.010 & 2.509 $\pm$ 0.109 & 0.970 $\pm$ 0.003 & 3.725 $\pm$ 0.110 & 0.998 $\pm$ 0.002 & 4.833 $\pm$ 0.153 \\
Llama-3.2-3B & 0.969 $\pm$ 0.009 & 3.096 $\pm$ 0.115 & 0.905 $\pm$ 0.010 & 2.027 $\pm$ 0.018 & 0.849 $\pm$ 0.011 & 1.483 $\pm$ 0.007 & 0.936 $\pm$ 0.008 & 2.381 $\pm$ 0.020 & 0.967 $\pm$ 0.007 & 3.081 $\pm$ 0.102 & 0.989 $\pm$ 0.010 & 3.976 $\pm$ 0.424 \\
\bottomrule
\end{tabular}%
}
}
\caption{Score-function robustness on CommonsenseQA at deployment $\mathrm{nf4}@T=0.5$ with $\alpha=0.1$. Each panel reports mean $\pm$ standard deviation across runs.}
\label{tab:score_robustness_commonsense_qa}
\end{table*}
\begin{table*}[t]
\centering
\small
\textbf{Panel A: APS}
\vspace{0.3em}
\centerline{
\resizebox{1.4\textwidth}{!}{%
\begin{tabular}{lcccccccccccc}
\toprule
Model & \multicolumn{2}{c}{Matched} & \multicolumn{2}{c}{Mismatch mean} & \multicolumn{2}{c}{Mismatch worst} & \multicolumn{2}{c}{Pooled} & \multicolumn{2}{c}{MaxThr} & \multicolumn{2}{c}{Latent-robust} \\
\cmidrule(lr){2-3} \cmidrule(lr){4-5} \cmidrule(lr){6-7} \cmidrule(lr){8-9} \cmidrule(lr){10-11} \cmidrule(lr){12-13}
 & Cov. & Size & Cov. & Size & Cov. & Size & Cov. & Size & Cov. & Size & Cov. & Size \\
\midrule
Qwen2.5-1.5B & 0.985 $\pm$ 0.000 & 8.044 $\pm$ 0.103 & 0.944 $\pm$ 0.001 & 3.761 $\pm$ 0.008 & 0.903 $\pm$ 0.001 & 1.290 $\pm$ 0.003 & 0.973 $\pm$ 0.002 & 5.044 $\pm$ 0.303 & 0.992 $\pm$ 0.000 & 10.781 $\pm$ 0.002 & 0.996 $\pm$ 0.000 & 11.534 $\pm$ 0.148 \\
Qwen2.5-3B & 0.995 $\pm$ 0.000 & 8.189 $\pm$ 0.005 & 0.941 $\pm$ 0.002 & 3.292 $\pm$ 0.034 & 0.887 $\pm$ 0.002 & 1.571 $\pm$ 0.011 & 0.985 $\pm$ 0.001 & 3.850 $\pm$ 0.122 & 0.995 $\pm$ 0.000 & 8.189 $\pm$ 0.005 & 0.994 $\pm$ 0.000 & 8.124 $\pm$ 0.016 \\
Qwen2.5-7B & 0.990 $\pm$ 0.000 & 1.419 $\pm$ 0.001 & 0.981 $\pm$ 0.000 & 1.115 $\pm$ 0.001 & 0.976 $\pm$ 0.000 & 1.019 $\pm$ 0.000 & 0.990 $\pm$ 0.000 & 1.419 $\pm$ 0.001 & 0.990 $\pm$ 0.000 & 1.419 $\pm$ 0.001 & 0.988 $\pm$ 0.001 & 1.244 $\pm$ 0.059 \\
DeepSeek-R1-7B & 0.968 $\pm$ 0.001 & 3.904 $\pm$ 0.051 & 0.900 $\pm$ 0.001 & 2.004 $\pm$ 0.008 & 0.847 $\pm$ 0.004 & 1.294 $\pm$ 0.014 & 0.934 $\pm$ 0.001 & 2.230 $\pm$ 0.016 & 0.968 $\pm$ 0.001 & 3.923 $\pm$ 0.029 & 0.976 $\pm$ 0.001 & 4.888 $\pm$ 0.111 \\
Gemma-2-2B & 0.953 $\pm$ 0.001 & 6.924 $\pm$ 0.055 & 0.793 $\pm$ 0.000 & 2.601 $\pm$ 0.013 & 0.724 $\pm$ 0.000 & 1.104 $\pm$ 0.002 & 0.920 $\pm$ 0.001 & 4.358 $\pm$ 0.105 & 0.966 $\pm$ 0.001 & 8.233 $\pm$ 0.070 & 0.989 $\pm$ 0.002 & 10.921 $\pm$ 0.287 \\
Llama-3.2-3B & 0.965 $\pm$ 0.000 & 5.031 $\pm$ 0.057 & 0.879 $\pm$ 0.000 & 2.171 $\pm$ 0.010 & 0.819 $\pm$ 0.000 & 1.167 $\pm$ 0.002 & 0.930 $\pm$ 0.001 & 2.521 $\pm$ 0.055 & 0.969 $\pm$ 0.000 & 5.742 $\pm$ 0.065 & 0.976 $\pm$ 0.000 & 7.836 $\pm$ 0.139 \\
\bottomrule
\end{tabular}%
}
}
\vspace{0.8em}
\textbf{Panel B: LAC}
\vspace{0.3em}
\centerline{
\resizebox{1.4\textwidth}{!}{%
\begin{tabular}{lcccccccccccc}
\toprule
Model & \multicolumn{2}{c}{Matched} & \multicolumn{2}{c}{Mismatch mean} & \multicolumn{2}{c}{Mismatch worst} & \multicolumn{2}{c}{Pooled} & \multicolumn{2}{c}{MaxThr} & \multicolumn{2}{c}{Latent-robust} \\
\cmidrule(lr){2-3} \cmidrule(lr){4-5} \cmidrule(lr){6-7} \cmidrule(lr){8-9} \cmidrule(lr){10-11} \cmidrule(lr){12-13}
 & Cov. & Size & Cov. & Size & Cov. & Size & Cov. & Size & Cov. & Size & Cov. & Size \\
\midrule
Qwen2.5-1.5B & 0.899 $\pm$ 0.006 & 1.231 $\pm$ 0.042 & 0.884 $\pm$ 0.002 & 1.139 $\pm$ 0.013 & 0.876 $\pm$ 0.002 & 1.098 $\pm$ 0.007 & 0.883 $\pm$ 0.001 & 1.130 $\pm$ 0.004 & 0.901 $\pm$ 0.004 & 1.254 $\pm$ 0.033 & 0.947 $\pm$ 0.008 & 2.505 $\pm$ 0.319 \\
Qwen2.5-3B & 0.900 $\pm$ 0.002 & 1.659 $\pm$ 0.019 & 0.795 $\pm$ 0.003 & 1.168 $\pm$ 0.007 & 0.757 $\pm$ 0.001 & 1.061 $\pm$ 0.003 & 0.803 $\pm$ 0.006 & 1.185 $\pm$ 0.016 & 0.900 $\pm$ 0.002 & 1.658 $\pm$ 0.018 & 0.946 $\pm$ 0.002 & 2.342 $\pm$ 0.034 \\
Qwen2.5-7B & 0.908 $\pm$ 0.000 & 0.916 $\pm$ 0.000 & 0.946 $\pm$ 0.001 & 0.966 $\pm$ 0.000 & 0.881 $\pm$ 0.001 & 0.888 $\pm$ 0.000 & 0.964 $\pm$ 0.000 & 0.988 $\pm$ 0.000 & 0.965 $\pm$ 0.000 & 0.991 $\pm$ 0.000 & 0.966 $\pm$ 0.000 & 0.993 $\pm$ 0.000 \\
DeepSeek-R1-7B & 0.902 $\pm$ 0.004 & 1.701 $\pm$ 0.036 & 0.871 $\pm$ 0.002 & 1.472 $\pm$ 0.012 & 0.838 $\pm$ 0.002 & 1.251 $\pm$ 0.004 & 0.875 $\pm$ 0.001 & 1.452 $\pm$ 0.009 & 0.924 $\pm$ 0.002 & 2.024 $\pm$ 0.025 & 0.956 $\pm$ 0.003 & 3.026 $\pm$ 0.200 \\
Gemma-2-2B & 0.902 $\pm$ 0.004 & 3.485 $\pm$ 0.134 & 0.799 $\pm$ 0.001 & 2.251 $\pm$ 0.046 & 0.744 $\pm$ 0.000 & 1.232 $\pm$ 0.002 & 0.782 $\pm$ 0.000 & 1.611 $\pm$ 0.003 & 0.940 $\pm$ 0.004 & 5.953 $\pm$ 0.237 & 0.987 $\pm$ 0.005 & 10.787 $\pm$ 0.672 \\
Llama-3.2-3B & 0.899 $\pm$ 0.002 & 1.750 $\pm$ 0.024 & 0.829 $\pm$ 0.001 & 1.216 $\pm$ 0.005 & 0.815 $\pm$ 0.000 & 1.152 $\pm$ 0.002 & 0.833 $\pm$ 0.000 & 1.226 $\pm$ 0.002 & 0.898 $\pm$ 0.002 & 1.748 $\pm$ 0.026 & 0.947 $\pm$ 0.004 & 3.576 $\pm$ 0.266 \\
\bottomrule
\end{tabular}%
}
}
\vspace{0.8em}
\textbf{Panel C: RAPS}
\vspace{0.3em}
\centerline{
\resizebox{1.4\textwidth}{!}{%
\begin{tabular}{lcccccccccccc}
\toprule
Model & \multicolumn{2}{c}{Matched} & \multicolumn{2}{c}{Mismatch mean} & \multicolumn{2}{c}{Mismatch worst} & \multicolumn{2}{c}{Pooled} & \multicolumn{2}{c}{MaxThr} & \multicolumn{2}{c}{Latent-robust} \\
\cmidrule(lr){2-3} \cmidrule(lr){4-5} \cmidrule(lr){6-7} \cmidrule(lr){8-9} \cmidrule(lr){10-11} \cmidrule(lr){12-13}
 & Cov. & Size & Cov. & Size & Cov. & Size & Cov. & Size & Cov. & Size & Cov. & Size \\
\midrule
Qwen2.5-1.5B & 0.940 $\pm$ 0.001 & 2.548 $\pm$ 0.024 & 0.927 $\pm$ 0.001 & 1.866 $\pm$ 0.011 & 0.903 $\pm$ 0.001 & 1.287 $\pm$ 0.004 & 0.937 $\pm$ 0.001 & 2.310 $\pm$ 0.014 & 0.941 $\pm$ 0.001 & 2.653 $\pm$ 0.047 & 0.961 $\pm$ 0.005 & 4.588 $\pm$ 0.384 \\
Qwen2.5-3B & 0.933 $\pm$ 0.001 & 3.177 $\pm$ 0.007 & 0.904 $\pm$ 0.001 & 1.865 $\pm$ 0.011 & 0.884 $\pm$ 0.001 & 1.531 $\pm$ 0.003 & 0.909 $\pm$ 0.000 & 2.087 $\pm$ 0.001 & 0.933 $\pm$ 0.001 & 3.176 $\pm$ 0.007 & 0.980 $\pm$ 0.000 & 4.417 $\pm$ 0.013 \\
Qwen2.5-7B & 0.985 $\pm$ 0.000 & 1.113 $\pm$ 0.000 & 0.980 $\pm$ 0.000 & 1.049 $\pm$ 0.000 & 0.976 $\pm$ 0.000 & 1.018 $\pm$ 0.001 & 0.985 $\pm$ 0.000 & 1.113 $\pm$ 0.000 & 0.985 $\pm$ 0.000 & 1.113 $\pm$ 0.000 & 0.985 $\pm$ 0.000 & 1.113 $\pm$ 0.000 \\
DeepSeek-R1-7B & 0.952 $\pm$ 0.000 & 2.850 $\pm$ 0.024 & 0.903 $\pm$ 0.001 & 1.944 $\pm$ 0.005 & 0.851 $\pm$ 0.002 & 1.313 $\pm$ 0.002 & 0.937 $\pm$ 0.000 & 2.235 $\pm$ 0.004 & 0.955 $\pm$ 0.000 & 3.123 $\pm$ 0.016 & 0.967 $\pm$ 0.002 & 4.166 $\pm$ 0.123 \\
Gemma-2-2B & 0.924 $\pm$ 0.002 & 5.172 $\pm$ 0.077 & 0.825 $\pm$ 0.004 & 2.900 $\pm$ 0.069 & 0.727 $\pm$ 0.001 & 1.120 $\pm$ 0.003 & 0.892 $\pm$ 0.005 & 3.625 $\pm$ 0.251 & 0.928 $\pm$ 0.002 & 5.630 $\pm$ 0.147 & 0.976 $\pm$ 0.003 & 10.606 $\pm$ 0.241 \\
Llama-3.2-3B & 0.938 $\pm$ 0.001 & 2.746 $\pm$ 0.018 & 0.873 $\pm$ 0.000 & 1.599 $\pm$ 0.003 & 0.822 $\pm$ 0.001 & 1.179 $\pm$ 0.006 & 0.920 $\pm$ 0.000 & 2.006 $\pm$ 0.003 & 0.938 $\pm$ 0.001 & 2.745 $\pm$ 0.018 & 0.957 $\pm$ 0.004 & 4.506 $\pm$ 0.374 \\
\bottomrule
\end{tabular}%
}
}
\caption{Score-function robustness on DBPedia-14 at deployment $\mathrm{nf4}@T=0.5$ with $\alpha=0.1$. Each panel reports mean $\pm$ standard deviation across runs.}
\label{tab:score_robustness_dbpedia_14}
\end{table*}
\begin{table*}[t]
\centering
\small
\textbf{Panel A: APS}
\vspace{0.3em}
\centerline{
\resizebox{1.4\textwidth}{!}{%
\begin{tabular}{lcccccccccccc}
\toprule
Model & \multicolumn{2}{c}{Matched} & \multicolumn{2}{c}{Mismatch mean} & \multicolumn{2}{c}{Mismatch worst} & \multicolumn{2}{c}{Pooled} & \multicolumn{2}{c}{MaxThr} & \multicolumn{2}{c}{Latent-robust} \\
\cmidrule(lr){2-3} \cmidrule(lr){4-5} \cmidrule(lr){6-7} \cmidrule(lr){8-9} \cmidrule(lr){10-11} \cmidrule(lr){12-13}
 & Cov. & Size & Cov. & Size & Cov. & Size & Cov. & Size & Cov. & Size & Cov. & Size \\
\midrule
Qwen2.5-1.5B & 0.981 $\pm$ 0.001 & 9.011 $\pm$ 0.015 & 0.686 $\pm$ 0.001 & 5.315 $\pm$ 0.035 & 0.519 $\pm$ 0.003 & 3.411 $\pm$ 0.025 & 0.917 $\pm$ 0.006 & 7.964 $\pm$ 0.128 & 0.981 $\pm$ 0.001 & 9.011 $\pm$ 0.015 & 0.989 $\pm$ 0.000 & 9.196 $\pm$ 0.016 \\
Qwen2.5-3B & 0.937 $\pm$ 0.001 & 7.769 $\pm$ 0.028 & 0.694 $\pm$ 0.006 & 4.552 $\pm$ 0.050 & 0.530 $\pm$ 0.004 & 2.741 $\pm$ 0.020 & 0.888 $\pm$ 0.004 & 6.821 $\pm$ 0.085 & 0.937 $\pm$ 0.001 & 7.769 $\pm$ 0.028 & 0.937 $\pm$ 0.001 & 7.769 $\pm$ 0.028 \\
Qwen2.5-7B & 0.956 $\pm$ 0.001 & 7.573 $\pm$ 0.014 & 0.740 $\pm$ 0.001 & 4.250 $\pm$ 0.024 & 0.602 $\pm$ 0.002 & 2.611 $\pm$ 0.007 & 0.878 $\pm$ 0.013 & 5.806 $\pm$ 0.179 & 0.956 $\pm$ 0.001 & 7.573 $\pm$ 0.014 & 0.956 $\pm$ 0.001 & 7.573 $\pm$ 0.014 \\
DeepSeek-R1-7B & 0.974 $\pm$ 0.005 & 8.806 $\pm$ 0.058 & 0.669 $\pm$ 0.010 & 4.883 $\pm$ 0.085 & 0.511 $\pm$ 0.010 & 3.126 $\pm$ 0.060 & 0.885 $\pm$ 0.013 & 7.349 $\pm$ 0.140 & 0.978 $\pm$ 0.006 & 8.875 $\pm$ 0.083 & 0.999 $\pm$ 0.000 & 9.444 $\pm$ 0.017 \\
Gemma-2-2B & 0.984 $\pm$ 0.002 & 8.948 $\pm$ 0.027 & 0.796 $\pm$ 0.004 & 6.277 $\pm$ 0.034 & 0.667 $\pm$ 0.006 & 4.688 $\pm$ 0.071 & 0.956 $\pm$ 0.005 & 8.365 $\pm$ 0.112 & 0.990 $\pm$ 0.002 & 9.082 $\pm$ 0.024 & 1.000 $\pm$ 0.000 & 9.432 $\pm$ 0.013 \\
Llama-3.2-3B & 0.980 $\pm$ 0.002 & 8.677 $\pm$ 0.061 & 0.780 $\pm$ 0.005 & 5.773 $\pm$ 0.066 & 0.642 $\pm$ 0.009 & 4.127 $\pm$ 0.086 & 0.899 $\pm$ 0.007 & 7.204 $\pm$ 0.105 & 0.981 $\pm$ 0.001 & 8.686 $\pm$ 0.032 & 0.998 $\pm$ 0.000 & 9.311 $\pm$ 0.011 \\
\bottomrule
\end{tabular}%
}
}
\vspace{0.8em}
\textbf{Panel B: LAC}
\vspace{0.3em}
\centerline{
\resizebox{1.4\textwidth}{!}{%
\begin{tabular}{lcccccccccccc}
\toprule
Model & \multicolumn{2}{c}{Matched} & \multicolumn{2}{c}{Mismatch mean} & \multicolumn{2}{c}{Mismatch worst} & \multicolumn{2}{c}{Pooled} & \multicolumn{2}{c}{MaxThr} & \multicolumn{2}{c}{Latent-robust} \\
\cmidrule(lr){2-3} \cmidrule(lr){4-5} \cmidrule(lr){6-7} \cmidrule(lr){8-9} \cmidrule(lr){10-11} \cmidrule(lr){12-13}
 & Cov. & Size & Cov. & Size & Cov. & Size & Cov. & Size & Cov. & Size & Cov. & Size \\
\midrule
Qwen2.5-1.5B & 0.901 $\pm$ 0.009 & 7.765 $\pm$ 0.072 & 0.597 $\pm$ 0.005 & 4.293 $\pm$ 0.019 & 0.468 $\pm$ 0.003 & 2.876 $\pm$ 0.004 & 0.650 $\pm$ 0.025 & 4.829 $\pm$ 0.236 & 0.908 $\pm$ 0.007 & 7.860 $\pm$ 0.043 & 0.954 $\pm$ 0.003 & 8.533 $\pm$ 0.022 \\
Qwen2.5-3B & 0.899 $\pm$ 0.011 & 6.963 $\pm$ 0.139 & 0.628 $\pm$ 0.009 & 3.870 $\pm$ 0.175 & 0.480 $\pm$ 0.004 & 2.261 $\pm$ 0.033 & 0.670 $\pm$ 0.004 & 4.095 $\pm$ 0.063 & 0.967 $\pm$ 0.056 & 8.655 $\pm$ 1.406 & 1.000 $\pm$ 0.000 & 9.464 $\pm$ 0.017 \\
Qwen2.5-7B & 0.899 $\pm$ 0.007 & 6.167 $\pm$ 0.092 & 0.678 $\pm$ 0.002 & 3.454 $\pm$ 0.021 & 0.563 $\pm$ 0.002 & 2.286 $\pm$ 0.005 & 0.706 $\pm$ 0.009 & 3.586 $\pm$ 0.082 & 0.898 $\pm$ 0.008 & 6.148 $\pm$ 0.092 & 1.000 $\pm$ 0.001 & 9.461 $\pm$ 0.018 \\
DeepSeek-R1-7B & 0.904 $\pm$ 0.006 & 7.643 $\pm$ 0.050 & 0.583 $\pm$ 0.007 & 3.949 $\pm$ 0.056 & 0.443 $\pm$ 0.005 & 2.508 $\pm$ 0.032 & 0.651 $\pm$ 0.017 & 4.462 $\pm$ 0.165 & 0.908 $\pm$ 0.009 & 7.719 $\pm$ 0.099 & 0.958 $\pm$ 0.007 & 8.491 $\pm$ 0.095 \\
Gemma-2-2B & 0.899 $\pm$ 0.003 & 7.495 $\pm$ 0.062 & 0.663 $\pm$ 0.002 & 4.717 $\pm$ 0.028 & 0.507 $\pm$ 0.003 & 3.037 $\pm$ 0.020 & 0.727 $\pm$ 0.001 & 5.293 $\pm$ 0.020 & 0.935 $\pm$ 0.005 & 8.028 $\pm$ 0.074 & 0.972 $\pm$ 0.001 & 8.656 $\pm$ 0.033 \\
Llama-3.2-3B & 0.902 $\pm$ 0.003 & 7.239 $\pm$ 0.037 & 0.644 $\pm$ 0.004 & 4.213 $\pm$ 0.048 & 0.525 $\pm$ 0.006 & 2.997 $\pm$ 0.050 & 0.694 $\pm$ 0.009 & 4.629 $\pm$ 0.100 & 0.902 $\pm$ 0.003 & 7.232 $\pm$ 0.043 & 0.957 $\pm$ 0.004 & 8.152 $\pm$ 0.075 \\
\bottomrule
\end{tabular}%
}
}
\vspace{0.8em}
\textbf{Panel C: RAPS}
\vspace{0.3em}
\centerline{
\resizebox{1.4\textwidth}{!}{%
\begin{tabular}{lcccccccccccc}
\toprule
Model & \multicolumn{2}{c}{Matched} & \multicolumn{2}{c}{Mismatch mean} & \multicolumn{2}{c}{Mismatch worst} & \multicolumn{2}{c}{Pooled} & \multicolumn{2}{c}{MaxThr} & \multicolumn{2}{c}{Latent-robust} \\
\cmidrule(lr){2-3} \cmidrule(lr){4-5} \cmidrule(lr){6-7} \cmidrule(lr){8-9} \cmidrule(lr){10-11} \cmidrule(lr){12-13}
 & Cov. & Size & Cov. & Size & Cov. & Size & Cov. & Size & Cov. & Size & Cov. & Size \\
\midrule
Qwen2.5-1.5B & 0.964 $\pm$ 0.006 & 8.928 $\pm$ 0.067 & 0.887 $\pm$ 0.004 & 7.979 $\pm$ 0.019 & 0.801 $\pm$ 0.007 & 6.907 $\pm$ 0.018 & 0.912 $\pm$ 0.011 & 8.222 $\pm$ 0.160 & 0.963 $\pm$ 0.006 & 8.925 $\pm$ 0.064 & 1.000 $\pm$ 0.000 & 9.465 $\pm$ 0.016 \\
Qwen2.5-3B & 0.958 $\pm$ 0.005 & 8.668 $\pm$ 0.053 & 0.920 $\pm$ 0.015 & 8.024 $\pm$ 0.223 & 0.868 $\pm$ 0.032 & 7.164 $\pm$ 0.500 & 0.921 $\pm$ 0.016 & 8.010 $\pm$ 0.178 & 0.964 $\pm$ 0.008 & 8.748 $\pm$ 0.102 & 1.000 $\pm$ 0.000 & 9.461 $\pm$ 0.010 \\
Qwen2.5-7B & 0.942 $\pm$ 0.009 & 7.931 $\pm$ 0.096 & 0.892 $\pm$ 0.010 & 6.959 $\pm$ 0.156 & 0.821 $\pm$ 0.024 & 5.684 $\pm$ 0.427 & 0.903 $\pm$ 0.020 & 7.108 $\pm$ 0.303 & 0.943 $\pm$ 0.008 & 7.935 $\pm$ 0.080 & 0.987 $\pm$ 0.002 & 8.972 $\pm$ 0.004 \\
DeepSeek-R1-7B & 0.961 $\pm$ 0.003 & 8.696 $\pm$ 0.018 & 0.871 $\pm$ 0.012 & 7.339 $\pm$ 0.156 & 0.773 $\pm$ 0.015 & 5.994 $\pm$ 0.192 & 0.910 $\pm$ 0.008 & 7.834 $\pm$ 0.082 & 0.962 $\pm$ 0.005 & 8.712 $\pm$ 0.049 & 1.000 $\pm$ 0.000 & 9.458 $\pm$ 0.005 \\
Gemma-2-2B & 0.959 $\pm$ 0.005 & 8.783 $\pm$ 0.078 & 0.869 $\pm$ 0.009 & 7.300 $\pm$ 0.107 & 0.804 $\pm$ 0.008 & 6.252 $\pm$ 0.094 & 0.906 $\pm$ 0.011 & 7.861 $\pm$ 0.122 & 0.961 $\pm$ 0.005 & 8.817 $\pm$ 0.050 & 1.000 $\pm$ 0.000 & 9.454 $\pm$ 0.033 \\
Llama-3.2-3B & 0.954 $\pm$ 0.003 & 8.590 $\pm$ 0.106 & 0.853 $\pm$ 0.006 & 6.890 $\pm$ 0.081 & 0.775 $\pm$ 0.007 & 5.639 $\pm$ 0.098 & 0.878 $\pm$ 0.009 & 7.276 $\pm$ 0.126 & 0.954 $\pm$ 0.003 & 8.555 $\pm$ 0.078 & 0.998 $\pm$ 0.003 & 9.387 $\pm$ 0.118 \\
\bottomrule
\end{tabular}%
}
}
\caption{Score-function robustness on MMLU-Pro at deployment $\mathrm{nf4}@T=0.5$ with $\alpha=0.1$. Each panel reports mean $\pm$ standard deviation across runs.}
\label{tab:score_robustness_mmlu_pro}
\end{table*}
\begin{table*}[t]
\centering
\small
\textbf{Panel A: APS}
\vspace{0.3em}
\centerline{
\resizebox{1.4\textwidth}{!}{%
\begin{tabular}{lcccccccccccc}
\toprule
Model & \multicolumn{2}{c}{Matched} & \multicolumn{2}{c}{Mismatch mean} & \multicolumn{2}{c}{Mismatch worst} & \multicolumn{2}{c}{Pooled} & \multicolumn{2}{c}{MaxThr} & \multicolumn{2}{c}{Latent-robust} \\
\cmidrule(lr){2-3} \cmidrule(lr){4-5} \cmidrule(lr){6-7} \cmidrule(lr){8-9} \cmidrule(lr){10-11} \cmidrule(lr){12-13}
 & Cov. & Size & Cov. & Size & Cov. & Size & Cov. & Size & Cov. & Size & Cov. & Size \\
\midrule
Qwen2.5-1.5B & 1.000 $\pm$ 0.000 & 4.327 $\pm$ 0.155 & 0.987 $\pm$ 0.001 & 2.130 $\pm$ 0.008 & 0.967 $\pm$ 0.002 & 1.144 $\pm$ 0.005 & 1.000 $\pm$ 0.000 & 3.502 $\pm$ 0.030 & 1.000 $\pm$ 0.000 & 5.463 $\pm$ 0.012 & 1.000 $\pm$ 0.000 & 5.366 $\pm$ 0.040 \\
Qwen2.5-3B & 0.985 $\pm$ 0.001 & 2.025 $\pm$ 0.013 & 0.962 $\pm$ 0.001 & 1.429 $\pm$ 0.011 & 0.937 $\pm$ 0.002 & 1.133 $\pm$ 0.006 & 0.985 $\pm$ 0.001 & 2.025 $\pm$ 0.013 & 0.985 $\pm$ 0.001 & 2.025 $\pm$ 0.013 & 0.985 $\pm$ 0.001 & 1.964 $\pm$ 0.036 \\
Qwen2.5-7B & 0.999 $\pm$ 0.000 & 1.665 $\pm$ 0.010 & 0.995 $\pm$ 0.000 & 1.306 $\pm$ 0.003 & 0.989 $\pm$ 0.001 & 1.124 $\pm$ 0.001 & 0.999 $\pm$ 0.000 & 1.665 $\pm$ 0.010 & 0.999 $\pm$ 0.000 & 1.665 $\pm$ 0.010 & 0.999 $\pm$ 0.000 & 1.615 $\pm$ 0.024 \\
DeepSeek-R1-7B & 0.999 $\pm$ 0.000 & 3.390 $\pm$ 0.066 & 0.950 $\pm$ 0.001 & 1.677 $\pm$ 0.015 & 0.909 $\pm$ 0.003 & 1.149 $\pm$ 0.005 & 0.993 $\pm$ 0.001 & 2.169 $\pm$ 0.099 & 0.999 $\pm$ 0.000 & 3.431 $\pm$ 0.060 & 0.999 $\pm$ 0.000 & 3.864 $\pm$ 0.051 \\
Gemma-2-2B & 0.999 $\pm$ 0.000 & 4.344 $\pm$ 0.016 & 0.908 $\pm$ 0.001 & 1.730 $\pm$ 0.004 & 0.871 $\pm$ 0.002 & 1.064 $\pm$ 0.000 & 0.991 $\pm$ 0.001 & 3.191 $\pm$ 0.082 & 0.999 $\pm$ 0.000 & 4.344 $\pm$ 0.016 & 0.998 $\pm$ 0.000 & 4.275 $\pm$ 0.008 \\
Llama-3.2-3B & 0.997 $\pm$ 0.001 & 4.203 $\pm$ 0.059 & 0.925 $\pm$ 0.000 & 1.818 $\pm$ 0.008 & 0.883 $\pm$ 0.000 & 1.109 $\pm$ 0.003 & 0.981 $\pm$ 0.001 & 2.606 $\pm$ 0.078 & 0.998 $\pm$ 0.000 & 4.511 $\pm$ 0.008 & 0.999 $\pm$ 0.001 & 5.008 $\pm$ 0.020 \\
\bottomrule
\end{tabular}%
}
}
\vspace{0.8em}
\textbf{Panel B: LAC}
\vspace{0.3em}
\centerline{
\resizebox{1.4\textwidth}{!}{%
\begin{tabular}{lcccccccccccc}
\toprule
Model & \multicolumn{2}{c}{Matched} & \multicolumn{2}{c}{Mismatch mean} & \multicolumn{2}{c}{Mismatch worst} & \multicolumn{2}{c}{Pooled} & \multicolumn{2}{c}{MaxThr} & \multicolumn{2}{c}{Latent-robust} \\
\cmidrule(lr){2-3} \cmidrule(lr){4-5} \cmidrule(lr){6-7} \cmidrule(lr){8-9} \cmidrule(lr){10-11} \cmidrule(lr){12-13}
 & Cov. & Size & Cov. & Size & Cov. & Size & Cov. & Size & Cov. & Size & Cov. & Size \\
\midrule
Qwen2.5-1.5B & 0.907 $\pm$ 0.000 & 0.975 $\pm$ 0.002 & 0.905 $\pm$ 0.002 & 0.974 $\pm$ 0.001 & 0.866 $\pm$ 0.003 & 0.909 $\pm$ 0.005 & 0.915 $\pm$ 0.001 & 0.990 $\pm$ 0.004 & 0.927 $\pm$ 0.000 & 1.012 $\pm$ 0.002 & 0.940 $\pm$ 0.010 & 1.049 $\pm$ 0.029 \\
Qwen2.5-3B & 0.898 $\pm$ 0.006 & 1.017 $\pm$ 0.010 & 0.899 $\pm$ 0.002 & 1.019 $\pm$ 0.003 & 0.893 $\pm$ 0.001 & 1.006 $\pm$ 0.001 & 0.897 $\pm$ 0.002 & 1.014 $\pm$ 0.003 & 0.913 $\pm$ 0.004 & 1.052 $\pm$ 0.008 & 0.961 $\pm$ 0.001 & 1.265 $\pm$ 0.008 \\
Qwen2.5-7B & 0.902 $\pm$ 0.003 & 0.919 $\pm$ 0.003 & 0.939 $\pm$ 0.002 & 0.969 $\pm$ 0.003 & 0.911 $\pm$ 0.009 & 0.930 $\pm$ 0.012 & 0.944 $\pm$ 0.001 & 0.976 $\pm$ 0.000 & 0.950 $\pm$ 0.001 & 0.986 $\pm$ 0.002 & 0.955 $\pm$ 0.001 & 0.998 $\pm$ 0.000 \\
DeepSeek-R1-7B & 0.899 $\pm$ 0.003 & 1.119 $\pm$ 0.003 & 0.880 $\pm$ 0.000 & 1.072 $\pm$ 0.005 & 0.871 $\pm$ 0.001 & 1.049 $\pm$ 0.003 & 0.879 $\pm$ 0.002 & 1.067 $\pm$ 0.001 & 0.900 $\pm$ 0.001 & 1.121 $\pm$ 0.003 & 0.946 $\pm$ 0.006 & 1.277 $\pm$ 0.035 \\
Gemma-2-2B & 0.900 $\pm$ 0.001 & 1.202 $\pm$ 0.009 & 0.870 $\pm$ 0.002 & 1.070 $\pm$ 0.002 & 0.857 $\pm$ 0.002 & 1.030 $\pm$ 0.002 & 0.868 $\pm$ 0.003 & 1.053 $\pm$ 0.005 & 0.899 $\pm$ 0.001 & 1.195 $\pm$ 0.008 & 0.942 $\pm$ 0.003 & 1.674 $\pm$ 0.014 \\
Llama-3.2-3B & 0.902 $\pm$ 0.001 & 1.169 $\pm$ 0.007 & 0.871 $\pm$ 0.001 & 1.081 $\pm$ 0.007 & 0.858 $\pm$ 0.001 & 1.048 $\pm$ 0.008 & 0.865 $\pm$ 0.001 & 1.066 $\pm$ 0.004 & 0.903 $\pm$ 0.001 & 1.170 $\pm$ 0.006 & 0.943 $\pm$ 0.007 & 1.451 $\pm$ 0.084 \\
\bottomrule
\end{tabular}%
}
}
\vspace{0.8em}
\textbf{Panel C: RAPS}
\vspace{0.3em}
\centerline{
\resizebox{1.4\textwidth}{!}{%
\begin{tabular}{lcccccccccccc}
\toprule
Model & \multicolumn{2}{c}{Matched} & \multicolumn{2}{c}{Mismatch mean} & \multicolumn{2}{c}{Mismatch worst} & \multicolumn{2}{c}{Pooled} & \multicolumn{2}{c}{MaxThr} & \multicolumn{2}{c}{Latent-robust} \\
\cmidrule(lr){2-3} \cmidrule(lr){4-5} \cmidrule(lr){6-7} \cmidrule(lr){8-9} \cmidrule(lr){10-11} \cmidrule(lr){12-13}
 & Cov. & Size & Cov. & Size & Cov. & Size & Cov. & Size & Cov. & Size & Cov. & Size \\
\midrule
Qwen2.5-1.5B & 0.996 $\pm$ 0.000 & 1.998 $\pm$ 0.002 & 0.986 $\pm$ 0.001 & 1.404 $\pm$ 0.006 & 0.967 $\pm$ 0.002 & 1.141 $\pm$ 0.005 & 0.996 $\pm$ 0.000 & 1.877 $\pm$ 0.018 & 0.996 $\pm$ 0.000 & 1.998 $\pm$ 0.002 & 0.997 $\pm$ 0.000 & 2.172 $\pm$ 0.005 \\
Qwen2.5-3B & 0.981 $\pm$ 0.002 & 2.027 $\pm$ 0.012 & 0.962 $\pm$ 0.002 & 1.513 $\pm$ 0.055 & 0.937 $\pm$ 0.001 & 1.129 $\pm$ 0.003 & 0.974 $\pm$ 0.001 & 1.438 $\pm$ 0.008 & 0.984 $\pm$ 0.001 & 2.107 $\pm$ 0.024 & 0.986 $\pm$ 0.001 & 2.212 $\pm$ 0.006 \\
Qwen2.5-7B & 0.999 $\pm$ 0.000 & 1.339 $\pm$ 0.003 & 0.996 $\pm$ 0.000 & 1.231 $\pm$ 0.002 & 0.989 $\pm$ 0.000 & 1.126 $\pm$ 0.002 & 0.999 $\pm$ 0.000 & 1.339 $\pm$ 0.003 & 0.999 $\pm$ 0.000 & 1.339 $\pm$ 0.003 & 0.999 $\pm$ 0.000 & 1.339 $\pm$ 0.003 \\
DeepSeek-R1-7B & 0.993 $\pm$ 0.001 & 2.214 $\pm$ 0.039 & 0.950 $\pm$ 0.001 & 1.461 $\pm$ 0.011 & 0.911 $\pm$ 0.005 & 1.157 $\pm$ 0.010 & 0.988 $\pm$ 0.000 & 1.821 $\pm$ 0.029 & 0.993 $\pm$ 0.001 & 2.217 $\pm$ 0.033 & 0.995 $\pm$ 0.000 & 2.537 $\pm$ 0.011 \\
Gemma-2-2B & 0.964 $\pm$ 0.002 & 2.161 $\pm$ 0.013 & 0.915 $\pm$ 0.002 & 1.536 $\pm$ 0.000 & 0.871 $\pm$ 0.002 & 1.064 $\pm$ 0.000 & 0.957 $\pm$ 0.002 & 1.865 $\pm$ 0.037 & 0.965 $\pm$ 0.002 & 2.183 $\pm$ 0.002 & 0.972 $\pm$ 0.001 & 2.435 $\pm$ 0.021 \\
Llama-3.2-3B & 0.975 $\pm$ 0.002 & 2.235 $\pm$ 0.013 & 0.929 $\pm$ 0.000 & 1.589 $\pm$ 0.002 & 0.885 $\pm$ 0.002 & 1.114 $\pm$ 0.003 & 0.964 $\pm$ 0.002 & 1.928 $\pm$ 0.006 & 0.976 $\pm$ 0.002 & 2.270 $\pm$ 0.000 & 0.981 $\pm$ 0.002 & 2.656 $\pm$ 0.023 \\
\bottomrule
\end{tabular}%
}
}
\caption{Score-function robustness on MobileActions at deployment $\mathrm{nf4}@T=0.5$ with $\alpha=0.1$. Each panel reports mean $\pm$ standard deviation across runs.}
\label{tab:score_robustness_mobile_actions}
\end{table*}

Tables~\ref{tab:score_robustness_commonsense_qa}--\ref{tab:score_robustness_mobile_actions} compare APS, LAC, and RAPS at deployment $\mathrm{nf4}@T=0.5$ with $\alpha=0.1$. The qualitative mismatch pattern persists across score functions, although the absolute level of conservativeness differs across APS, LAC, and RAPS. In particular, APS and RAPS are often more conservative than LAC, but neither removes the main evaluative issue: matched-only reporting can still overstate deployment-time reliability under calibration--deployment mismatch. This robustness check therefore strengthens the claim that the phenomenon is not an artifact of one particular nonconformity score.

The integrated mean $\pm$ standard deviation reporting in Tables~\ref{tab:score_robustness_commonsense_qa}--\ref{tab:score_robustness_mobile_actions} also shows that the score-function differences are typically larger than the run-to-run variability. This is consistent with the view that the observed changes reflect systematic properties of the score construction rather than instability of the benchmark.

\subsection{Robustness to the target miscoverage level}
\label{app:alpha_robustness}

\begin{table*}[t]
\centering
\small
\textbf{Panel A: $\alpha=0.05$}
\vspace{0.3em}
\centerline{
\resizebox{1.4\textwidth}{!}{%
% [inline block 1: 16 envs, 41988 chars in 4 pieces, piece 1 here, a bare % at each other -> data_tex | \begin{tabular}{lcccccccccccc} \toprule...]
\begin{tabular}{lcccccccccccc}
\toprule
Model & \multicolumn{2}{c}{Matched} & \multicolumn{2}{c}{Mismatch mean} & \multicolumn{2}{c}{Mismatch worst} & \multicolumn{2}{c}{Pooled} & \multicolumn{2}{c}{MaxThr} & \multicolumn{2}{c}{Latent-robust} \\
\cmidrule(lr){2-3} \cmidrule(lr){4-5} \cmidrule(lr){6-7} \cmidrule(lr){8-9} \cmidrule(lr){10-11} \cmidrule(lr){12-13}
 & Gap & Size & Gap & Size & Gap & Size & Gap & Size & Gap & Size & Gap & Size \\
\midrule
Qwen2.5-1.5B & 0.009 $\pm$ 0.014 & 2.597 $\pm$ 0.237 & -0.084 $\pm$ 0.010 & 1.630 $\pm$ 0.058 & -0.133 $\pm$ 0.008 & 1.240 $\pm$ 0.013 & -0.085 $\pm$ 0.018 & 1.503 $\pm$ 0.082 & 0.023 $\pm$ 0.013 & 2.813 $\pm$ 0.167 & 0.037 $\pm$ 0.007 & 3.469 $\pm$ 0.212 \\
Qwen2.5-3B & 0.050 $\pm$ 0.000 & 5.000 $\pm$ 0.000 & -0.069 $\pm$ 0.010 & 1.723 $\pm$ 0.170 & -0.117 $\pm$ 0.009 & 1.137 $\pm$ 0.003 & -0.075 $\pm$ 0.006 & 1.291 $\pm$ 0.008 & 0.050 $\pm$ 0.000 & 5.000 $\pm$ 0.000 & 0.049 $\pm$ 0.001 & 4.994 $\pm$ 0.005 \\
Qwen2.5-7B & 0.007 $\pm$ 0.009 & 2.051 $\pm$ 0.084 & -0.051 $\pm$ 0.006 & 1.390 $\pm$ 0.024 & -0.078 $\pm$ 0.006 & 1.169 $\pm$ 0.009 & -0.054 $\pm$ 0.012 & 1.301 $\pm$ 0.061 & 0.006 $\pm$ 0.008 & 2.034 $\pm$ 0.063 & 0.049 $\pm$ 0.001 & 4.986 $\pm$ 0.018 \\
DeepSeek-R1-7B & -0.003 $\pm$ 0.013 & 3.353 $\pm$ 0.107 & -0.148 $\pm$ 0.008 & 2.006 $\pm$ 0.045 & -0.223 $\pm$ 0.005 & 1.475 $\pm$ 0.030 & -0.124 $\pm$ 0.009 & 2.008 $\pm$ 0.081 & 0.001 $\pm$ 0.013 & 3.399 $\pm$ 0.076 & 0.028 $\pm$ 0.001 & 4.078 $\pm$ 0.169 \\
Gemma-2-2B & 0.006 $\pm$ 0.017 & 3.225 $\pm$ 0.152 & -0.110 $\pm$ 0.012 & 1.904 $\pm$ 0.079 & -0.183 $\pm$ 0.009 & 1.267 $\pm$ 0.041 & -0.092 $\pm$ 0.019 & 1.903 $\pm$ 0.202 & 0.028 $\pm$ 0.005 & 3.695 $\pm$ 0.076 & 0.041 $\pm$ 0.003 & 4.160 $\pm$ 0.075 \\
Llama-3.2-3B & 0.017 $\pm$ 0.004 & 2.736 $\pm$ 0.019 & -0.076 $\pm$ 0.009 & 1.757 $\pm$ 0.025 & -0.126 $\pm$ 0.011 & 1.340 $\pm$ 0.002 & -0.079 $\pm$ 0.005 & 1.630 $\pm$ 0.070 & 0.023 $\pm$ 0.001 & 2.908 $\pm$ 0.057 & 0.039 $\pm$ 0.003 & 3.566 $\pm$ 0.077 \\
\bottomrule
\end{tabular}%
}
}
\vspace{0.8em}
\textbf{Panel B: $\alpha=0.1$}
\vspace{0.3em}
\centerline{
\resizebox{1.4\textwidth}{!}{%
\begin{tabular}{lcccccccccccc}
\toprule
Model & \multicolumn{2}{c}{Matched} & \multicolumn{2}{c}{Mismatch mean} & \multicolumn{2}{c}{Mismatch worst} & \multicolumn{2}{c}{Pooled} & \multicolumn{2}{c}{MaxThr} & \multicolumn{2}{c}{Latent-robust} \\
\cmidrule(lr){2-3} \cmidrule(lr){4-5} \cmidrule(lr){6-7} \cmidrule(lr){8-9} \cmidrule(lr){10-11} \cmidrule(lr){12-13}
 & Gap & Size & Gap & Size & Gap & Size & Gap & Size & Gap & Size & Gap & Size \\
\midrule
Qwen2.5-1.5B & 0.015 $\pm$ 0.023 & 1.876 $\pm$ 0.149 & -0.068 $\pm$ 0.014 & 1.339 $\pm$ 0.044 & -0.103 $\pm$ 0.016 & 1.166 $\pm$ 0.020 & -0.077 $\pm$ 0.006 & 1.255 $\pm$ 0.010 & 0.011 $\pm$ 0.024 & 1.850 $\pm$ 0.149 & 0.062 $\pm$ 0.023 & 2.652 $\pm$ 0.402 \\
Qwen2.5-3B & 0.013 $\pm$ 0.013 & 1.489 $\pm$ 0.049 & -0.054 $\pm$ 0.011 & 1.187 $\pm$ 0.011 & -0.079 $\pm$ 0.008 & 1.086 $\pm$ 0.008 & -0.062 $\pm$ 0.015 & 1.154 $\pm$ 0.019 & 0.010 $\pm$ 0.014 & 1.474 $\pm$ 0.048 & 0.098 $\pm$ 0.002 & 4.957 $\pm$ 0.012 \\
Qwen2.5-7B & 0.007 $\pm$ 0.005 & 1.369 $\pm$ 0.010 & -0.034 $\pm$ 0.012 & 1.153 $\pm$ 0.037 & -0.056 $\pm$ 0.009 & 1.074 $\pm$ 0.020 & -0.039 $\pm$ 0.011 & 1.120 $\pm$ 0.041 & 0.005 $\pm$ 0.015 & 1.353 $\pm$ 0.072 & 0.054 $\pm$ 0.015 & 2.007 $\pm$ 0.198 \\
DeepSeek-R1-7B & -0.005 $\pm$ 0.003 & 2.637 $\pm$ 0.029 & -0.141 $\pm$ 0.008 & 1.675 $\pm$ 0.036 & -0.192 $\pm$ 0.007 & 1.374 $\pm$ 0.022 & -0.148 $\pm$ 0.008 & 1.591 $\pm$ 0.005 & -0.008 $\pm$ 0.004 & 2.616 $\pm$ 0.027 & 0.052 $\pm$ 0.013 & 3.404 $\pm$ 0.198 \\
Gemma-2-2B & 0.019 $\pm$ 0.017 & 2.544 $\pm$ 0.193 & -0.084 $\pm$ 0.013 & 1.655 $\pm$ 0.092 & -0.145 $\pm$ 0.011 & 1.208 $\pm$ 0.037 & -0.101 $\pm$ 0.014 & 1.430 $\pm$ 0.080 & 0.043 $\pm$ 0.014 & 2.956 $\pm$ 0.196 & 0.073 $\pm$ 0.008 & 3.573 $\pm$ 0.139 \\
Llama-3.2-3B & 0.019 $\pm$ 0.012 & 1.993 $\pm$ 0.045 & -0.059 $\pm$ 0.014 & 1.470 $\pm$ 0.020 & -0.095 $\pm$ 0.014 & 1.252 $\pm$ 0.003 & -0.070 $\pm$ 0.013 & 1.372 $\pm$ 0.027 & 0.023 $\pm$ 0.016 & 2.064 $\pm$ 0.116 & 0.067 $\pm$ 0.013 & 2.826 $\pm$ 0.223 \\
\bottomrule
\end{tabular}%
}
}
\vspace{0.8em}
\textbf{Panel C: $\alpha=0.2$}
\vspace{0.3em}
\centerline{
\resizebox{1.4\textwidth}{!}{%
\begin{tabular}{lcccccccccccc}
\toprule
Model & \multicolumn{2}{c}{Matched} & \multicolumn{2}{c}{Mismatch mean} & \multicolumn{2}{c}{Mismatch worst} & \multicolumn{2}{c}{Pooled} & \multicolumn{2}{c}{MaxThr} & \multicolumn{2}{c}{Latent-robust} \\
\cmidrule(lr){2-3} \cmidrule(lr){4-5} \cmidrule(lr){6-7} \cmidrule(lr){8-9} \cmidrule(lr){10-11} \cmidrule(lr){12-13}
 & Gap & Size & Gap & Size & Gap & Size & Gap & Size & Gap & Size & Gap & Size \\
\midrule
Qwen2.5-1.5B & 0.010 $\pm$ 0.015 & 1.206 $\pm$ 0.029 & -0.028 $\pm$ 0.014 & 1.081 $\pm$ 0.010 & -0.038 $\pm$ 0.011 & 1.046 $\pm$ 0.005 & -0.027 $\pm$ 0.016 & 1.080 $\pm$ 0.020 & 0.010 $\pm$ 0.015 & 1.205 $\pm$ 0.027 & 0.101 $\pm$ 0.014 & 1.739 $\pm$ 0.078 \\
Qwen2.5-3B & 0.014 $\pm$ 0.012 & 1.072 $\pm$ 0.016 & -0.008 $\pm$ 0.007 & 1.010 $\pm$ 0.008 & -0.018 $\pm$ 0.006 & 0.981 $\pm$ 0.003 & -0.009 $\pm$ 0.009 & 1.008 $\pm$ 0.017 & 0.014 $\pm$ 0.012 & 1.070 $\pm$ 0.016 & 0.110 $\pm$ 0.010 & 1.455 $\pm$ 0.030 \\
Qwen2.5-7B & -0.007 $\pm$ 0.012 & 0.961 $\pm$ 0.022 & 0.000 $\pm$ 0.005 & 0.972 $\pm$ 0.006 & -0.017 $\pm$ 0.006 & 0.930 $\pm$ 0.012 & 0.001 $\pm$ 0.008 & 0.975 $\pm$ 0.010 & 0.010 $\pm$ 0.003 & 0.993 $\pm$ 0.004 & 0.103 $\pm$ 0.029 & 1.368 $\pm$ 0.170 \\
DeepSeek-R1-7B & 0.004 $\pm$ 0.017 & 1.874 $\pm$ 0.073 & -0.095 $\pm$ 0.012 & 1.379 $\pm$ 0.047 & -0.132 $\pm$ 0.007 & 1.229 $\pm$ 0.022 & -0.096 $\pm$ 0.010 & 1.352 $\pm$ 0.032 & 0.003 $\pm$ 0.019 & 1.868 $\pm$ 0.076 & 0.081 $\pm$ 0.022 & 2.556 $\pm$ 0.201 \\
Gemma-2-2B & 0.010 $\pm$ 0.019 & 1.497 $\pm$ 0.107 & -0.039 $\pm$ 0.011 & 1.250 $\pm$ 0.045 & -0.066 $\pm$ 0.006 & 1.120 $\pm$ 0.017 & -0.057 $\pm$ 0.010 & 1.155 $\pm$ 0.042 & 0.034 $\pm$ 0.023 & 1.642 $\pm$ 0.128 & 0.127 $\pm$ 0.022 & 2.684 $\pm$ 0.294 \\
Llama-3.2-3B & 0.015 $\pm$ 0.017 & 1.295 $\pm$ 0.035 & -0.015 $\pm$ 0.020 & 1.159 $\pm$ 0.019 & -0.028 $\pm$ 0.020 & 1.115 $\pm$ 0.009 & -0.015 $\pm$ 0.015 & 1.147 $\pm$ 0.007 & 0.015 $\pm$ 0.018 & 1.295 $\pm$ 0.040 & 0.118 $\pm$ 0.029 & 2.027 $\pm$ 0.199 \\
\bottomrule
\end{tabular}%
}
}
\vspace{0.8em}
\textbf{Panel D: $\alpha=0.3$}
\vspace{0.3em}
\centerline{
\resizebox{1.4\textwidth}{!}{%
\begin{tabular}{lcccccccccccc}
\toprule
Model & \multicolumn{2}{c}{Matched} & \multicolumn{2}{c}{Mismatch mean} & \multicolumn{2}{c}{Mismatch worst} & \multicolumn{2}{c}{Pooled} & \multicolumn{2}{c}{MaxThr} & \multicolumn{2}{c}{Latent-robust} \\
\cmidrule(lr){2-3} \cmidrule(lr){4-5} \cmidrule(lr){6-7} \cmidrule(lr){8-9} \cmidrule(lr){10-11} \cmidrule(lr){12-13}
 & Gap & Size & Gap & Size & Gap & Size & Gap & Size & Gap & Size & Gap & Size \\
\midrule
Qwen2.5-1.5B & 0.015 $\pm$ 0.027 & 0.932 $\pm$ 0.027 & 0.032 $\pm$ 0.011 & 0.971 $\pm$ 0.004 & 0.003 $\pm$ 0.017 & 0.904 $\pm$ 0.008 & 0.037 $\pm$ 0.008 & 0.982 $\pm$ 0.009 & 0.046 $\pm$ 0.012 & 1.009 $\pm$ 0.010 & 0.157 $\pm$ 0.011 & 1.443 $\pm$ 0.051 \\
Qwen2.5-3B & -0.001 $\pm$ 0.026 & 0.816 $\pm$ 0.028 & 0.048 $\pm$ 0.013 & 0.909 $\pm$ 0.013 & -0.009 $\pm$ 0.021 & 0.800 $\pm$ 0.021 & 0.057 $\pm$ 0.011 & 0.926 $\pm$ 0.011 & 0.069 $\pm$ 0.010 & 0.952 $\pm$ 0.007 & 0.151 $\pm$ 0.035 & 1.199 $\pm$ 0.103 \\
Qwen2.5-7B & -0.001 $\pm$ 0.022 & 0.794 $\pm$ 0.024 & 0.047 $\pm$ 0.005 & 0.873 $\pm$ 0.005 & -0.078 $\pm$ 0.017 & 0.691 $\pm$ 0.014 & 0.075 $\pm$ 0.006 & 0.909 $\pm$ 0.008 & 0.094 $\pm$ 0.005 & 0.957 $\pm$ 0.006 & 0.138 $\pm$ 0.030 & 1.067 $\pm$ 0.082 \\
DeepSeek-R1-7B & 0.011 $\pm$ 0.012 & 1.384 $\pm$ 0.026 & -0.040 $\pm$ 0.009 & 1.199 $\pm$ 0.018 & -0.055 $\pm$ 0.009 & 1.150 $\pm$ 0.014 & -0.044 $\pm$ 0.006 & 1.182 $\pm$ 0.014 & 0.010 $\pm$ 0.012 & 1.376 $\pm$ 0.032 & 0.133 $\pm$ 0.012 & 2.101 $\pm$ 0.120 \\
Gemma-2-2B & 0.009 $\pm$ 0.009 & 1.016 $\pm$ 0.008 & 0.007 $\pm$ 0.009 & 1.009 $\pm$ 0.005 & -0.018 $\pm$ 0.019 & 0.953 $\pm$ 0.030 & 0.010 $\pm$ 0.012 & 1.018 $\pm$ 0.008 & 0.021 $\pm$ 0.005 & 1.044 $\pm$ 0.009 & 0.165 $\pm$ 0.024 & 1.983 $\pm$ 0.202 \\
Llama-3.2-3B & 0.020 $\pm$ 0.033 & 0.964 $\pm$ 0.041 & 0.032 $\pm$ 0.021 & 1.003 $\pm$ 0.011 & -0.007 $\pm$ 0.033 & 0.911 $\pm$ 0.032 & 0.041 $\pm$ 0.021 & 1.019 $\pm$ 0.024 & 0.056 $\pm$ 0.016 & 1.064 $\pm$ 0.002 & 0.154 $\pm$ 0.039 & 1.516 $\pm$ 0.176 \\
\bottomrule
\end{tabular}%
}
}
\caption{Robustness to the target miscoverage level on CommonsenseQA at deployment $\mathrm{nf4}@T=0.5$ using LAC. Each panel reports coverage gap from target and average prediction-set size as mean $\pm$ standard deviation across runs.}
\label{tab:alpha_robustness_commonsense_qa}
\end{table*}
\begin{table*}[t]
\centering
\small
\textbf{Panel A: $\alpha=0.05$}
\vspace{0.3em}
\centerline{
\resizebox{1.4\textwidth}{!}{%
\begin{tabular}{lcccccccccccc}
\toprule
Model & \multicolumn{2}{c}{Matched} & \multicolumn{2}{c}{Mismatch mean} & \multicolumn{2}{c}{Mismatch worst} & \multicolumn{2}{c}{Pooled} & \multicolumn{2}{c}{MaxThr} & \multicolumn{2}{c}{Latent-robust} \\
\cmidrule(lr){2-3} \cmidrule(lr){4-5} \cmidrule(lr){6-7} \cmidrule(lr){8-9} \cmidrule(lr){10-11} \cmidrule(lr){12-13}
 & Gap & Size & Gap & Size & Gap & Size & Gap & Size & Gap & Size & Gap & Size \\
\midrule
Qwen2.5-1.5B & -0.001 $\pm$ 0.003 & 2.522 $\pm$ 0.121 & -0.030 $\pm$ 0.003 & 1.686 $\pm$ 0.079 & -0.046 $\pm$ 0.003 & 1.282 $\pm$ 0.025 & -0.032 $\pm$ 0.003 & 1.501 $\pm$ 0.048 & 0.006 $\pm$ 0.004 & 2.951 $\pm$ 0.275 & 0.029 $\pm$ 0.002 & 5.784 $\pm$ 0.616 \\
Qwen2.5-3B & -0.001 $\pm$ 0.005 & 2.454 $\pm$ 0.099 & -0.091 $\pm$ 0.004 & 1.439 $\pm$ 0.023 & -0.147 $\pm$ 0.003 & 1.180 $\pm$ 0.009 & -0.077 $\pm$ 0.007 & 1.462 $\pm$ 0.037 & -0.001 $\pm$ 0.005 & 2.452 $\pm$ 0.098 & 0.023 $\pm$ 0.001 & 3.073 $\pm$ 0.049 \\
Qwen2.5-7B & -0.001 $\pm$ 0.000 & 0.965 $\pm$ 0.000 & 0.010 $\pm$ 0.001 & 0.983 $\pm$ 0.001 & -0.008 $\pm$ 0.003 & 0.956 $\pm$ 0.004 & 0.015 $\pm$ 0.000 & 0.991 $\pm$ 0.000 & 0.016 $\pm$ 0.000 & 0.994 $\pm$ 0.000 & 0.022 $\pm$ 0.002 & 1.006 $\pm$ 0.005 \\
DeepSeek-R1-7B & 0.002 $\pm$ 0.001 & 2.781 $\pm$ 0.039 & -0.038 $\pm$ 0.001 & 1.967 $\pm$ 0.006 & -0.067 $\pm$ 0.001 & 1.516 $\pm$ 0.002 & -0.036 $\pm$ 0.002 & 1.857 $\pm$ 0.019 & 0.009 $\pm$ 0.000 & 3.197 $\pm$ 0.018 & 0.030 $\pm$ 0.002 & 5.661 $\pm$ 0.283 \\
Gemma-2-2B & -0.001 $\pm$ 0.003 & 6.736 $\pm$ 0.205 & -0.127 $\pm$ 0.001 & 3.273 $\pm$ 0.066 & -0.197 $\pm$ 0.000 & 1.317 $\pm$ 0.002 & -0.105 $\pm$ 0.002 & 2.257 $\pm$ 0.029 & 0.041 $\pm$ 0.005 & 10.984 $\pm$ 0.683 & 0.046 $\pm$ 0.003 & 12.206 $\pm$ 1.525 \\
Llama-3.2-3B & 0.002 $\pm$ 0.002 & 3.793 $\pm$ 0.161 & -0.072 $\pm$ 0.001 & 1.585 $\pm$ 0.012 & -0.099 $\pm$ 0.003 & 1.321 $\pm$ 0.019 & -0.071 $\pm$ 0.000 & 1.525 $\pm$ 0.003 & 0.002 $\pm$ 0.002 & 3.779 $\pm$ 0.161 & 0.024 $\pm$ 0.004 & 6.973 $\pm$ 0.899 \\
\bottomrule
\end{tabular}%
}
}
\vspace{0.8em}
\textbf{Panel B: $\alpha=0.1$}
\vspace{0.3em}
\centerline{
\resizebox{1.4\textwidth}{!}{%
\begin{tabular}{lcccccccccccc}
\toprule
Model & \multicolumn{2}{c}{Matched} & \multicolumn{2}{c}{Mismatch mean} & \multicolumn{2}{c}{Mismatch worst} & \multicolumn{2}{c}{Pooled} & \multicolumn{2}{c}{MaxThr} & \multicolumn{2}{c}{Latent-robust} \\
\cmidrule(lr){2-3} \cmidrule(lr){4-5} \cmidrule(lr){6-7} \cmidrule(lr){8-9} \cmidrule(lr){10-11} \cmidrule(lr){12-13}
 & Gap & Size & Gap & Size & Gap & Size & Gap & Size & Gap & Size & Gap & Size \\
\midrule
Qwen2.5-1.5B & -0.001 $\pm$ 0.006 & 1.231 $\pm$ 0.042 & -0.016 $\pm$ 0.002 & 1.139 $\pm$ 0.013 & -0.024 $\pm$ 0.002 & 1.098 $\pm$ 0.007 & -0.017 $\pm$ 0.001 & 1.130 $\pm$ 0.004 & 0.001 $\pm$ 0.004 & 1.254 $\pm$ 0.033 & 0.047 $\pm$ 0.008 & 2.505 $\pm$ 0.319 \\
Qwen2.5-3B & 0.000 $\pm$ 0.002 & 1.659 $\pm$ 0.019 & -0.105 $\pm$ 0.003 & 1.168 $\pm$ 0.007 & -0.143 $\pm$ 0.001 & 1.061 $\pm$ 0.003 & -0.097 $\pm$ 0.006 & 1.185 $\pm$ 0.016 & 0.000 $\pm$ 0.002 & 1.658 $\pm$ 0.018 & 0.046 $\pm$ 0.002 & 2.342 $\pm$ 0.034 \\
Qwen2.5-7B & 0.008 $\pm$ 0.000 & 0.916 $\pm$ 0.000 & 0.046 $\pm$ 0.001 & 0.966 $\pm$ 0.000 & -0.019 $\pm$ 0.001 & 0.888 $\pm$ 0.000 & 0.064 $\pm$ 0.000 & 0.988 $\pm$ 0.000 & 0.065 $\pm$ 0.000 & 0.991 $\pm$ 0.000 & 0.066 $\pm$ 0.000 & 0.993 $\pm$ 0.000 \\
DeepSeek-R1-7B & 0.002 $\pm$ 0.004 & 1.701 $\pm$ 0.036 & -0.029 $\pm$ 0.002 & 1.472 $\pm$ 0.012 & -0.062 $\pm$ 0.002 & 1.251 $\pm$ 0.004 & -0.025 $\pm$ 0.001 & 1.452 $\pm$ 0.009 & 0.024 $\pm$ 0.002 & 2.024 $\pm$ 0.025 & 0.056 $\pm$ 0.003 & 3.026 $\pm$ 0.200 \\
Gemma-2-2B & 0.002 $\pm$ 0.004 & 3.485 $\pm$ 0.134 & -0.101 $\pm$ 0.001 & 2.251 $\pm$ 0.046 & -0.156 $\pm$ 0.000 & 1.232 $\pm$ 0.002 & -0.118 $\pm$ 0.000 & 1.611 $\pm$ 0.003 & 0.040 $\pm$ 0.004 & 5.953 $\pm$ 0.237 & 0.087 $\pm$ 0.005 & 10.787 $\pm$ 0.672 \\
Llama-3.2-3B & -0.001 $\pm$ 0.002 & 1.750 $\pm$ 0.024 & -0.071 $\pm$ 0.001 & 1.216 $\pm$ 0.005 & -0.085 $\pm$ 0.000 & 1.152 $\pm$ 0.002 & -0.067 $\pm$ 0.000 & 1.226 $\pm$ 0.002 & -0.002 $\pm$ 0.002 & 1.748 $\pm$ 0.026 & 0.047 $\pm$ 0.004 & 3.576 $\pm$ 0.266 \\
\bottomrule
\end{tabular}%
}
}
\vspace{0.8em}
\textbf{Panel C: $\alpha=0.2$}
\vspace{0.3em}
\centerline{
\resizebox{1.4\textwidth}{!}{%
\begin{tabular}{lcccccccccccc}
\toprule
Model & \multicolumn{2}{c}{Matched} & \multicolumn{2}{c}{Mismatch mean} & \multicolumn{2}{c}{Mismatch worst} & \multicolumn{2}{c}{Pooled} & \multicolumn{2}{c}{MaxThr} & \multicolumn{2}{c}{Latent-robust} \\
\cmidrule(lr){2-3} \cmidrule(lr){4-5} \cmidrule(lr){6-7} \cmidrule(lr){8-9} \cmidrule(lr){10-11} \cmidrule(lr){12-13}
 & Gap & Size & Gap & Size & Gap & Size & Gap & Size & Gap & Size & Gap & Size \\
\midrule
Qwen2.5-1.5B & -0.001 $\pm$ 0.005 & 0.906 $\pm$ 0.009 & 0.029 $\pm$ 0.004 & 0.973 $\pm$ 0.006 & -0.045 $\pm$ 0.007 & 0.840 $\pm$ 0.010 & 0.050 $\pm$ 0.002 & 1.012 $\pm$ 0.003 & 0.069 $\pm$ 0.001 & 1.072 $\pm$ 0.001 & 0.087 $\pm$ 0.004 & 1.152 $\pm$ 0.018 \\
Qwen2.5-3B & 0.001 $\pm$ 0.002 & 1.175 $\pm$ 0.003 & -0.078 $\pm$ 0.002 & 0.966 $\pm$ 0.003 & -0.157 $\pm$ 0.001 & 0.744 $\pm$ 0.001 & -0.061 $\pm$ 0.005 & 1.013 $\pm$ 0.011 & 0.001 $\pm$ 0.002 & 1.175 $\pm$ 0.003 & 0.088 $\pm$ 0.002 & 1.534 $\pm$ 0.016 \\
Qwen2.5-7B & 0.081 $\pm$ 0.001 & 0.888 $\pm$ 0.000 & 0.145 $\pm$ 0.001 & 0.964 $\pm$ 0.000 & 0.081 $\pm$ 0.001 & 0.888 $\pm$ 0.000 & 0.162 $\pm$ 0.001 & 0.986 $\pm$ 0.001 & 0.164 $\pm$ 0.000 & 0.989 $\pm$ 0.000 & 0.165 $\pm$ 0.000 & 0.991 $\pm$ 0.000 \\
DeepSeek-R1-7B & 0.002 $\pm$ 0.005 & 1.123 $\pm$ 0.012 & 0.021 $\pm$ 0.003 & 1.198 $\pm$ 0.008 & -0.010 $\pm$ 0.002 & 1.090 $\pm$ 0.003 & 0.020 $\pm$ 0.003 & 1.184 $\pm$ 0.004 & 0.072 $\pm$ 0.004 & 1.428 $\pm$ 0.022 & 0.118 $\pm$ 0.002 & 1.895 $\pm$ 0.033 \\
Gemma-2-2B & 0.000 $\pm$ 0.005 & 1.770 $\pm$ 0.049 & -0.041 $\pm$ 0.002 & 1.404 $\pm$ 0.022 & -0.075 $\pm$ 0.001 & 1.111 $\pm$ 0.001 & -0.058 $\pm$ 0.002 & 1.217 $\pm$ 0.010 & 0.050 $\pm$ 0.006 & 2.338 $\pm$ 0.091 & 0.127 $\pm$ 0.006 & 5.339 $\pm$ 0.355 \\
Llama-3.2-3B & -0.002 $\pm$ 0.002 & 1.098 $\pm$ 0.004 & -0.039 $\pm$ 0.002 & 1.012 $\pm$ 0.002 & -0.112 $\pm$ 0.002 & 0.867 $\pm$ 0.004 & -0.020 $\pm$ 0.000 & 1.047 $\pm$ 0.000 & 0.005 $\pm$ 0.000 & 1.119 $\pm$ 0.000 & 0.092 $\pm$ 0.007 & 1.672 $\pm$ 0.081 \\
\bottomrule
\end{tabular}%
}
}
\vspace{0.8em}
\textbf{Panel D: $\alpha=0.3$}
\vspace{0.3em}
\centerline{
\resizebox{1.4\textwidth}{!}{%
\begin{tabular}{lcccccccccccc}
\toprule
Model & \multicolumn{2}{c}{Matched} & \multicolumn{2}{c}{Mismatch mean} & \multicolumn{2}{c}{Mismatch worst} & \multicolumn{2}{c}{Pooled} & \multicolumn{2}{c}{MaxThr} & \multicolumn{2}{c}{Latent-robust} \\
\cmidrule(lr){2-3} \cmidrule(lr){4-5} \cmidrule(lr){6-7} \cmidrule(lr){8-9} \cmidrule(lr){10-11} \cmidrule(lr){12-13}
 & Gap & Size & Gap & Size & Gap & Size & Gap & Size & Gap & Size & Gap & Size \\
\midrule
Qwen2.5-1.5B & -0.001 $\pm$ 0.005 & 0.765 $\pm$ 0.006 & 0.080 $\pm$ 0.003 & 0.894 $\pm$ 0.004 & -0.096 $\pm$ 0.006 & 0.652 $\pm$ 0.007 & 0.120 $\pm$ 0.001 & 0.947 $\pm$ 0.002 & 0.161 $\pm$ 0.000 & 1.043 $\pm$ 0.001 & 0.156 $\pm$ 0.004 & 1.033 $\pm$ 0.009 \\
Qwen2.5-3B & 0.001 $\pm$ 0.001 & 0.911 $\pm$ 0.003 & -0.020 $\pm$ 0.002 & 0.861 $\pm$ 0.003 & -0.138 $\pm$ 0.004 & 0.594 $\pm$ 0.004 & -0.003 $\pm$ 0.002 & 0.901 $\pm$ 0.003 & 0.048 $\pm$ 0.004 & 1.037 $\pm$ 0.009 & 0.141 $\pm$ 0.003 & 1.291 $\pm$ 0.009 \\
Qwen2.5-7B & 0.181 $\pm$ 0.001 & 0.888 $\pm$ 0.000 & 0.244 $\pm$ 0.001 & 0.963 $\pm$ 0.000 & 0.181 $\pm$ 0.001 & 0.888 $\pm$ 0.000 & 0.261 $\pm$ 0.001 & 0.982 $\pm$ 0.001 & 0.264 $\pm$ 0.000 & 0.989 $\pm$ 0.000 & 0.264 $\pm$ 0.000 & 0.990 $\pm$ 0.000 \\
DeepSeek-R1-7B & 0.001 $\pm$ 0.009 & 0.894 $\pm$ 0.013 & 0.082 $\pm$ 0.003 & 1.073 $\pm$ 0.003 & 0.036 $\pm$ 0.003 & 0.964 $\pm$ 0.003 & 0.084 $\pm$ 0.007 & 1.076 $\pm$ 0.015 & 0.110 $\pm$ 0.002 & 1.150 $\pm$ 0.003 & 0.182 $\pm$ 0.001 & 1.510 $\pm$ 0.003 \\
Gemma-2-2B & 0.002 $\pm$ 0.005 & 1.028 $\pm$ 0.014 & 0.003 $\pm$ 0.006 & 1.029 $\pm$ 0.016 & -0.007 $\pm$ 0.012 & 1.002 $\pm$ 0.029 & 0.003 $\pm$ 0.004 & 1.030 $\pm$ 0.012 & 0.015 $\pm$ 0.012 & 1.072 $\pm$ 0.040 & 0.185 $\pm$ 0.006 & 2.941 $\pm$ 0.154 \\
Llama-3.2-3B & -0.005 $\pm$ 0.003 & 0.878 $\pm$ 0.005 & 0.006 $\pm$ 0.001 & 0.914 $\pm$ 0.002 & -0.157 $\pm$ 0.002 & 0.664 $\pm$ 0.004 & 0.050 $\pm$ 0.001 & 0.979 $\pm$ 0.001 & 0.092 $\pm$ 0.000 & 1.079 $\pm$ 0.001 & 0.137 $\pm$ 0.012 & 1.249 $\pm$ 0.062 \\
\bottomrule
\end{tabular}%
}
}
\caption{Robustness to the target miscoverage level on DBPedia-14 at deployment $\mathrm{nf4}@T=0.5$ using LAC. Each panel reports coverage gap from target and average prediction-set size as mean $\pm$ standard deviation across runs.}
\label{tab:alpha_robustness_dbpedia_14}
\end{table*}
\begin{table*}[t]
\centering
\small
\textbf{Panel A: $\alpha=0.05$}
\vspace{0.3em}
\centerline{
\resizebox{1.4\textwidth}{!}{%
\begin{tabular}{lcccccccccccc}
\toprule
Model & \multicolumn{2}{c}{Matched} & \multicolumn{2}{c}{Mismatch mean} & \multicolumn{2}{c}{Mismatch worst} & \multicolumn{2}{c}{Pooled} & \multicolumn{2}{c}{MaxThr} & \multicolumn{2}{c}{Latent-robust} \\
\cmidrule(lr){2-3} \cmidrule(lr){4-5} \cmidrule(lr){6-7} \cmidrule(lr){8-9} \cmidrule(lr){10-11} \cmidrule(lr){12-13}
 & Gap & Size & Gap & Size & Gap & Size & Gap & Size & Gap & Size & Gap & Size \\
\midrule
Qwen2.5-1.5B & 0.001 $\pm$ 0.004 & 8.495 $\pm$ 0.039 & -0.309 $\pm$ 0.004 & 4.789 $\pm$ 0.010 & -0.453 $\pm$ 0.004 & 3.157 $\pm$ 0.007 & -0.137 $\pm$ 0.027 & 6.657 $\pm$ 0.273 & 0.002 $\pm$ 0.003 & 8.501 $\pm$ 0.011 & 0.030 $\pm$ 0.003 & 8.978 $\pm$ 0.043 \\
Qwen2.5-3B & 0.050 $\pm$ 0.000 & 9.465 $\pm$ 0.016 & -0.269 $\pm$ 0.003 & 4.591 $\pm$ 0.039 & -0.434 $\pm$ 0.003 & 2.573 $\pm$ 0.030 & -0.143 $\pm$ 0.002 & 5.660 $\pm$ 0.039 & 0.050 $\pm$ 0.000 & 9.465 $\pm$ 0.016 & 0.050 $\pm$ 0.000 & 9.465 $\pm$ 0.016 \\
Qwen2.5-7B & 0.050 $\pm$ 0.000 & 9.465 $\pm$ 0.016 & -0.210 $\pm$ 0.001 & 4.496 $\pm$ 0.013 & -0.346 $\pm$ 0.002 & 2.605 $\pm$ 0.008 & -0.136 $\pm$ 0.001 & 4.821 $\pm$ 0.027 & 0.050 $\pm$ 0.000 & 9.465 $\pm$ 0.016 & 0.050 $\pm$ 0.000 & 9.465 $\pm$ 0.016 \\
DeepSeek-R1-7B & 0.005 $\pm$ 0.008 & 8.440 $\pm$ 0.091 & -0.326 $\pm$ 0.007 & 4.397 $\pm$ 0.056 & -0.484 $\pm$ 0.006 & 2.699 $\pm$ 0.031 & -0.141 $\pm$ 0.021 & 6.327 $\pm$ 0.277 & 0.009 $\pm$ 0.007 & 8.500 $\pm$ 0.081 & 0.034 $\pm$ 0.004 & 8.982 $\pm$ 0.070 \\
Gemma-2-2B & -0.003 $\pm$ 0.004 & 8.215 $\pm$ 0.055 & -0.231 $\pm$ 0.005 & 5.362 $\pm$ 0.063 & -0.404 $\pm$ 0.001 & 3.380 $\pm$ 0.014 & -0.089 $\pm$ 0.015 & 6.948 $\pm$ 0.186 & 0.019 $\pm$ 0.004 & 8.610 $\pm$ 0.060 & 0.037 $\pm$ 0.002 & 8.984 $\pm$ 0.033 \\
Llama-3.2-3B & 0.002 $\pm$ 0.001 & 8.069 $\pm$ 0.012 & -0.236 $\pm$ 0.004 & 4.976 $\pm$ 0.044 & -0.378 $\pm$ 0.007 & 3.436 $\pm$ 0.063 & -0.141 $\pm$ 0.007 & 5.951 $\pm$ 0.119 & 0.002 $\pm$ 0.001 & 8.060 $\pm$ 0.013 & 0.027 $\pm$ 0.002 & 8.583 $\pm$ 0.060 \\
\bottomrule
\end{tabular}%
}
}
\vspace{0.8em}
\textbf{Panel B: $\alpha=0.1$}
\vspace{0.3em}
\centerline{
\resizebox{1.4\textwidth}{!}{%
\begin{tabular}{lcccccccccccc}
\toprule
Model & \multicolumn{2}{c}{Matched} & \multicolumn{2}{c}{Mismatch mean} & \multicolumn{2}{c}{Mismatch worst} & \multicolumn{2}{c}{Pooled} & \multicolumn{2}{c}{MaxThr} & \multicolumn{2}{c}{Latent-robust} \\
\cmidrule(lr){2-3} \cmidrule(lr){4-5} \cmidrule(lr){6-7} \cmidrule(lr){8-9} \cmidrule(lr){10-11} \cmidrule(lr){12-13}
 & Gap & Size & Gap & Size & Gap & Size & Gap & Size & Gap & Size & Gap & Size \\
\midrule
Qwen2.5-1.5B & 0.001 $\pm$ 0.009 & 7.765 $\pm$ 0.072 & -0.303 $\pm$ 0.005 & 4.293 $\pm$ 0.019 & -0.432 $\pm$ 0.003 & 2.876 $\pm$ 0.004 & -0.250 $\pm$ 0.025 & 4.829 $\pm$ 0.236 & 0.008 $\pm$ 0.007 & 7.860 $\pm$ 0.043 & 0.054 $\pm$ 0.003 & 8.533 $\pm$ 0.022 \\
Qwen2.5-3B & -0.001 $\pm$ 0.011 & 6.963 $\pm$ 0.139 & -0.272 $\pm$ 0.009 & 3.870 $\pm$ 0.175 & -0.420 $\pm$ 0.004 & 2.261 $\pm$ 0.033 & -0.230 $\pm$ 0.004 & 4.095 $\pm$ 0.063 & 0.067 $\pm$ 0.056 & 8.655 $\pm$ 1.406 & 0.100 $\pm$ 0.000 & 9.464 $\pm$ 0.017 \\
Qwen2.5-7B & -0.001 $\pm$ 0.007 & 6.167 $\pm$ 0.092 & -0.222 $\pm$ 0.002 & 3.454 $\pm$ 0.021 & -0.337 $\pm$ 0.002 & 2.286 $\pm$ 0.005 & -0.194 $\pm$ 0.009 & 3.586 $\pm$ 0.082 & -0.002 $\pm$ 0.008 & 6.148 $\pm$ 0.092 & 0.100 $\pm$ 0.001 & 9.461 $\pm$ 0.018 \\
DeepSeek-R1-7B & 0.004 $\pm$ 0.006 & 7.643 $\pm$ 0.050 & -0.317 $\pm$ 0.007 & 3.949 $\pm$ 0.056 & -0.457 $\pm$ 0.005 & 2.508 $\pm$ 0.032 & -0.249 $\pm$ 0.017 & 4.462 $\pm$ 0.165 & 0.008 $\pm$ 0.009 & 7.719 $\pm$ 0.099 & 0.058 $\pm$ 0.007 & 8.491 $\pm$ 0.095 \\
Gemma-2-2B & -0.001 $\pm$ 0.003 & 7.495 $\pm$ 0.062 & -0.237 $\pm$ 0.002 & 4.717 $\pm$ 0.028 & -0.393 $\pm$ 0.003 & 3.037 $\pm$ 0.020 & -0.173 $\pm$ 0.001 & 5.293 $\pm$ 0.020 & 0.035 $\pm$ 0.005 & 8.028 $\pm$ 0.074 & 0.072 $\pm$ 0.001 & 8.656 $\pm$ 0.033 \\
Llama-3.2-3B & 0.002 $\pm$ 0.003 & 7.239 $\pm$ 0.037 & -0.256 $\pm$ 0.004 & 4.213 $\pm$ 0.048 & -0.375 $\pm$ 0.006 & 2.997 $\pm$ 0.050 & -0.206 $\pm$ 0.009 & 4.629 $\pm$ 0.100 & 0.002 $\pm$ 0.003 & 7.232 $\pm$ 0.043 & 0.057 $\pm$ 0.004 & 8.152 $\pm$ 0.075 \\
\bottomrule
\end{tabular}%
}
}
\vspace{0.8em}
\textbf{Panel C: $\alpha=0.2$}
\vspace{0.3em}
\centerline{
\resizebox{1.4\textwidth}{!}{%
\begin{tabular}{lcccccccccccc}
\toprule
Model & \multicolumn{2}{c}{Matched} & \multicolumn{2}{c}{Mismatch mean} & \multicolumn{2}{c}{Mismatch worst} & \multicolumn{2}{c}{Pooled} & \multicolumn{2}{c}{MaxThr} & \multicolumn{2}{c}{Latent-robust} \\
\cmidrule(lr){2-3} \cmidrule(lr){4-5} \cmidrule(lr){6-7} \cmidrule(lr){8-9} \cmidrule(lr){10-11} \cmidrule(lr){12-13}
 & Gap & Size & Gap & Size & Gap & Size & Gap & Size & Gap & Size & Gap & Size \\
\midrule
Qwen2.5-1.5B & -0.004 $\pm$ 0.008 & 6.465 $\pm$ 0.062 & -0.259 $\pm$ 0.006 & 3.667 $\pm$ 0.031 & -0.364 $\pm$ 0.003 & 2.561 $\pm$ 0.005 & -0.276 $\pm$ 0.013 & 3.445 $\pm$ 0.078 & 0.003 $\pm$ 0.014 & 6.539 $\pm$ 0.108 & 0.108 $\pm$ 0.006 & 7.845 $\pm$ 0.043 \\
Qwen2.5-3B & -0.001 $\pm$ 0.008 & 5.560 $\pm$ 0.104 & -0.245 $\pm$ 0.005 & 3.015 $\pm$ 0.048 & -0.355 $\pm$ 0.004 & 1.945 $\pm$ 0.033 & -0.255 $\pm$ 0.007 & 2.855 $\pm$ 0.034 & 0.009 $\pm$ 0.005 & 5.692 $\pm$ 0.067 & 0.166 $\pm$ 0.058 & 8.630 $\pm$ 1.427 \\
Qwen2.5-7B & -0.006 $\pm$ 0.008 & 4.585 $\pm$ 0.085 & -0.195 $\pm$ 0.003 & 2.695 $\pm$ 0.019 & -0.282 $\pm$ 0.002 & 1.918 $\pm$ 0.005 & -0.202 $\pm$ 0.009 & 2.564 $\pm$ 0.051 & -0.004 $\pm$ 0.005 & 4.602 $\pm$ 0.057 & 0.091 $\pm$ 0.007 & 6.048 $\pm$ 0.082 \\
DeepSeek-R1-7B & 0.013 $\pm$ 0.004 & 6.361 $\pm$ 0.009 & -0.276 $\pm$ 0.005 & 3.328 $\pm$ 0.031 & -0.387 $\pm$ 0.004 & 2.261 $\pm$ 0.023 & -0.293 $\pm$ 0.006 & 3.069 $\pm$ 0.046 & 0.012 $\pm$ 0.004 & 6.355 $\pm$ 0.012 & 0.105 $\pm$ 0.013 & 7.658 $\pm$ 0.164 \\
Gemma-2-2B & 0.002 $\pm$ 0.004 & 6.179 $\pm$ 0.044 & -0.211 $\pm$ 0.002 & 3.906 $\pm$ 0.018 & -0.327 $\pm$ 0.002 & 2.691 $\pm$ 0.015 & -0.222 $\pm$ 0.006 & 3.707 $\pm$ 0.048 & 0.052 $\pm$ 0.004 & 6.828 $\pm$ 0.056 & 0.132 $\pm$ 0.008 & 7.997 $\pm$ 0.116 \\
Llama-3.2-3B & 0.010 $\pm$ 0.004 & 5.963 $\pm$ 0.050 & -0.235 $\pm$ 0.003 & 3.397 $\pm$ 0.045 & -0.320 $\pm$ 0.003 & 2.590 $\pm$ 0.021 & -0.238 $\pm$ 0.009 & 3.338 $\pm$ 0.082 & 0.009 $\pm$ 0.003 & 5.953 $\pm$ 0.045 & 0.102 $\pm$ 0.005 & 7.232 $\pm$ 0.082 \\
\bottomrule
\end{tabular}%
%
}
}
\caption{Robustness to the target miscoverage level on MMLU-Pro at deployment $\mathrm{nf4}@T=0.5$ using LAC. Each panel reports coverage gap from target and average prediction-set size as mean $\pm$ standard deviation across runs.}
\label{tab:alpha_robustness_mmlu_pro}
\end{table*}
\begin{table*}[t]
\centering
\small
\textbf{Panel A: $\alpha=0.05$}
\vspace{0.3em}
\centerline{
\resizebox{1.4\textwidth}{!}{%
%
%
}
}
\caption{Robustness to the target miscoverage level on MobileActions at deployment $\mathrm{nf4}@T=0.5$ using LAC. Each panel reports coverage gap from target and average prediction-set size as mean $\pm$ standard deviation across runs.}
\label{tab:alpha_robustness_mobile_actions}
\end{table*}

Tables~\ref{tab:alpha_robustness_commonsense_qa}--\ref{tab:alpha_robustness_mobile_actions} report the coverage gap from target, defined as empirical coverage minus $(1-\alpha)$, together with average prediction-set size, for $\alpha\in\{0.05,0.1,0.2,0.3\}$ in the deployment setting $\mathrm{nf4}@T=0.5$ using LAC. The main qualitative ranking of methods remains stable across target levels. Across datasets and models, \textsc{Matched} tends to remain near target, mismatch baselines can substantially under-cover, and the more conservative robust methods recover coverage more reliably. This indicates that the central phenomenon is not tied to a single nominal target level.

The size columns in Tables~\ref{tab:alpha_robustness_commonsense_qa}--\ref{tab:alpha_robustness_mobile_actions} are also informative: as the target level changes, the efficiency cost of robust recovery changes in tandem. This again reinforces the point that deployment robustness should be evaluated jointly in terms of reliability and prediction-set size, not by coverage alone.

\section{\textsc{Latent-Robust} Diagnostics}
\label{app:latent_diag}

\begin{table*}[t]
\centering
\small
\resizebox{\textwidth}{!}{%
\begin{tabular}{lccccc}
\toprule
Model & Coverage & Set size & $\Gamma$ coverage & Avg. $|\Gamma|$ \\
\midrule
Qwen2.5-1.5B & 0.962 $\pm$ 0.023 & 2.652 $\pm$ 0.402 & 0.896 $\pm$ 0.062 & 4.582 $\pm$ 0.605  \\
Qwen2.5-3B & 0.998 $\pm$ 0.002 & 4.957 $\pm$ 0.012 & 0.945 $\pm$ 0.046 & 5.120 $\pm$ 1.795 \\
Qwen2.5-7B & 0.954 $\pm$ 0.015 & 2.007 $\pm$ 0.198 & 0.946 $\pm$ 0.027 & 3.762 $\pm$ 0.635 \\
DeepSeek-R1-7B & 0.952 $\pm$ 0.013 & 3.404 $\pm$ 0.198 & 0.917 $\pm$ 0.018 & 5.082 $\pm$ 0.376 \\
Gemma-2-2B & 0.973 $\pm$ 0.008 & 3.573 $\pm$ 0.139 & 0.878 $\pm$ 0.038 & 4.728 $\pm$ 0.135 \\
Llama-3.2-3B & 0.967 $\pm$ 0.013 & 2.826 $\pm$ 0.223 & 0.903 $\pm$ 0.067 & 6.965 $\pm$ 0.382 \\
Phi-3-mini & 0.957 $\pm$ 0.016 & 2.533 $\pm$ 0.273 & 0.952 $\pm$ 0.027 & 5.237 $\pm$ 0.884 \\
Mistral-7B & 0.970 $\pm$ 0.007 & 3.429 $\pm$ 0.047 & 0.944 $\pm$ 0.043 & 4.892 $\pm$ 0.174 \\
\bottomrule
\end{tabular}%
}
\caption{Diagnostics for \textsc{Latent-Robust} on CommonsenseQA in the main-paper setting: LAC, $\alpha=0.1$, deployment $\mathrm{nf4}@T=0.5$, $\alpha_{\mathrm{stage1}}=\beta_{\mathrm{stage2}}=0.05$, and $d_1=0.5$. Each entry reports mean $\pm$ standard deviation across runs.}
\label{tab:latent_diag_commonsense_qa}
\end{table*}
\begin{table*}[t]
\centering
\small
\resizebox{\textwidth}{!}{%
\begin{tabular}{lccccc}
\toprule
Model & Coverage & Set size & $\Gamma$ coverage & Avg. $|\Gamma|$ \\
\midrule
Qwen2.5-1.5B & 0.947 $\pm$ 0.008 & 2.505 $\pm$ 0.319 & 0.967 $\pm$ 0.011 & 5.595 $\pm$ 0.595 \\
Qwen2.5-3B & 0.946 $\pm$ 0.002 & 2.342 $\pm$ 0.034 & 0.975 $\pm$ 0.004 & 6.142 $\pm$ 0.183 \\
Qwen2.5-7B & 0.966 $\pm$ 0.000 & 0.993 $\pm$ 0.000 & 0.955 $\pm$ 0.008 & 4.703 $\pm$ 0.548 \\
DeepSeek-R1-7B & 0.956 $\pm$ 0.003 & 3.026 $\pm$ 0.200 & 0.993 $\pm$ 0.002 & 3.032 $\pm$ 0.022 \\
Gemma-2-2B & 0.987 $\pm$ 0.005 & 10.787 $\pm$ 0.672 & 0.972 $\pm$ 0.001 & 3.042 $\pm$ 0.030 \\
Llama-3.2-3B & 0.947 $\pm$ 0.004 & 3.576 $\pm$ 0.266 & 0.984 $\pm$ 0.000 & 3.076 $\pm$ 0.028 \\
Phi-3-mini & 0.937 $\pm$ 0.004 & 1.060 $\pm$ 0.013 & 0.954 $\pm$ 0.009 & 11.356 $\pm$ 0.049 \\
Mistral-7B & 0.961 $\pm$ 0.004 & 2.088 $\pm$ 0.186 & 0.987 $\pm$ 0.007 & 2.989 $\pm$ 0.014 \\
\bottomrule
\end{tabular}%
}
\caption{Diagnostics for \textsc{Latent-Robust} on DBPedia-14 in the main-paper setting: LAC, $\alpha=0.1$, deployment $\mathrm{nf4}@T=0.5$, $\alpha_{\mathrm{stage1}}=\beta_{\mathrm{stage2}}=0.05$, and $d_1=0.5$. Each entry reports mean $\pm$ standard deviation across runs.}
\label{tab:latent_diag_dbpedia_14}
\end{table*}
\begin{table*}[t]
\centering
\small
\resizebox{\textwidth}{!}{%
\begin{tabular}{lccccc}
\toprule
Model & Coverage & Set size & $\Gamma$ coverage & Avg. $|\Gamma|$ \\
\midrule
Qwen2.5-1.5B & 0.954 $\pm$ 0.003 & 8.533 $\pm$ 0.022 & 0.963 $\pm$ 0.007 & 6.411 $\pm$ 0.306 \\
Qwen2.5-3B & 1.000 $\pm$ 0.000 & 9.464 $\pm$ 0.017 & 0.926 $\pm$ 0.013 & 6.281 $\pm$ 0.342 \\
Qwen2.5-7B & 1.000 $\pm$ 0.001 & 9.461 $\pm$ 0.018 & 0.942 $\pm$ 0.008 & 6.511 $\pm$ 0.041 \\
DeepSeek-R1-7B & 0.958 $\pm$ 0.007 & 8.491 $\pm$ 0.095 & 0.969 $\pm$ 0.003 & 5.068 $\pm$ 0.033 \\
Gemma-2-2B & 0.972 $\pm$ 0.001 & 8.656 $\pm$ 0.033 & 0.953 $\pm$ 0.026 & 8.149 $\pm$ 0.143 \\
Llama-3.2-3B & 0.957 $\pm$ 0.004 & 8.152 $\pm$ 0.075 & 0.950 $\pm$ 0.028 & 8.069 $\pm$ 0.190 \\
Phi-3-mini & 0.957 $\pm$ 0.004 & 7.425 $\pm$ 0.112 & 0.939 $\pm$ 0.041 & 6.379 $\pm$ 0.133 \\
Mistral-7B & 0.968 $\pm$ 0.001 & 8.425 $\pm$ 0.061 & 0.911 $\pm$ 0.031 & 5.966 $\pm$ 0.098 \\
\bottomrule
\end{tabular}%
}
\caption{Diagnostics for \textsc{Latent-Robust} on MMLU-Pro in the main-paper setting: LAC, $\alpha=0.1$, deployment $\mathrm{nf4}@T=0.5$, $\alpha_{\mathrm{stage1}}=\beta_{\mathrm{stage2}}=0.05$, and $d_1=0.5$. Each entry reports mean $\pm$ standard deviation across runs.}
\label{tab:latent_diag_mmlu_pro}
\end{table*}
\begin{table*}[t]
\centering
\small
\resizebox{\textwidth}{!}{%
\begin{tabular}{lccccc}
\toprule
Model & Coverage & Set size & $\Gamma$ coverage & Avg. $|\Gamma|$ \\
\midrule
Qwen2.5-1.5B & 0.940 $\pm$ 0.010 & 1.049 $\pm$ 0.029 & 0.947 $\pm$ 0.008 & 11.652 $\pm$ 0.050 \\
Qwen2.5-3B & 0.961 $\pm$ 0.001 & 1.265 $\pm$ 0.008 & 0.969 $\pm$ 0.004 & 6.037 $\pm$ 0.033 \\
Qwen2.5-7B & 0.955 $\pm$ 0.001 & 0.998 $\pm$ 0.000 & 0.975 $\pm$ 0.004 & 11.808 $\pm$ 0.019 \\
DeepSeek-R1-7B & 0.946 $\pm$ 0.006 & 1.277 $\pm$ 0.035 & 0.952 $\pm$ 0.011 & 8.161 $\pm$ 1.169 \\
Gemma-2-2B & 0.942 $\pm$ 0.003 & 1.674 $\pm$ 0.014 & 0.968 $\pm$ 0.004 & 3.115 $\pm$ 0.012 \\
Llama-3.2-3B & 0.943 $\pm$ 0.007 & 1.451 $\pm$ 0.084 & 0.978 $\pm$ 0.008 & 4.453 $\pm$ 0.624 \\
Phi-3-mini & 0.940 $\pm$ 0.007 & 1.086 $\pm$ 0.015 & 0.940 $\pm$ 0.020 & 11.527 $\pm$ 0.096 \\
Mistral-7B & 0.951 $\pm$ 0.005 & 1.166 $\pm$ 0.016 & 0.946 $\pm$ 0.031 & 5.259 $\pm$ 0.441 \\
\bottomrule
\end{tabular}%
}
\caption{Diagnostics for \textsc{Latent-Robust} on MobileActions in the main-paper setting: LAC, $\alpha=0.1$, deployment $\mathrm{nf4}@T=0.5$, $\alpha_{\mathrm{stage1}}=\beta_{\mathrm{stage2}}=0.05$, and $d_1=0.5$. Each entry reports mean $\pm$ standard deviation across runs.}
\label{tab:latent_diag_mobile_actions}
\end{table*}

Tables~\ref{tab:latent_diag_commonsense_qa}--\ref{tab:latent_diag_mobile_actions} clarify how \textsc{Latent-Robust} achieves coverage recovery. Besides final coverage and average prediction-set size, they report the empirical coverage of the latent configuration-set predictor $\widehat{\Gamma}(Z(X))$ and the average size of the predicted configuration set. Interpreting these diagnostics jointly is important. When the inferred configuration set is large, the method hedges more broadly across plausible latent deployment states and correspondingly pays a larger efficiency cost. Conversely, smaller inferred configuration sets allow a less conservative threshold.

The diagnostics in Tables~\ref{tab:latent_diag_commonsense_qa}--\ref{tab:latent_diag_mobile_actions} therefore show that the method's conservativeness is not arbitrary. It is mediated by how difficult it is to infer the latent deployment configuration from the observed predictive vector. Generally, the adaptive hedge is driven primarily by latent-configuration predictions rather than repeated reversion to the global worst-case threshold. The attached standard deviations also make clear that these diagnostic quantities are themselves reasonably stable across runs.

\clearpage
\newpage
\section*{NeurIPS Paper Checklist}

\begin{enumerate}

\item {\bf Claims}
    \item[] Question: Do the main claims made in the abstract and introduction accurately reflect the paper's contributions and scope?
    \item[] Answer: \answerYes{}.
    \item[] Justification: The abstract and introduction state the main contribution: identifying inference-time model shift as a source of non-exchangeability for conformal prediction in language models and evaluating its empirical consequences across models, datasets, and inference configurations. The scope is also clearly limited to finite-label benchmarks and post-training inference-configuration changes.

\item {\bf Limitations}
    \item[] Question: Does the paper discuss the limitations of the work performed by the authors?
    \item[] Answer: \answerYes{}.
    \item[] Justification: Section~6 discusses limitations, including that the work is primarily a characterization rather than a mechanistic theory, that the empirical study uses a finite deployment grid, and that it does not cover richer deployment variation such as continuous temperatures or temporal drift in serving policies.

\item {\bf Theory assumptions and proofs}
    \item[] Question: For each theoretical result, does the paper provide the full set of assumptions and a complete (and correct) proof?
    \item[] Answer: \answerYes{}.
    \item[] Justification: The main text states the assumptions and targets for inference-time model shift, MAXTHR, POOLED, and LATENT-ROBUST. Appendix~A provides the corresponding definitions, lemmas, full statements, and proofs.

\item {\bf Experimental result reproducibility}
    \item[] Question: Does the paper fully disclose all the information needed to reproduce the main experimental results of the paper to the extent that it affects the main claims and/or conclusions of the paper (regardless of whether the code and data are provided or not)?
    \item[] Answer: \answerYes{}.
    \item[] Justification: Section~5 and Appendix~C specify the datasets, models, inference-configuration grid, calibration/evaluation splits, score functions, robust baselines, quantization and temperature implementation, and the Stage-2 LATENT-ROBUST router construction. The supplementary material includes the source code, configuration files, and processed experimental artifacts sufficient to reproduce the main reported tables and figures.

\item {\bf Open access to data and code}
    \item[] Question: Does the paper provide open access to the data and code, with sufficient instructions to faithfully reproduce the main experimental results, as described in supplemental material?
    \item[] Answer: \answerYes{}.
    \item[] Justification: The supplementary material includes the source code and data package with scripts, configuration files, processed experimental artifacts, and instructions for reproducing the main coverage and prediction-set-size results. The package documents how to access the public datasets and model checkpoints used in the experiments.

\item {\bf Experimental setting/details}
    \item[] Question: Does the paper specify all the training and test details (e.g., data splits, hyperparameters, how they were chosen, type of optimizer) necessary to understand the results?
    \item[] Answer: \answerYes{}.
    \item[] Justification: Section~5 specifies the datasets, models, inference configurations, conformal score setup, data splits, compared methods, and metrics. Appendix~C further specifies split sizes, model versions, score functions, deployment grid, latent split settings, the logistic-regression router, quantization/temperature implementation, and compute resources.

\item {\bf Experiment statistical significance}
    \item[] Question: Does the paper report error bars suitably and correctly defined or other appropriate information about the statistical significance of the experiments?
    \item[] Answer: \answerYes{}.
    \item[] Justification: The experiments are averaged over three independent random splits, and the appendix tables report mean $\pm$ standard deviation across runs for coverage and average prediction-set size. The paper also states how aggregation over deployment axes is performed before summarizing across runs.

\item {\bf Experiments compute resources}
    \item[] Question: For each experiment, does the paper provide sufficient information on the computer resources (type of compute workers, memory, time of execution) needed to reproduce the experiments?
    \item[] Answer: \answerYes{}.
    \item[] Justification: Appendix~C reports the compute environment used for the experiments, including the CPU class, GPU type, GPU memory, and how CPU and GPU resources were used across inference, calibration, routing, and aggregation.

\item {\bf Code of ethics}
    \item[] Question: Does the research conducted in the paper conform, in every respect, with the NeurIPS Code of Ethics \url{https://neurips.cc/public/EthicsGuidelines}?
    \item[] Answer: \answerYes{}.
    \item[] Justification: The work evaluates uncertainty-quantification protocols for publicly available language models and benchmark datasets, does not introduce human-subject experiments, and does not release a high-risk model or scraped dataset. The study is intended to improve transparency and reliability reporting for deployed language-model systems.

\item {\bf Broader impacts}
    \item[] Question: Does the paper discuss both potential positive societal impacts and negative societal impacts of the work performed?
    \item[] Answer: \answerYes{}.
    \item[] Justification: The paper discusses implications for evaluation practice and reliability reporting for language-model systems. The main positive impact is improved transparency around conformal reliability claims; the main potential negative impact is that conformal guarantees could be overinterpreted if deployment assumptions and inference-configuration mismatch are not reported.

\item {\bf Safeguards}
    \item[] Question: Does the paper describe safeguards that have been put in place for responsible release of data or models that have a high risk for misuse (e.g., pre-trained language models, image generators, or scraped datasets)?
    \item[] Answer: \answerNA{}.
    \item[] Justification: The paper does not release a new pretrained model, image generator, scraped dataset, or other high-risk asset. The supplementary material contains the source code and processed experimental artifacts derived from existing public datasets and model outputs.

\item {\bf Licenses for existing assets}
    \item[] Question: Are the creators or original owners of assets (e.g., code, data, models), used in the paper, properly credited and are the license and terms of use explicitly mentioned and properly respected?
    \item[] Answer: \answerYes{}.
    \item[] Justification: The paper cites the original datasets and pretrained model checkpoints used in the experiments, and the supplementary material documents the corresponding sources, licenses, and terms of use where available. The released artifacts are derived from these public assets and are provided consistently with their usage terms.

\item {\bf New assets}
    \item[] Question: Are new assets introduced in the paper well documented and is the documentation provided alongside the assets?
    \item[] Answer: \answerYes{}.
    \item[] Justification: The supplementary material includes the code and processed experimental artifacts derived from public datasets and model outputs. The accompanying documentation describes the contents of the package, how the artifacts were generated, how to reproduce the reported results, and the intended use and limitations of the released materials.

\item {\bf Crowdsourcing and research with human subjects}
    \item[] Question: For crowdsourcing experiments and research with human subjects, does the paper include the full text of instructions given to participants and screenshots, if applicable, as well as details about compensation (if any)? 
    \item[] Answer: \answerNA{}.
    \item[] Justification: The paper does not involve crowdsourcing experiments or research with human subjects. All experiments are performed on existing benchmark datasets and pretrained language models.

\item {\bf Institutional review board (IRB) approvals or equivalent for research with human subjects}
    \item[] Question: Does the paper describe potential risks incurred by study participants, whether such risks were disclosed to the subjects, and whether Institutional Review Board (IRB) approvals (or an equivalent approval/review based on the requirements of your country or institution) were obtained?
    \item[] Answer: \answerNA{}.
    \item[] Justification: The paper does not involve crowdsourcing or research with human subjects, so IRB approval or equivalent human-subjects review is not applicable.

\item {\bf Declaration of LLM usage}
    \item[] Question: Does the paper describe the usage of LLMs if it is an important, original, or non-standard component of the core methods in this research? Note that if the LLM is used only for writing, editing, or formatting purposes and does \emph{not} impact the core methodology, scientific rigor, or originality of the research, declaration is not required.
    \item[] Answer: \answerYes{}.
    \item[] Justification: Language models are central experimental objects in this work. The paper describes the pretrained language models used, the inference configurations applied to them, and how the resulting predictive distributions are used to compute conformal scores; any LLM use for writing, editing, or formatting does not affect the scientific methodology or originality of the research.

\end{enumerate}

\end{document}